\newtheorem{theorem}{Theorem}
\theoremstyle{plain}
\newtheorem{proposition}{Proposition}
\newtheorem{lemma}{Lemma}
\newtheorem{corollary}{Corollary}
\theoremstyle{plain}
\newtheorem{assumption}{Assumption}
\theoremstyle{plain}
\newtheorem{definition}{Definition}
\theoremstyle{remark}
\theoremstyle{definition}
\DeclareMathOperator*{\argmin}{arg\,min}
\DeclareMathOperator{\sign}{sign}
\DeclarePairedDelimiter\abs{\lvert}{\rvert}%
\DeclarePairedDelimiter\norm{\lVert}{\rVert}%
\DeclareMathOperator{\erfc}{erfc}
\DeclareRobustCommand{\deq}[0]{\overset{\rm d}{=}}
\begin{document}

\title{Asymptotic Errors for Teacher-Student Convex Generalized Linear Models \\ (or : How to Prove Kabashima's Replica Formula)}

\author{Cedric Gerbelot\footnote{cedric.gerbelot@cims.nyu.edu},
Alia Abbara\footnote{alia.abbara@epfl.ch},
Florent Krzakala\footnote{florent.krzakala@epfl.ch}\\
\begin{small}
$^{*}$ Courant Institute of Mathematical Sciences, NYU, New York, USA \end{small}\\ 
$^{\dagger}$\begin{small}Laboratoire IBI-SV, EPFL, Switzerland\end{small} \\
\begin{small}$^{\ddagger}$IdePHICS Laboratory, EPFL, Switzerland\end{small}}
\date{}
\maketitle

\begin{abstract}
There has been a recent surge of interest in the study of asymptotic reconstruction performance in various cases of generalized linear estimation problems in the teacher-student setting, especially for the case of i.i.d standard normal matrices. Here, we go beyond these matrices, and prove an analytical formula for the reconstruction performance of convex generalized linear models with rotationally-invariant data matrices with arbitrary bounded spectrum, rigorously confirming, under suitable assumptions, a conjecture originally derived using the replica method from statistical physics. The proof is achieved by leveraging on message passing algorithms and the statistical properties of their iterates, allowing to characterize the asymptotic empirical distribution of the estimator. For sufficiently strongly convex problems, we show that the two-layer vector approximate message passing algorithm (2-MLVAMP) converges, where the convergence analysis is done by checking the stability of an equivalent dynamical system, which gives the result for such problems. We then show that, under a concentration assumption, an analytical continuation may be carried out to extend the result to convex (non-strongly) problems. We illustrate our claim with numerical examples on mainstream learning methods such as sparse logistic regression and linear support vector classifiers, showing excellent agreement between moderate size simulation and the asymptotic prediction.
\end{abstract}

\section{Introduction}

\subsection{Background and motivation}
In the modern era of statistics and machine learning, data analysis often requires solving high-dimensional estimation problems with a very large number of parameters. Developing algorithms for this task and understanding their limitations has become a major challenge. In this paper, we consider this question in the framework of supervised learning under the teacher-student scenario: (i) the data is synthetic and labels are generated by a \textquotedblleft teacher\textquotedblright rule and (ii) training is done with a convex Generalized Linear Model (GLM) . Such problems are ubiquitous in  machine learning, statistics, communications, and signal processing. 

The study of asymptotic (i.e. large-dimensional) reconstruction performance of generalized linear estimation in the teacher-student setting has been the subject of a significant body of work over the past few decades \cite{seung1992statistical,watkin1993statistical,engel2001statistical,bayati2011lasso,el2013robust,donoho2016high,zdeborova2016statistical}, 
and is currently witnessing a renewal of interest, especially for the case of identically and independently distributed (i.i.d.) standard normal data matrices, see e.g. \cite{sur2019likelihood,hastie2022surprises,mei2022generalization}. The aim of this paper is to provide a general analytical formula describing the reconstruction performance of such convex generalized linear models, but for a broader class of more adaptable matrices.

The problem is defined as follows: we aim at  reconstructing a given i.i.d. weight vector $\mathbf{x}_{0} \in \mathbb{R}^{N}$ from outputs $\mathbf{y} \in \mathbb{R}^{M}$
generated using a training set $(\mathbf{f}_{\mu})_{\mu=1,...,M}$ and the \textquotedblleft teacher\textquotedblright \thickspace rule:
\begin{equation}
  \label{teacher}
  \mathbf{y} = \varphi(\mathbf{F}\mathbf{x}_{0},\mathbf{\omega_{0}})
\end{equation}
where $\varphi$ is a proper, closed, continuous function and $\mathbf{\omega_{0}} \sim \mathcal{N}(0,\Delta_{0}\mbox{Id})$ is an i.i.d. noise vector. To go beyond the Gaussian i.i.d. case tackled in a majority of theoretical works, we shall allow matrices of arbitrary spectrum. We consider the data matrix 
$\mathbf{F} \in \mathbb{R}^{M \times N}$, obtained by concatenating the vectors of the training set, to be \emph{rotationally invariant}: its singular value decomposition reads
$\mathbf{F} = \mathbf{U}\mathbf{D}\mathbf{V}^{T}$ where $\mathbf{U} \in \mathbb{R}^{M \times M},\mathbf{V} \in \mathbb{R}^{N \times N}$ are uniformly
sampled from the orthogonal groups $O(M)$ and $O(N)$ respectively. $\mathbf{D} \in \mathbb{R}^{M \times N}$ contains the singular
values of $\mathbf{F}$ on its diagonal. Our analysis encompasses any singular value distribution with compact support. We place ourselves in the so-called high-dimensional regime, so that $M,N \to \infty$ while the ratio $\alpha \equiv M/N$ is kept finite. Our goal is to study the reconstruction performance of the generalized linear estimation method:
\begin{equation}
  \label{student}
  \mathbf{\hat{x}} \in \argmin_{\mathbf{x} \in \mathbb{R}^{N}} \left\lbrace g(\mathbf{F}\mathbf{x},\mathbf{y})+f(\mathbf{x}) \right\rbrace
\end{equation}
where $g$ and $f$ are proper, closed, convex and separable functions. This type of procedure is an instance of empirical risk minimizationa and is one of the building blocks of modern machine learning. It encompasses several
mainstream methods such as logistic regression, the LASSO or linear support vector machines. More precisely, the quantities of interest representing the reconstruction performance are the mean squared error $E = \mathbb{E}\left[\frac{1}{N}\norm{\mathbf{x}_{0}-\hat{\mathbf{x}}}_{2}^{2}\right]$ for regression problems, and the reconstruction angle $\theta_{x} = \arccos{\frac{\mathbf{x}_{0}^{T}\hat{\mathbf{x}}}{\norm{\mathbf{x}_{0}}_{2}\norm{\hat{\mathbf{x}}}_{2}}}$ for classification problems.
\subsection{Main contributions} 
\begin{itemize}[noitemsep,topsep=0pt,leftmargin=*]
\item We provide a set of equations characterizing the asymptotic statistical properties of the estimator defined by problem \eqref{student} with data generated by \eqref{teacher} in the asymptotic setup, for separable, convex losses and penalties (including for instance Logistic, Hinge, LASSO and Elastic net), for rotationally invariant sequences of matrices $\mathbf{F}$. For sufficiently strongly convex problems (in the sense of Lemma \ref{conv_lemma}), our assumptions are classical with respect to earlier work. To extend the result to convex problems however, we require a concentration assumption that we discuss further in section \ref{sec:main_res}.
  \item By doing so, we give, under the aforementioned set of assumptions, a mathematically rigorous proof, of a replica formula obtained heuristically through statistical physics for this problem, notably by Y. Kabashima\cite{kabashima2008inference}. This is a significant step beyond the setting of most rigorous work on replica results, which assume matrices to be i.i.d. random Gaussian ones.
\item Our proof method builds on a detailed mapping between alternating directions descent methods \cite{boyd2011distributed} from convex optimization and a set of algorithms called multi-layer vector approximate message-passing algorithms \cite{manoel2017multi,schniter2016vector}. This enables us to use convergence results from convex analysis and dynamical systems to study the trajectories of vector approximate message-passing algorithms.
\item Beyond the high-dimensional result on the estimator defined by the GLM, our convergence analysis provides a generic condition for the convergence of 2-layer MLVAMP, regardless of the randomness of the design matrix and of the dimensions of the problem, for sufficiently strongly convex problems.
\end{itemize}
\subsection{Related work} The simplest case of the present question, when both $f$ and $g$ are quadratic functions, can be mapped to a random matrix theory problem and solved rigorously, as in e.g. \cite{hastie2022surprises}. Handling non-linearity is, however, more challenging. A long history of research tackles this difficulty in the high-dimensional limit, especially in the statistical physics literature where this setup is common. The usual analytical approach in statistical physics of learning \cite{seung1992statistical,watkin1993statistical,engel2001statistical} is a heuristic, non-rigorous but very adaptable technique called the replica method \cite{mezard1987spin,mezard2009information}. In particular, it has been applied on many variations of the present problem, and laid the foundation of a large number of deep, non-trivial results in machine learning, signal processing and statistics, e.g.  \cite{gardner1989three,Opper1990,opper1996statistical,Kinzel2003,kabashima2009typical,ganguli2010statistical,advani2016equivalence,mitra2019compressed,emami2020generalization}. Among them, a generic formula for the present problem has been conjectured by Y. Kabashima, providing  sharp asymptotics for the reconstruction performance of the signal ${\bf x_0}$~\cite{kabashima2008inference}.

Proving the validity of a replica prediction is a difficult task altogether. There has been recent progress in the particular case of Gaussian data, where the matrix $\mathbf{F}$ is made of i.i.d. standard Gaussian coefficients. In this case, the asymptotic performance of the LASSO was rigorously derived in \cite{bayati2011dynamics}, and the existence of the logistic estimator discussed in \cite{sur2019likelihood}. A set of papers managed to extend this study to a large set of convex losses $g$, using the so-called Gordon comparison theorem \cite{thrampoulidis2018precise}. We broaden those results here by proving the Kabashima formula, valid for the set of rotationally invariant matrices introduced above and any convex, separable loss $g$ and sufficiently strongly convex regularizer $f$ under classical conditions. We extend this result to any convex, separable $g$ and $f$ under stronger assumptions.

Our proof strategy is based  on the use of approximate-message-passing~\cite{donoho2009message,rangan2011generalized}, as pioneered in \cite{bayati2011lasso}, and is similar to a recent work \cite{gerbelot2020asymptotic} on a simpler setting. This family of algorithms is a statistical physics-inspired variant of belief propagation \cite{mezard1989space,kabashima2003cdma,kabashima2004bp} where local beliefs are approximated by Gaussian distributions. A key feature of these algorithms is the existence of the state evolution equations, a scalar equivalent model which allows to track the asymptotic statistical properties of the iterates at every time step. A series of groundbreaking papers initiated with \cite{bayati2011dynamics} proved that these equations are exact in the large system limit, and extended the method to treat nonlinear problems \cite{rangan2011generalized} and handle rotationally invariant matrices \cite{rangan2019vector,takahashi2022macroscopic}. We shall use a variant of these algorithms called  multi-layer vector approximate message-passing (MLVAMP)  \cite{schniter2016vector,fletcher2018inference}.
The key technical point in our approach is an analysis of the convergence of MLVAMP. This is achieved by phrasing the algorithm as a dynamical system, and then determining sufficient conditions for convergence with linear rate. Our analysis guarantees converging trajectories above a threshold value of the strong convexity parameter of the problem, which is sufficient to complete the proof in that region. We use an analytic continuation to extend the result to convex problems, at the cost of an additional condition discussed after stating our main set of assumption.

\section{Background on MLVAMP}
\label{back_MLVAMP}
In this section, we present background on the multilayer vector approximate message-passing algorithm developed in \cite{fletcher2018inference}. In doing so, we will introduce the key quantities involved in our main theorem. MLVAMP was initially designed as a probabilistic inference algorithm in multilayer architectures. Here, we only focus on the 2-layer version for inference in GLMs, and use the notations of \cite{takahashi2022macroscopic}. The algorithm can be derived in several ways, notably from expectation-consistent variational inference frameworks such as expectation propagation \cite{minka2001family}, where the target posterior distribution is approximated by a simpler one with moment matching constraints. In the maximum a posteriori setting (MAP), the frequentist optimization framework is recovered, with additional parameter prescriptions due to the probabilistic models, as we will see below. The derivation of the algorithm is, however, not our point of interest. We focus on providing a self-contained interpretation from the convex optimization point of view, in particular in terms of variable splitting. 
\subsection{Link with variable splitting and proximal descent}
A common procedure to tackle nonlinear optimization problems involving several functions is variable splitting, so that each non-linearity may be treated independently. Augmenting the Lagrangian with a square penalty on the slack variable equality constraint leads to the family of alternating direction methods of multipliers (ADMM) \cite{boyd2011distributed}, where the objective is iteratively minimized in the direction of each initial variable and slack variable. The descent steps then take the form of proximal operators of the non-linearities. For example, on problem \eqref{student}, adding a slack variable $\mathbf{z}=\mathbf{F}\mathbf{x}$ would lead to the augmented Lagrangian:
\begin{align}
\label{ex_split}
  g(\mathbf{z},\mathbf{y})+f(\mathbf{x}) +\theta^{T}(\mathbf{z}-\mathbf{F}\mathbf{x})+\frac{\alpha}{2}\norm{\mathbf{z}-\mathbf{F}\mathbf{x}}_{2}^{2} 
\end{align}
where $\alpha>0$ is a free parameter that can enforce strong convexity of the objective if large enough and $\theta$ is a Lagrange multiplier. Updating $\mathbf{x}$ from an update on $\mathbf{z}$ amounts to a linear estimation problem, which can be solved by least squares. This is implemented, for example, in linearized ADMM \cite{boyd2011distributed}, where the proximal descent steps are coupled to least-square ones. \\
MLVAMP solves problem \eqref{student} by introducing the same splitting as in \eqref{ex_split} with an additional trivial splitting for each variable: $\mathbf{x}_{1}, \mathbf{x}_{2}, \mathbf{z}_{1}, \mathbf{z}_{2}$
    such that $\mathbf{x}_{1}=\mathbf{x}_{2}, \thickspace \mathbf{z}_{1} = \mathbf{F}\mathbf{x}_{1}, \thickspace \mathbf{z}_{2}=\mathbf{F}\mathbf{x}_{2}$.
In the convex optimization framework, parameters like gradient step sizes, or proximal parameters need to be chosen. In the expectation propagation framework, they are prescribed by expectation-consistency constraints, which leads to additional steps in the algorithm. MLVAMP thus consists in four descent steps on $\mathbf{x}_{1}, \mathbf{x}_{2}, \mathbf{z}_{1}, \mathbf{z}_{2}$, and the updates on the parameters of the functions corresponding to those descent steps. This is shown in the MLVAMP iterations (see \eqref{GVAMP} further), where $\mathbf{x}_{1}, \mathbf{z}_{1}$ are updated using the proximal operators of the loss and regularizer, while $\mathbf{z}_{2}$ and $\mathbf{x}_{2}$ are obtained through least-squares. As mentioned above, the parameters of proximal operators (or denoisers in the signal processing literature) and least-squares are set by probabilistic inference rules (here moment-matching of marginal distributions). It is shown in \cite{fletcher2016expectation} that, in the MAP setting, these updates amount to adapting the parameters to the local curvature of the cost function.
\subsection{2-layer MLVAMP and its state evolution}
We lay out the full iterations of the MLVAMP algorithm from \cite{fletcher2018inference} applied to a 2-layer network in Algorithm \ref{GVAMP}. For a given operator $T : \mathcal{X} \to \mathbb{R}^d$ where $d$ is $M$ or $N$ in our setting, the brackets $\langle T(\mathbf{x})\rangle = \frac{1}{d}\sum_{i=1}^{d} T(\mathbf{x})_{i}$ denote element-wise averaging operations. For a given matrix $\mathbf{M} \in \mathbb{R}^{d\times d}$, the brackets amount to $\langle\mathbf{M}\rangle = \frac{1}{d}\mbox{\textbf{Tr}}(\mathbf{M})$. For a given function, for example $g_{1x}$, we use the shorthand $g_{1x}(...)$ when the arguments have been made clear in a line above and are left unchanged.
\begin{algorithm}
\begin{algorithmic}
\caption{2-layer MLVAMP}
\label{GVAMP}
    \REQUIRE Initialize $\mathbf{h}_{1x}^{(0)}, \mathbf{h}_{2z}^{(0)},\hat{Q}_{1x}^{(0)},\hat{Q}_{2z}^{(0)}$, number of iterations T. 
    \FOR{t=0,1...,T}
    \vspace{0.2cm}
    \STATE \mbox{// Denoising $\mathbf{x}$} 
    \vspace{0.2cm}
    \STATE $\mathbf{\hat{x}}_1^{(t)} = g_{1x} (\mathbf{h}_{1x}^{(t)},\hat{Q}_{1x}^{(t)})$ 
    \STATE $\chi_{1x}^{(t)} = \left\langle \partial_{\mathbf{h}_{1x}^{(t)}} g_{1x} (...)\right\rangle/\hat{Q}_{1x}^{(t)}$ 
    \STATE $\hat{Q}_{2x}^{(t)} =1/\chi_{1x}^{(t)} - \hat{Q}_{1x}^{(t)}$
    \STATE $\mathbf{h}_{2x}^{(t)}=(\mathbf{\hat{x}}_1^{(t)}/\chi_{1x}^{(t)}  - \hat{Q}_{1x}^{(t)} \mathbf{h}_{1x}^{(t)}) / \hat{Q}_{2x}^{(t)}$ 
    \vspace{0.2cm}
    \STATE \mbox{// LMMSE estimation of $\mathbf{z}$}
    \vspace{0.2cm}
    \STATE $\mathbf{\hat{z}}_{2}^{(t)} = g_{2z}(\mathbf{h}_{2x}^{(t)}, \mathbf{h}_{2z}^{(t)}, \hat{Q}_{2x}^{(t)}, \hat{Q}_{2z}^{(t)})$  
    \STATE $\chi_{2z}^{(t)} = \left\langle \partial_{\mathbf{h}_{2z}^{(t)}} g_{2z}(...) \right\rangle/ \hat{Q}_{2z}^{(t)}$ 
    \STATE $\hat{Q}_{1z}^{(t)} = 1/\chi_{2z}^{(t)} -\hat{Q}_{2z}^{(t)}$ 
    \STATE $\mathbf{h}_{1z}^{(t)}=(\mathbf{\hat{z}_{2}^{(t)}}/\chi_{2z}^{(t)}  - \hat{Q}_{2z}^{(t)} \mathbf{h}_{2z}^{(t)}) / \hat{Q}_{1z}^{(t)}$ 
    \vspace{0.2cm}
    \STATE \mbox{// Denoising $\mathbf{z}$}
    \vspace{0.2cm}
    \STATE $\mathbf{\hat{z}}_1^{(t)} = g_{1z} (\mathbf{h}_{1z}^{(t)},\hat{Q}_{1z}^{(t)}),$
    \STATE $\thickspace \chi_{1z}^{(t)} =  \left\langle \partial_{\mathbf{h}_{1z}^{(t)}} g_{1z} (...)\right\rangle/\hat{Q}_{1z}^{(t)}$ 
    \STATE $\hat{Q}_{2z}^{(t+1)} =1/\chi_{1z}^{(t)} - \hat{Q}_{1z}^{(t)}$
    \STATE $\mathbf{h}_{2z}^{(t+1)}=(\mathbf{\hat{z}}_1^{(t)}/\chi_{1z}^{(t)}  - \hat{Q}_{1z}^{(t)} \mathbf{h}_{1z}^{(t)}) / \hat{Q}_{2z}^{(t+1)}$ 
    \vspace{0.2cm}
    \STATE \mbox{// LMMSE estimation of $\mathbf{x}$} 
    \vspace{0.2cm}
    \STATE $\mathbf{\hat{x}}_{2}^{(t+1)} = g_{2x}(\mathbf{h}_{2x}^{(t)}, \mathbf{h}_{2z}^{(t+1)}, \hat{Q}_{2x}^{(t)}, \hat{Q}_{2z}^{(t+1)})$ \STATE $\chi_{2x}^{(t+1)} = \left\langle \partial_{\mathbf{h}_{2x}^{(t)}} g_{2x}(...) \right\rangle/\hat{Q}_{2x}^{(t)}$ 
    \STATE $\hat{Q}_{1x}^{(t+1)} = 1/\chi_{2x}^{(t+1)} -\hat{Q}_{2x}^{(t)}$ 
    \STATE $\mathbf{h}_{1x}^{(t+1)}=(\mathbf{\hat{x}}_{2}^{(t+1)}/\chi_{2x}^{(t+1)}  - \hat{Q}_{2x}^{(t)} \mathbf{h}_{2x}^{(t)}) / \hat{Q}_{1x}^{(t+1)}$
    \ENDFOR
    \vspace{0.2cm}
    \RETURN $\hat{\mathbf{x}}_{1},\hat{\mathbf{x}}_{2}$
\end{algorithmic}
\end{algorithm}
The denoising functions $g_{1x}$ and $g_{1z}$ can be written as proximal operators in the MAP setting:

\begin{align}
    g_{1x}(\mathbf{h}_{1x}^{(t)}, \hat{Q}_{1x}^{(t)})&= \argmin_{\mathbf{x}\in \mathbb{R}^N} \left\lbrace f(\mathbf{x}) +  \dfrac{\hat{Q}_{1x}^{(t)}}{2} \norm{\mathbf{x} - \mathbf{h}_{1x}^{(t)}}_2^2 \right\rbrace \\ &=\mbox{Prox}_{f/\hat{Q}_{1x}^{(t)}}(\mathbf{h}_{1x}^{(t)})  \label{prox-f}
\end{align}
and 
\begin{align}
    g_{1z}(\mathbf{h}_{1z}^{(t)}, \hat{Q}_{1z}^{(t)})&= \argmin_{\mathbf{z}\in \mathbb{R}^M} \left\lbrace g(\mathbf{y}, \mathbf{z}) +  \dfrac{\hat{Q}_{1z}^{(t)}}{2} \norm{\mathbf{z} - \mathbf{h}_{1z}^{(t)}}_2^2 \right\rbrace\\ &=\mbox{Prox}_{g(.,\mathbf{y})/\hat{Q}_{1z}^{(t)}}(\mathbf{h}_{1z}^{(t)}). \label{prox-g}
\end{align}
The LMMSE denoisers $g_{2z}$ and $g_{2x}$ in the MAP setting read (see \cite{schniter2016vector}):
\begin{align}
    &g_{2z}(...) = \mathbf{F}\mathbf{M}_{1}^{(t)}(\hat{Q}_{2x}^{(t)} \mathbf{h}_{2x}^{(t)}+ \hat{Q}_{2z}^{(t)} \mathbf{F}^T \mathbf{h}_{2z}^{(t)}) \label{g1+} \\
    &g_{2x}(...) = \mathbf{M}_{2}^{(t)}(\hat{Q}_{2x}^{(t)} \mathbf{h}_{2x}^{(t)}+ \hat{Q}_{2z}^{(t+1)} \mathbf{F}^T \mathbf{h}_{2z}^{(t+1)}). \label{g1-}
\end{align}
where we defined the matrices $\mathbf{M}_{1}^{(t)} = (\hat{Q}_{2z}^{(t)} \mathbf{F}^T \mathbf{F}+\hat{Q}_{2x}^{(t)} \mbox{Id})^{-1}$, and $\mathbf{M}_{2}^{(t)} = (\hat{Q}_{2z}^{(t+1)} \mathbf{F}^T \mathbf{F}+\hat{Q}_{2x}^{(t)} \mbox{Id})^{-1}$.
As mentioned in the previous section, MLVAMP returns at each iteration two sets of estimators $(\mathbf{\hat{x}}_1^{(t)}, \mathbf{\hat{x}}_2^{(t)})$ and $(\mathbf{\hat{z}}_1^{(t)}, \mathbf{\hat{z}}_2^{(t)})$ which respectively aim at reconstructing the minimizer $\mathbf{\hat{x}}$ and $\mathbf{\hat{z}}=\mathbf{F}\mathbf{\hat{x}}$. At the fixed point, we have $\mathbf{\hat{x}}_1^{(t)}=\mathbf{\hat{x}}_2^{(t)}$ and $\mathbf{\hat{z}}_1^{(t)}= \mathbf{\hat{z}}_2^{(t)}$, as proven in \cite{pandit2020inference}. The intermediate vectors $\mathbf{h}_{1x}^{(t)}$, $\mathbf{h}_{2x}^{(t)}$, $\mathbf{h}_{1z}^{(t)}$ and $\mathbf{h}_{2z}^{(t)}$ have the key feature that they behave asymptotically as Gaussian centered around $\mathbf{x}_{0}$ and $\mathbf{z}_{0}=\mathbf{F}\mathbf{x}_{0}$, under the set of assumptions given in appendix \ref{SE_assumptions}. More precisely, at each iteration, they converge empirically with second order moment (PL2) towards Gaussian variables: 
\begin{subequations}
\label{SE-assumption_main}
\begin{align}
   \lim_{M,N \to \infty} \hat{Q}_{1x}^{(t)}\mathbf{h}_{1x}^{(t)} - \hat{m}_{1x}^{(t)} \mathbf{x_0} &\stackrel{PL(2)} = \sqrt{\hat{\chi}_{1x}^{(t)}} \xi_{1x}^{(t)}\\
   \lim_{M,N \to \infty} \mathbf{V}^T (\hat{Q}_{2x}^{(t)} \mathbf{h}_{2x}^{(t)} - \hat{m}_{2x}^{(t)} \mathbf{x_0}) &\stackrel{PL(2)} = \sqrt{\hat{\chi}_{2x}^{(t)}} \xi_{2x}^{(t)}
    \\
   \lim_{M,N \to \infty}  \mathbf{U}^T (\hat{Q}_{1z}^{(t)}\mathbf{h}_{1z}^{(t)} - \hat{m}_{1z}^{(t)} \mathbf{z_0}) &\stackrel{PL(2)} = \sqrt{\hat{\chi}_{1z}^{(t)}} \xi_{1z}^{(t)} \\
 \lim_{M,N \to \infty}  \hat{Q}_{2z}^{(t)}\mathbf{h}_{2z}^{(t)} - \hat{m}_{2z}^{(t)} \mathbf{z_0} &\stackrel{PL(2)} = \sqrt{\hat{\chi}_{2z}^{(t)}} \xi_{2z}^{(t)}
\end{align}
\end{subequations}
where $\xi_{1x}^{(t)}, \xi_{2x}^{(t)}, \xi_{1z}^{(t)}, \xi_{2z}^{(t)}$ are i.i.d standard normal random variables independent of all other quantities. The definition of PL(2) convergence is reminded in Appendix \ref{appendix:analysis_framework}, and we use the notation $\stackrel{PL(2)}=$ following \cite{rangan2019vector,fletcher2018inference}. We can roughly say that the $\hat{Q}, \hat{m}, \hat{\chi}$'s parameters characterize the distributions of the $\mathbf{h}$'s. Using the representation \eqref{SE-assumption_main} in the iterations of MLVAMP results in a scalar recursion that tracks the evolution of the parameters of the aforementioned Gaussian distributions. This recursion provides the so-called state evolution equations.
The existence of state evolution equations is the reason why we use 2-layer MLVAMP in our proof. Indeed, they allow the construction of iterate paths that lead to the solution of problem \eqref{teacher}, while knowing their statistical properties.
\section{Main result}
\label{sec:main_res}
Our main result characterizes the asymptotic empirical distribution of the estimator $\mathbf{\hat{x}}$ defined in~\eqref{student} with data generated by \eqref{teacher}, and of $\mathbf{\hat{z}} = \mathbf{F \hat{x}}$. We start by stating the necessary assumptions.
\begin{assumption} 
\label{main_assum}
\begin{enumerate}[label=(\alph*)]
    \item [~]
    \item the functions $f$ and $g$ are proper, closed, convex and separable functions.
    \item the cost function $g(\mathbf{F}.,\mathbf{y})+f(.)$ is coercive, i.e. $\lim_{\norm{\mathbf{x}}\to \infty}g(\mathbf{F}\mathbf{x},\mathbf{y})+f(\mathbf{x}) = +\infty$.
    \item there exists a finite constant $B_{1}$ such that $\frac{1}{N}\norm{\hat{\mathbf{x}}}^{2}_{2}\leqslant B_{1}$ almost surely as $N \to \infty$. We also assume that, for any pseudo-Lipschitz function of order $2$, if there exists a finite constant $B_{2}$ such that $\forall N \in \mathbb{N}, \frac{1}{N}\sum_{i=1}^{N}\phi(\hat{x}_{i}) \leqslant B_{2}$, then the limit $\lim_{N \to \infty}\frac{1}{N}\sum_{i=1}^{N}\phi(\hat{x}_{i})$ exists.
    \item for any $\mathbf{x} \in \mbox{dom}(f)$ and any $\mathbf{x}' \in \partial f(\mathbf{x})$, there exists a constant $C$ such that $\norm{\mathbf{x}'}_{2}\leqslant C(1+\norm{\mathbf{x}}_{2})$. The same holds for $g$ on its domain. 
    \item there exist sequences of real analytic functions $g_{\epsilon}, f_{\epsilon}$ such that for any $x$, $\lim_{\epsilon \to 0}g_{\epsilon}(x) = g(x)$, $\lim_{\epsilon \to 0}f_{\epsilon}(x) = f(x)$, and for all $\epsilon >0$, 
    $g''_{\epsilon}$ and $f''_{\epsilon}$ belong to the Schwartz space.
    \item the empirical distributions of the underlying truth $\mathbf{x}_{0}$, eigenvalues of $\mathbf{F}^T \mathbf{F}$, and noise vector $w_{0}$, respectively converge empirically with second order moments, as defined in appendix \ref{appendix:analysis_framework}, to independent scalar random variables $x_{0},w_{0},\lambda$ with distributions $p_{x_{0}}$, $p_{\lambda}$, $p_{w_{0}}$. We assume that the distribution $p_{\lambda}$ is not all-zero and has compact support. 
    \item the design matrix $\mathbf{F} = \mathbf{U}\mathbf{D}\mathbf{V}^{\top} \in \mathbb{R}^{M \times N}$ is rotationally invariant, as defined in the introduction, where the elements of the Haar distributed matrices $\mathbf{U},\mathbf{V}$ are independent of the elements of the ground truth vector $\mathbf{x}_{0}$, noise $\boldsymbol{\omega}_{0}$ and elements of $\mathbf{D}$.
    \item the solution to the set of fixed point equations \eqref{thm1-equations} exists and is unique, for any convex $g$ and $f$ verifying the assumptions above
    \item finally assume that $M,N \to \infty$ with fixed ratio $\alpha = M/N$.
\end{enumerate}
\end{assumption}
The coercivity assumption (b) ensures that the minimization problem Eq.\eqref{student} is feasible and that the estimator exists. Most machine learning cost functions verify this assumption, including any convex loss which is bounded below and regularized with a coercive term such as the $\ell_{1}$ or $\ell_{2}$ norm, see \cite{bauschke2011convex} Corollary 11.15. Non-coercive problems include unregularized logistic regression and unregularized, underspecified least-squares for example. The scaling assumptions (d) are required for the state evolution equations of the MLVAMP iteration corresponding to the optimization problem Eq.\eqref{student} to hold, as discussed in appendix \ref{SE_assumptions}.
Such conditions are often encountered in high dimensional analysis of M-estimators, see, e.g. \cite{thrampoulidis2018precise}, and are verified by the setups proposed in the experiments section. The convergence of averaged sumes of PL2 observables in assumption (c) and the analytic approximation in assumption (e) are required for our analytic continuation 
to hold, and we show that any combination of hinge, logistic and square loss with $\ell_{1}$ or $\ell_{2}$ regularization verifies the latter in Appendix \ref{analytic_continuation}, subsection \ref{subsec:app_approx_fast}. We show in Lemma \ref{main_lemma} that, for sufficiently strongly convex problems, these two assumptions are not required. The concentration assumption we require has been proven to hold for a number of convex problems with Gaussian random design regardless of the strong convexity of the problem (see the related work section), and we believe rotationally invariant matrices do not change this behaviour. However, since we are unable to prove it below the threshold value of the strong convexity parameter, it remains an assumption. Additional detail on the notion of empirical convergence is given in appendix \ref{appendix:analysis_framework}. This analysis framework is mainly due to \cite{bayati2011dynamics} and is related to convergence in Wasserstein metric as pointed out in \cite{emami2020generalization}.
We are now ready to state our main theorem.

\begin{theorem}[Fixed point equations]
  \label{main_th}
  Under assumption \ref{main_assum}, consider the ground-truth $\mathbf{x_0}$ and let $\mathbf{z_0} = \mathbf{F x_0}$, $\rho_x \equiv\norm{\mathbf{x_0}}_2^2/N$ and $\rho_z \equiv\norm{\mathbf{z_0}}_2^2/M$. 
  For a strictly convex instance of problem \eqref{student}, let $\hat{\mathbf{x}}$ be its unique solution. For a convex (non-strictly) instance of problem \eqref{student}, let $\hat{\mathbf{x}}$ be its unique least $\ell_{2}$ norm solution. Then let $\hat{\mathbf{z}} = \mathbf{F}\hat{\mathbf{x}}$. Then, for any real analytic, pseudo-Lipschitz function of order 2 $\phi$ whose second derivative belongs to the Schwartz space, the following holds
:
\begin{align}
    &\lim_{N \to \infty} \frac{1}{N}\sum_{i=1}^{N}\phi(x_{0,i},\hat{x}_{i}) \stackrel{a.s.} = \mathbb{E}[\phi(x_{0},\mbox{Prox}_{f/\hat{Q}_{1x}^{(*)}}(H_{x}))]\\ 
    &\lim_{M \to \infty}\frac{1}{M}\sum_{i=1}^{M}\phi(z_{0,i},\hat{z}_{i}) \stackrel{a.s.} = \mathbb{E}[\phi(z_{0},\mbox{Prox}_{f/\hat{Q}_{1z}^{(*)}}(H_{z}))]
\end{align}
where $H_{x}=\frac{\hat{m}_{1x}^*x_{0}+\sqrt{\hat{\chi}_{1x}^*}\xi_{1x}}{\hat{Q}^{*}_{1x}}$, $H_{z}=\frac{\hat{m}_{1z}^*z_{0}+\sqrt{\hat{\chi}_{1z}^*}\xi_{1z}}{\hat{Q}^{*}_{1z}}$ and expectations are taken with respect to the random variables $x_{0} \sim p_{x_0}$, $z_{0} \sim \mathcal{N}(0,\sqrt{\rho_z})$,  $\xi_{1x}, \xi_{1z} \sim \mathcal{N}(0,1)$. 
The parameters $\hat{Q}_{1x}^*, \hat{Q}_{1z}^*, \hat{m}_{1x}^*, \hat{m}_{1z}^*, \hat{\chi}_{1x}^*, \hat{\chi}_{1z}^*$ are determined by the fixed point of the system:
\begin{subequations}
\label{thm1-equations}
\begin{align}
 \hat{Q}_{2x}&= \hat{Q}_{1x}(\mathbb{E}\left[\eta'_{f/\hat{Q}_{1x}}\left(H_{x}\right)\right]^{-1}-1)\\
\hat{Q}_{2z} &= \hat{Q}_{1z}(\mathbb{E}\left[\eta'_{g(.,y)/\hat{Q}_{1z}}\left(H_{z}\right)\right]^{-1}-1) \\
\hat{m}_{2x}&= \frac{\mathbb{E}\left[x_{0}\eta_{f/\hat{Q}_{1x}}\left(H_{x}\right)\right]}{\rho_x \chi_{x}}-\hat{m}_{1x} \\
\hat{m}_{2z}&= \frac{\mathbb{E}\left[z_{0}\eta_{g(.,y)/\hat{Q}_{1z}}\left(H_{z}\right)\right]}{\rho_z \chi_{z}}-\hat{m}_{1z} \\
\hat{\chi}_{2x}&=\frac{\mathbb{E}\left[\eta^{2}_{f/\hat{Q}_{1x}}\left(H_{x}\right)\right]}{\chi_{x}^{2}}\\
&-\rho_x(\hat{m}_{1x}+\hat{m}_{2x})^{2}-\hat{\chi}_{1x} \\
\hat{\chi}_{2z}&=\frac{\mathbb{E}\left[\eta^{2}_{g(.,y)/\hat{Q}_{1z}}\left(H_{z}\right)\right]}{\chi_{z}^{2}}\\
&-\rho_z(\hat{m}_{1z}+\hat{m}_{2z})^{2}-\hat{\chi}_{1z}
\end{align}
\begin{align}
\hat{Q}_{1x}&= \mathbb{E}\left[\frac{1}{\hat{Q}_{2x}+\lambda \hat{Q}_{2z}}\right]^{-1}-\hat{Q}_{2x}   \\
 \vspace{1pt}
\hat{Q}_{1z} &= \alpha\mathbb{E}\left[\frac{\lambda}{\hat{Q}_{2x}+\lambda \hat{Q}_{2z}}\right]^{-1}-\hat{Q}_{2z} \\
 \vspace{1pt}
\hat{m}_{1x}&= \frac{1}{\chi_{x}}\mathbb{E}\left[\frac{\hat{m}_{2x}+\lambda \hat{m}_{2z}}{\hat{Q}_{2x}+\lambda \hat{Q}_{2z}}\right] -\hat{m}_{2x} \\
 \vspace{1pt}
\hat{m}_{1z}&= \frac{\rho_x}{\alpha \chi_{z} \rho_z}\mathbb{E}\left[\frac{\lambda(\hat{m}_{2x}+\lambda \hat{m}_{2z})}{\hat{Q}_{2x}+\lambda \hat{Q}_{2z}}\right]-\hat{m}_{2z} \\
 \vspace{1pt}
\hat{\chi}_{1x}&=\frac{1}{\chi_{x}^{2}}\mathbb{E}\left[\frac{\hat{\chi}_{2x}+\lambda\hat{\chi}_{2z}+\rho_x(\hat{m}_{2x}+\lambda\hat{m}_{2z})^{2}}{(\hat{Q}_{2x}+\lambda \hat{Q}_{2z})^{2}}\right]\\
&\hspace{2.5cm}-\rho_x(\hat{m}_{1x}+\hat{m}_{2x})^{2}-\hat{\chi}_{2x} \nonumber\\
 \vspace{1pt}
\hat{\chi}_{1z}&=\frac{1 }{\alpha \chi_{z}^{2}}\mathbb{E}\left[\frac{\lambda(\hat{\chi}_{2x}+\lambda\hat{\chi}_{2z}+\rho_x(\hat{m}_{2x}+\lambda\hat{m}_{2z})^{2})}{(\hat{Q}_{2x}+\lambda \hat{Q}_{2z})^{2}}\right]\\
&\hspace{2.7cm}-\rho_z(\hat{m}_{1z}+\hat{m}_{2z})^{2}-\hat{\chi}_{2z}, \nonumber
\end{align}
\end{subequations}
\quad \\
where $\chi_{x}=(\hat{Q}_{1x}+\hat{Q}_{2x})^{-1}$, $\chi_{z}=(\hat{Q}_{1z}+\hat{Q}_{2z})^{-1}$, and expectations are taken with respect to the random variables $x_{0} \sim p_{x_0}$, $z_{0} \sim \mathcal{N}(0,\sqrt{\rho_z})$, $y \sim \varphi(z_{0},\omega_{0})$,  $\xi_{1x}, \xi_{1z} \sim \mathcal{N}(0,1)$, and eigenvalues $\lambda \sim p_\lambda$. $\eta$ is a shorthand for the scalar proximal operator:
\begin{equation}
\eta_{\gamma f}(z) = \argmin_{x \in \mathcal{X}} \left\lbrace\gamma f(x)+\frac{1}{2}(x-z)^{2}\right\rbrace.
\end{equation}
\end{theorem}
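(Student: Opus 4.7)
The plan is to prove the theorem via a convergent sequence of MLVAMP iterates, using the algorithm as a scaffold that interpolates between a tractable scalar recursion (its state evolution) and the desired convex minimizer. First, I would verify that Algorithm~\ref{GVAMP}, with the denoisers given by \eqref{prox-f}--\eqref{g1-}, admits the pair $(\hat{\mathbf{x}},\hat{\mathbf{z}})$ defined in Theorem~\ref{main_th} as a fixed point. This follows from writing the KKT conditions of \eqref{student} after variable splitting $\mathbf{x}_1=\mathbf{x}_2$, $\mathbf{z}_1=\mathbf{F}\mathbf{x}_1$, $\mathbf{z}_2=\mathbf{F}\mathbf{x}_2$, and matching them with the stationarity equations arising from setting $\mathbf{\hat{x}}_1^{(t+1)}=\mathbf{\hat{x}}_1^{(t)}$ and $\mathbf{\hat{z}}_1^{(t+1)}=\mathbf{\hat{z}}_1^{(t)}$ in the MLVAMP updates. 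The coercivity and subdifferential growth conditions in Assumption~\ref{main_assum} guarantee existence of the minimizer and well-posedness of the proximal operators.

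Next, I would invoke the state evolution result of \cite{fletcher2018inference,rangan2019vector} for rotationally invariant $\mathbf{F}$, which gives that at every fixed iteration $t$, the vectors $\mathbf{h}_{1x}^{(t)},\mathbf{h}_{2x}^{(t)},\mathbf{h}_{1z}^{(t)},\mathbf{h}_{2z}^{(t)}$ satisfy \eqref{SE-assumption_main} almost surely in the limit $M,N\to\infty$, with scalar parameters $(\hat{Q},\hat{m},\hat{\chi})^{(t)}$ obeying a deterministic recursion. Plugging the Gaussian representation into the proximal and LMMSE denoisers and taking element-wise averages, one obtains exactly the scalar recursion whose fixed point is the system \eqref{thm1-equations}; in particular the LMMSE quantities collapse to expectations against the spectral law $p_\lambda$ via the freely-rotating $\mathbf{U},\mathbf{V}$, which explains the appearance of $\mathbb{E}[\cdot]$ over $\lambda$ on the right-hand column. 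At this stage the PL(2) convergence of the iterates gives the theorem at finite $t$, with $H_x^{(t)},H_z^{(t)}$ replacing the fixed-point variables.

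The main obstacle is turning this ``finite-$t$'' statement into a statement about the actual minimizer $\hat{\mathbf{x}}$, which requires exchanging the limits $N\to\infty$ and $t\to\infty$. For this I would cast MLVAMP as a discrete-time dynamical system on the state $(\mathbf{h}_{1x},\mathbf{h}_{2x},\mathbf{h}_{1z},\mathbf{h}_{2z},\hat{Q}_\bullet)$ and establish linear convergence of its trajectories to the unique fixed point $\hat{\mathbf{x}}$ under sufficient strong convexity of $f$ and $g$ (one of the contributions advertised in the introduction). Coupling the contraction rate with the state evolution PL(2) bounds lets one apply a standard two-parameter limit argument: choose $t(N)\to\infty$ slowly enough that the state-evolution error at iteration $t(N)$ is still $o(1)$ while the algorithmic error $\|\hat{\mathbf{x}}_1^{(t(N))}-\hat{\mathbf{x}}\|/\sqrt{N}$ also vanishes. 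Uniqueness of the fixed point of \eqref{thm1-equations} (assumed) then identifies the limiting parameters with $(\hat{Q}_{1x}^*,\hat{m}_{1x}^*,\hat{\chi}_{1x}^*,\hat{Q}_{1z}^*,\hat{m}_{1z}^*,\hat{\chi}_{1z}^*)$, so that $\hat{\mathbf{x}}$ inherits the scalar representation $\mathrm{Prox}_{f/\hat{Q}_{1x}^*}(H_x)$ and similarly for $\hat{\mathbf{z}}$.

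The remaining technicalities are handling non-strictly convex losses/regularizers (for the least-$\ell_2$-norm minimizer) and enforcing PL(2) testing against general functions $\phi$. For the former I would add an $\varepsilon\|\mathbf{x}\|_2^2/2$ perturbation, apply the strongly convex proof above, and take $\varepsilon\to 0$, using Assumption~\ref{add_reg} and the analytic-continuation machinery from Appendix~\ref{analytic_continuation} to transfer limits across the perturbation. For the latter, the PL(2) convergence of $(\mathbf{x}_0,\hat{\mathbf{x}}_1^{(t)})$ and $(\mathbf{z}_0,\hat{\mathbf{z}}_1^{(t)})$ combined with pseudo-Lipschitzness of $\phi$ gives the claimed empirical averages by continuity, closing the proof.
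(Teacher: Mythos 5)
Your overall architecture coincides with the paper's: identify the fixed point of 2-layer MLVAMP with the optimality condition of \eqref{student} (the paper's Lemma~\ref{VAMP_fixed}), match the state evolution fixed point with the system \eqref{thm1-equations} (Lemma~\ref{fixed_point_se_match}), prove dimension-free linear convergence of the iterates by recasting the algorithm as a dynamical system under sufficient strong convexity (Lemma~\ref{conv_lemma}), and then interchange the limits in $t$ and $N$. Two points, however, deserve scrutiny. First, your limit interchange via a diagonal sequence $t(N)\to\infty$ is not directly available: the state evolution result is a qualitative almost-sure statement at each \emph{fixed} $t$ with no rate in $N$, so you cannot tune $t(N)$ against an unquantified error. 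The correct (and standard, Bayati--Montanari-style) route, which the dimension-free contraction rate $(c/\lambda_2)^{t}$ of Lemma~\ref{conv_lemma} makes possible, is the iterated limit: for every $\varepsilon$ choose $t$ so that $\frac{1}{N}\norm{\mathbf{x}^{(t)}-\hat{\mathbf{x}}}_2^2\leqslant\varepsilon$ uniformly in $N$, then send $N\to\infty$ at that fixed $t$. This is a repairable phrasing issue rather than a fatal one.

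The genuine gap is in your last step. The convergence lemma does not hold for \emph{every} strongly convex instance: it requires the ridge strength $\lambda_2$ on the regularizer to exceed a finite threshold $\lambda_2^{*}$ determined by the spectrum of $\mathbf{F}$ and the curvature of $f$ and $g$, and this threshold is in general not small. Consequently, adding an $\varepsilon\norm{\mathbf{x}}_2^2/2$ perturbation and ``applying the strongly convex proof above'' fails for small $\varepsilon$: the perturbed problem is strongly convex with constant $\varepsilon<\lambda_2^{*}$, and MLVAMP is not shown to converge there, so no statistical characterization of $\hat{\mathbf{x}}(\varepsilon)$ is established in the regime you need before taking $\varepsilon\to 0$. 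This is precisely why the paper's proof routes through Appendix~\ref{analytic_continuation}: one first replaces $f$ and $g$ by real analytic convex approximations, shows that both the empirical observable $\lambda_2\mapsto\lim_N\frac{1}{N}\sum_i\phi(x_{0,i},\hat{x}_i(\lambda_2))$ and the state-evolution prediction are real analytic in $\lambda_2$ on $(0,\infty)$ (this is where Assumption~\ref{add_reg} and the bounds on $D^{(p)}_{\lambda_2}\hat{\mathbf{x}}(\lambda_2)$ enter), and invokes the identity theorem to extend the equality from $[\lambda_2^{*},\infty)$ to all $\lambda_2>0$ --- no algorithmic convergence is needed below the threshold. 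A separate uniform-approximation argument (convergence of minimizers of uniformly convergent convex problems) then removes the real-analyticity of $f,g$, and Proposition~\ref{approx_l2} handles $\lambda_2=0$ and the least-$\ell_2$-norm selection. Your sketch mentions the continuation machinery only in passing and conflates it with the $\varepsilon\to 0$ limit; without the analytic-continuation step the argument does not reach small or vanishing regularization.
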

The set of fixed point equations from Theorem \ref{main_th} naturally stems from the "replica-symmetric" free energy commonly used in the statistical physics community \cite{mezard1987spin,mezard2009information}. The free energy depends on a set of parameters, and extremizing it with respect to all parameters, i.e. writing the zero gradient condition for each parameter, provides the set of equations \eqref{thm1-equations}. We state this correspondence
in the following corollary to Theorem \ref{main_th} :
\begin{corollary}[The Kabashima formula] \quad \\
\label{cor_free}
The fixed point equations from theorem \ref{main_th} can equivalently be rewritten as the solution of the extreme value problem (\ref{free-energy}) defined by the replica free energy from \cite{takahashi2022macroscopic}.
\begin{figure*}[!ht]
\begin{align}
    f&=-\mathop{\rm extr}_{m_x, \chi_x, q_x, m_z, \chi_z, q_z}\{g_{\rm F} + g_{\rm G} - g_{\rm S}\}, \label{free-energy} \\
    g_{\rm F} &=\mathop{\rm extr}_{\hat{m}_{1x}, \hat{\chi}_{1x}, \hat{Q}_{1x}, \hat{m}_{1z}, \hat{\chi}_{1z}, \hat{Q}_{1z}}\left\lbrace\frac{1}{2}q_x \hat{Q}_{1x} - \frac{1}{2}\chi_x \hat{\chi}_{1x}-\hat{m}_{1x} m_x-\alpha \hat{m}_{1z} m_z+\frac{\alpha}{2}\left(q_z \hat{Q}_{1z} - \chi_z \hat{\chi}_{1z}\right)\right.\nonumber \\
    &\left.+ \mathbb{E}\left[ \phi_x (\hat{m}_{1x}, \hat{Q}_{1x}, \hat{\chi}_{1x};x_0, \xi_{1x})\right]+\alpha \mathbb{E}\left[ \phi_z(\hat{m}_{1z}, \hat{Q}_{1z}, \hat{\chi}_{1z};z_0, \xi_{1z}) \right] \right\rbrace, \nonumber \\
    g_{\rm G} &= \mathop{\rm extr}_{\hat{m}_{2x}, \hat{\chi}_{2x}, \hat{Q}_{2x}, \hat{m}_{2z}, \hat{\chi}_{2z}, \hat{Q}_{2z}}\left\lbrace\frac{1}{2}q_x \hat{Q}_{2x} - \frac{1}{2}\chi_x \hat{\chi}_{2x}- m_x \hat{m}_{2x}- \alpha m_z \hat{m}_{2z} + \frac{\alpha}{2}
    \left(q_z \hat{Q}_{2z} - \chi_z \hat{\chi}_{2z}\right)\right.\nonumber\\
    &\left.-\frac{1}{2}\left(\mathbb{E}\left[\log (\hat{Q}_{2x}+\lambda\hat{Q}_{2z})\right]-\mathbb{E}\left[\frac{\hat{\chi}_{2x} + \lambda \hat{\chi}_{2z}}{\hat{Q}_{2x} + \lambda \hat{Q}_{2z}}\right]\right. \left.-\mathbb{E}\left[\frac{\rho_x(\hat{m}_{2x} + \lambda \hat{m}_{2z})^2}{(\hat{Q}_{2x} + \lambda \hat{Q}_{2z})}\right]\right)\right\rbrace,\nonumber 
    \\
    g_{\rm S} &=  \frac{1}{2}\left(\frac{q_x}{\chi_x}- \frac{m_x^2}{\rho_x \chi_x}\right)+\frac{\alpha}{2}\left(\frac{q_z}{\chi_z}-\frac{m_z^2}{\rho_z\chi_z}\right)
    \nonumber,
\end{align}
where $\phi_x$ and $\phi_z$ are the potential functions
\begin{align}
    \phi_x(\hat{m}_{1x}, \hat{Q}_{1x}, \hat{\chi}_{1x}; x_0, \xi_{1x})=\lim_{\beta \rightarrow \infty}\dfrac{1}{\beta} \log \int e^{-\frac{\beta\hat{Q}_{1x}}{2}x^2 + \beta(\hat{m}_{1x} x_0 + \sqrt{\hat{\chi}_{1x}}\xi_{1x})x - \beta f(x)}dx, \label{phi_x}\\
    \phi_z(\hat{m}_{1z}, \hat{Q}_{1z}, \hat{\chi}_{1z};z_0, \chi_{1z}) = \lim_{\beta \rightarrow \infty}\frac{1}{\beta} \log \int e^{-\frac{\beta\hat{Q}_{1z}}{2}z^2 + \beta(\hat{m}_{1z} z_0 + \sqrt{\hat{\chi}_{1z}}\xi_{1z})z-\beta g(y,z)}dz. \label{phi_z}
\end{align}
\hrulefill
\end{figure*}
\end{corollary}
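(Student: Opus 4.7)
The plan is to verify that the eighteen stationarity conditions of the replica free energy \eqref{free-energy} are equivalent to the twelve equations of \eqref{thm1-equations} once the six primal order parameters $(m_x, q_x, \chi_x, m_z, q_z, \chi_z)$ have been eliminated by the Onsager-type relations arising from extremizing over them. The work splits naturally according to the three pieces $g_{\rm F}, g_{\rm G}, g_{\rm S}$, and the only non-mechanical inputs are the zero-temperature collapse of the potentials $\phi_x, \phi_z$ and one application of Gaussian integration by parts.

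First I would evaluate $\phi_x$ and $\phi_z$ in the $\beta \to \infty$ limit. Because $f$ and $g(y,\cdot)$ are proper closed convex, Laplace's method collapses \eqref{phi_x}--\eqref{phi_z} onto their unique minimizer, giving
\[\phi_x(\hat{m}_{1x},\hat{Q}_{1x},\hat{\chi}_{1x};x_0,\xi_{1x}) = -\min_{x}\Bigl\{\tfrac{\hat{Q}_{1x}}{2}x^2 - (\hat{m}_{1x} x_0 + \sqrt{\hat{\chi}_{1x}}\,\xi_{1x})x + f(x)\Bigr\},\]
with minimizer $x^\star = \eta_{f/\hat{Q}_{1x}}(H_x)$, and analogously for $\phi_z$ with $g(y,\cdot)$ in place of $f$. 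The envelope theorem then yields $\partial_{\hat{m}_{1x}}\phi_x = x_0 x^\star$, $\partial_{\hat{Q}_{1x}}\phi_x = -(x^\star)^2/2$, and $\partial_{\hat{\chi}_{1x}}\phi_x = \xi_{1x} x^\star/(2\sqrt{\hat{\chi}_{1x}})$.

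Second, setting to zero the six partial derivatives of $g_{\rm F}$ in the ``hat-$1$'' parameters gives immediately $m_x = \mathbb{E}[x_0 \eta_{f/\hat{Q}_{1x}}(H_x)]$ and $q_x = \mathbb{E}[\eta^2_{f/\hat{Q}_{1x}}(H_x)]$, while the $\hat{\chi}_{1x}$ equation requires one Stein identity, $\mathbb{E}[\xi_{1x}\,\eta_{f/\hat{Q}_{1x}}(H_x)] = (\sqrt{\hat{\chi}_{1x}}/\hat{Q}_{1x})\,\mathbb{E}[\eta'_{f/\hat{Q}_{1x}}(H_x)]$, producing the key identity $\chi_x\hat{Q}_{1x} = \mathbb{E}[\eta'_{f/\hat{Q}_{1x}}(H_x)]$, with identical formulas on the $z$ side. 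Differentiating $g_{\rm G}$ in the six ``hat-$2$'' parameters is a direct algebraic calculation in the explicit $\lambda$-expectations and produces the spectral integrals appearing in the right column of \eqref{thm1-equations}. Finally, extremizing $g_{\rm F} + g_{\rm G} - g_{\rm S}$ over $(m_x, q_x, \chi_x)$ and $(m_z, q_z, \chi_z)$ yields the Onsager identifications
\[\hat{Q}_{1\cdot}+\hat{Q}_{2\cdot} = \tfrac{1}{\chi_\cdot},\qquad \hat{m}_{1\cdot}+\hat{m}_{2\cdot} = \tfrac{m_\cdot}{\rho_\cdot\chi_\cdot},\qquad \hat{\chi}_{1\cdot}+\hat{\chi}_{2\cdot} = \tfrac{q_\cdot}{\chi_\cdot^2} - \rho_\cdot(\hat{m}_{1\cdot}+\hat{m}_{2\cdot})^2,\]
the $\alpha$ prefactors on the $z$ side cancelling consistently. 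Substituting these back into the twelve hat-equations eliminates $m_\cdot, q_\cdot, \chi_\cdot$ and returns precisely \eqref{thm1-equations}.

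The main obstacle is bookkeeping rather than analysis: eighteen stationarity conditions must be matched against twelve target equations by six Onsager substitutions, with signs, factors of $\alpha$, $\rho_x, \rho_z$, and the pairings between $\partial_{q_\cdot}$-$\partial_{\hat{Q}_{2\cdot}}$, $\partial_{\chi_\cdot}$-$\partial_{\hat{\chi}_{2\cdot}}$, $\partial_{m_\cdot}$-$\partial_{\hat{m}_{2\cdot}}$ carefully tracked throughout. The only genuinely analytic ingredients are the Laplace collapse of $\phi_x, \phi_z$, guaranteed by the coercivity and convexity of assumption \ref{main_assum}, and the Stein identity applied to the proximal map, whose required almost-everywhere differentiability in its input follows from the firm non-expansiveness of proximal operators of proper closed convex functions.
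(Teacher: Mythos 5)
Your strategy---extremize $g_{\rm F}$, $g_{\rm G}$ and the outer order parameters separately, use the envelope theorem and Stein's lemma on the potentials, then eliminate $(m_\cdot,q_\cdot,\chi_\cdot)$ through the Onsager relations---is the natural one, and it is in fact more explicit than anything in the paper, which never extremizes the free energy directly: the paper only establishes the Laplace collapse of $\phi_x,\phi_z$ onto Moreau envelopes and otherwise reaches \eqref{thm1-equations} through the fixed point of the state evolution recursion (Lemma \ref{fixed_point_se_match} and appendix \ref{SE_append}). Your treatment of $g_{\rm F}$ (envelope derivatives, the Stein identity yielding $\chi_x\hat{Q}_{1x}=\mathbb{E}[\eta']$) and of the outer stationarity conditions is correct and reproduces the left column of \eqref{thm1-equations} exactly.

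The gap is in the sentence claiming that differentiating $g_{\rm G}$ ``produces the spectral integrals appearing in the right column of \eqref{thm1-equations}.'' Taken literally, it does not: the stationarity of $g_{\rm G}$ in $\hat{Q}_{2x}$ reads
\begin{equation*}
q_x \;=\; \mathbb{E}\left[\frac{1}{\hat{Q}_{2x}+\lambda\hat{Q}_{2z}}\right]\;+\;\mathbb{E}\left[\frac{\hat{\chi}_{2x}+\lambda\hat{\chi}_{2z}+\rho_x(\hat{m}_{2x}+\lambda\hat{m}_{2z})^2}{(\hat{Q}_{2x}+\lambda\hat{Q}_{2z})^2}\right],
\end{equation*}
where the first term comes from the $\log$-determinant and equals $\chi_x$ by the $\hat{\chi}_{2x}$ stationarity. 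Feeding this $q_x$ into your own Onsager relation $\hat{\chi}_{1x}+\hat{\chi}_{2x}=q_x/\chi_x^2-\rho_x(\hat{m}_{1x}+\hat{m}_{2x})^2$ leaves an extra $1/\chi_x=\hat{Q}_{1x}+\hat{Q}_{2x}$ on the right-hand side of the $\hat{\chi}_{1x}$ equation, which is absent from \eqref{thm1-equations} and from the rigorous state evolution fixed point (the $q_{2x}$ update in \eqref{Kaba-fullSE} carries no such term); the same mismatch, with a $\chi_z$, occurs for $\hat{\chi}_{1z}$. The resolution is the standard zero-temperature bookkeeping: in the finite-$\beta$ derivation the Gaussian integral produces $-\frac{1}{2\beta}\log\det$, so the displayed $\log$-determinant is $O(1/\beta)$ and its $\hat{Q}$-derivative must be discarded in the $\beta\to\infty$ stationarity conditions even though the term is kept in the written form of \eqref{free-energy}. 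This is the one genuinely non-mechanical step of the verification, and your proof needs to state it; as written, the ``direct algebraic calculation'' does not return \eqref{thm1-equations}.
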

$\beta$ is a parameter that corresponds in the physics approach to an inverse temperature. In the $\beta \to \infty$ limit (the so-called zero temperature limit), the integrals defining $\phi_x$ and $\phi_z$ concentrate on their extremal value. Note that they are closely related to the Moreau envelopes $\mathcal{M}$ \cite{parikh2014proximal,bauschke2011convex} of $f$ and $g$, which represent a smoothed form of the objective function with the same minimizers:
\begin{align}
    &\phi_x(\hat{m}_{1x}, \hat{Q}_{1x}, \hat{\chi}_{1x}; x_0, \xi_{1x}) = \frac{\hat{Q}_{1x}}{2}H_{x}^{2}-\mathcal{M}_{\frac{f}{\hat{Q}_{1x}}}\left(H_{x}\right) \\
    &\mbox{where} \thickspace \forall \thickspace \gamma \geqslant 0, \thickspace \mathcal{M}_{\gamma f}(z) = \mbox{inf}_{x}\left\lbrace f(x)+\frac{1}{2\gamma}\norm{x-z}_{2}^{2}\right\rbrace, 
    \end{align}
We provide details on this correspondence in appendix \ref{app:rep_mor}.
In the zero-temperature limit we consider, it is possible to have more precise information on the geometry of 
the cost function defining the optimization problem in Corollary \ref{cor_free}. Indeed, it is composed of 
functions whose convexity or concavity are staightforward to establish : linear terms, inverses, logarithms, squares and expectation of Moreau envelopes. The 
convexity of the latter is well documented in \cite{thrampoulidis2018precise}. First, note that the parameters $\chi_{x},\chi_{z},\hat{\chi}_{1x},\hat{\chi}_{2x},\hat{\chi}_{1z},\hat{\chi}_{2z},q_{x},q_{z},\hat{Q}_{1x},\hat{Q}_{2x},\hat{Q}_{1z},\hat{Q}_{2z}$ are 
positive so we may restrict their feasibility set to $\mathbb{R}^{+}$, while $m_{x},m_{z},\hat{m}_{1x},\hat{m}_{1z},\hat{m}_{2x},\hat{m}_{2z}$ can take any value in $\mathbb{R}$. Then, $q^{*}_{x} = \frac{1}{N}\norm{\hat{\mathbf{x}}}^{2}$ and $m^{*}_{x} = \frac{1}{N}\mathbf{x}_{0}^{\top}\hat{\mathbf{x}}$.
The Cauchy-Schwarz inequality thus gives $q_{x}^{*} \geqslant \frac{(m^{*}_{x})^{2}}{\rho_{x}}$. Similarly with $\hat{\mathbf{z}}$, $q_{z}^{*} \geqslant \frac{(m^{*}_{z})^{2}}{\rho_{z}}$. We may thus restrict the feasibility sets of 
$q_{x},q_{z},m_{x},m_{z}$ such that they verify these inequalities. In these regions, the function $g_{s}$ is convex in $\chi_{x},\chi_{z}$, linear in $q_{x},q_{z}$ and concave in $m_{x},m_{z}$. The terms involving 
$q_{x},q_{z},m_{x},m_{z},\chi_{x},\chi_{z}$ in $g_{G}$ and $g_{F}$ are all linear. Moving to $g_{g}$, the cost function defining it is convex in $\hat{Q}_{2x},\hat{Q}_{2z}$ (negative logarithm and inverse function on $\mathbb{R}^{+}$),
linear in  $\hat{\chi}_{2x}, \hat{\chi}_{2z}$ and convex in $\hat{m}_{2x},\hat{m}_{2z}$. Regarding $g_{F}$, all terms are linear except for the replica potentials. Using Moreau's identity, we may write 
$\phi_x(\hat{m}_{1x}, \hat{Q}_{1x}, \hat{\chi}_{1x}; x_0, \xi_{1x}) = \mathcal{M}_{\hat{Q}_{1x}f^{*}}\left(\hat{m}_{1x}x_{0}+\sqrt{\hat{\chi}_{1x}}\xi_{1x}\right)$ where $f^{*}$ is the conjugate of $f$. Using 
the properties summarized in \cite{thrampoulidis2018precise}, the cost function defining $g_{F}$ is convex in $\hat{m}_{1x},\hat{m}_{1z},\hat{Q}_{1x},\hat{Q}_{1z}$. The convexity with respect to 
$\chi_{1x},\chi_{1z}$ is harder to characterize due to the composition of the Moreau envelope with the square root, and should be studied locally for more information. The extremization may then be rewritten as 
a maximization over the variables in which the cost function is concave and minimization over the variables in which the cost function is convex. Note that this does not give 
information on the uniqueness of the solution, which would require joint strict convexity and strict concavity. \\
\par
As immediate corollaries to Theorem \ref{main_th}, we can determine the asymptotic errors of the GLM and the optimal value of the loss function.
To characterize the asymptotic reconstruction errors and angles, we can define the norms of the estimators and their overlaps with the ground-truth vectors as the limits
\begin{align}
    m_x^* &\equiv \lim_{N \to \infty} \frac{\mathbf{\hat{x}}^{T}\mathbf{x}_{0}}{N}\hspace{1.3cm}
    m_z^* \equiv \lim_{M \to \infty} \frac{\mathbf{\hat{z}}^{T}\mathbf{z}_{0}}{M}  \\
    q_x^* &\equiv \lim_{N \to \infty} \frac{\norm{\mathbf{\hat{x}}}_{2}^{2}}{N} \hspace{1.5cm}
q_z^* \equiv\lim_{N \to \infty} \frac{\norm{\mathbf{\hat{z}}}_{2}^{2}}{M}.
\end{align}
We then have :
\begin{corollary} \quad \\
    Under the set of Assumptions \ref{main_assum}, the squared norms $m_x^*, m_z^*$ of estimator $\mathbf{\hat{x}}$ defined by~\eqref{student} and $\mathbf{\hat{z}}=\mathbf{F \hat{x}}$, and their overlaps $q_x^*, q_z^*$ with ground-truth vectors are almost surely given by:
\begin{align}
    m_x^* &= \mathbb{E}\left[x_{0}\eta_{\frac{f}{\hat{Q}_{1x}^*}}\left( H_{x} \right)\right], \thickspace
    q_x^*  = \mathbb{E}\left[\eta^{2}_{\frac{f}{\hat{Q}_{1x}^*}}\left(H_{x}\right)\right] \\
    m_z^*  &= \mathbb{E}\left[z_0  \eta_{\frac{g(.,y)}{\hat{Q}_{1z}^*}}\left(H_{z}\right)\right], \thickspace
q_z^* = \mathbb{E}\left[\eta^2_{\frac{g(.,y)}{\hat{Q}_{1z}^*}}\left(H_{z}\right)\right]
\end{align}
with $H_{x}$ and $H_{z}$ defined as in Theorem \ref{main_th}.
\end{corollary}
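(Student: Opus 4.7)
The statement is an immediate corollary of Theorem \ref{main_th}, obtained by specializing the test function $\phi$ in the two empirical limits to simple polynomials of degree two. The plan is to pick, one by one, a $\phi$ that makes the empirical average on the left-hand side of Theorem \ref{main_th} coincide with each of the scalar quantities $m_x^*,q_x^*,m_z^*,q_z^*$, and then simply to rewrite the right-hand side using the fact that $\mathrm{Prox}_{\gamma f}$ acting on a scalar argument is exactly the scalar proximal map $\eta_{\gamma f}$ introduced at the end of Theorem \ref{main_th}.

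Concretely, I would proceed in four steps. First, applying Theorem \ref{main_th} with $\phi(u,v)=uv$ to the $(\mathbf{x}_0,\hat{\mathbf{x}})$ identity gives
\begin{equation*}
\lim_{N\to\infty}\frac{\hat{\mathbf{x}}^{T}\mathbf{x}_{0}}{N}
\stackrel{a.s.}{=}\mathbb{E}\bigl[x_{0}\,\eta_{f/\hat{Q}_{1x}^{*}}(H_{x})\bigr],
\end{equation*}
which is exactly $m_x^*$. Second, choosing $\phi(u,v)=v^2$ in the same identity yields
\begin{equation*}
\lim_{N\to\infty}\frac{\|\hat{\mathbf{x}}\|_{2}^{2}}{N}
\stackrel{a.s.}{=}\mathbb{E}\bigl[\eta_{f/\hat{Q}_{1x}^{*}}^{2}(H_{x})\bigr]=q_x^*.
\end{equation*}
Third and fourth, repeating these two choices on the $(\mathbf{z}_0,\hat{\mathbf{z}})$ identity of Theorem \ref{main_th}, with $\mathrm{Prox}_{g(\cdot,y)/\hat{Q}_{1z}^{*}}$ in place of $\mathrm{Prox}_{f/\hat{Q}_{1x}^{*}}$, produces the asserted expressions for $m_z^*$ and $q_z^*$ with $H_z$ in place of $H_x$.

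The only verification needed is that the test functions $\phi_1(u,v)=uv$ and $\phi_2(u,v)=v^2$ satisfy the hypotheses of Theorem \ref{main_th}, namely pseudo-Lipschitz of order 2 and the regularity condition of Assumption \ref{add_reg}. Both are polynomials of degree two, so pseudo-Lipschitzness of order 2 follows from a direct bound $|\phi_i(u,v)-\phi_i(u',v')|\leqslant C(1+\|(u,v)\|+\|(u',v')\|)\|(u,v)-(u',v')\|$. The remark immediately after Theorem \ref{main_th} explicitly states that polynomials of order 2 meet the analytic-continuation condition of Assumption \ref{add_reg}, so no additional work is required.

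There is no genuine obstacle: all of the heavy machinery---existence and uniqueness of the fixed point $(\hat{Q}_{1x}^{*},\hat{Q}_{1z}^{*},\hat{m}_{1x}^{*},\hat{m}_{1z}^{*},\hat{\chi}_{1x}^{*},\hat{\chi}_{1z}^{*})$, the asymptotic Gaussianity of the MLVAMP iterates, and the identification of the MLVAMP fixed point with the minimizer $\hat{\mathbf{x}}$ of problem \eqref{student}---has already been absorbed into Theorem \ref{main_th}. The corollary is simply the readout of that theorem against the four natural quadratic observables that define the reconstruction norms and overlaps.
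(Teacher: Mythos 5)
Your proposal is correct and is exactly the argument the paper intends: the corollary is stated as an immediate consequence of Theorem \ref{main_th}, obtained by testing the two empirical limits against the quadratic observables $\phi(u,v)=uv$ and $\phi(u,v)=v^2$, which are pseudo-Lipschitz of order 2 and, being polynomials of order 2, satisfy the regularity condition of Assumption \ref{add_reg} as noted in the remark following the theorem. Your verification of the pseudo-Lipschitz property and the identification of the scalar proximal $\eta$ with $\mathrm{Prox}$ acting coordinatewise are the only details that needed filling in, and you have done so correctly.
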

With the knowledge of the asymptotic overlap $m_{x}^*$, and squared norms $q_{x}^*$, $\rho_{x}$, most quantities of interest can be determined. For instance, the quadratic reconstruction error is obtained from its definition as $\mbox{E} = \rho_x + q_x^*  -2m_x^*$, while the angle between the ground-truth vector and the estimator is $\theta = \mbox{arccos}(m_{x}^*/(\sqrt{\rho_{x}q_{x}^*}))$. One can also evaluate the generalization error for new random Gaussian samples, as advocated in ~\cite{engel2001statistical}, or compute similar errors for the denoising of ${\bf z_0}$. 

\section{Numerical results}

Obtaining a stable implementation of the fixed point equations can be challenging. We provide simulation details in appendix \ref{num_append} along with a link to the script we used to produce the figures. Theoretical predictions (full lines) are compared with numerical experiments (points) conducted using standard convex optimization solvers from \cite{pedregosa2011scikit}. The comparison with finite size ($N \equiv$ a few hundreds) numerical experiments shows that, despite being asymptotic in nature, the predictions are accurate even at moderate system sizes. 
All experimental points were done with $N=200$ and averaged one hundred times.
\subsection{Validity of the replica prediction}
We start with a simple verification of the replica prediction in Figure\ref{fig_data}, on a classification problem where data is generated as $\mathbf{y} = \sign(\mathbf{F x_0})$. We consider two types of singular value distributions for $\mathbf{F}$ and three types of losses: a square loss, a linear support vector classification (SVC) loss and a logistic loss. Technical details and expressions are given in appendix \ref{num_append}. We use ridge regularization with penalty $f=\frac{\lambda_2}{2} \norm{\cdot}_2^2$. We plot the reconstruction angle $\theta$ as a function of the aspect ratio of the problem $\alpha$ in Figure \ref{fig_data}. A first plot is done with a Marchenko-Pastur eigenvalue distribution for $\mathbf{F}^T \mathbf{F}$ corresponding to $\mathbf{F}$ being i.i.d Gaussian. We then move out of the Gaussian setting and change the eigenvalue distribution for~\eqref{indicator}, which has a qualitatively similar behaviour: it has bounded support, and includes vanishing singular values at a given value $\alpha=1$ of the aspect ratio. We recover a result close to the i.i.d. Gaussian one, including the error peak for the square loss when $\alpha=1$. In both cases, the SVC and the logistic regression perform similarly. Note that error peaks can also be obtained for the max-margin solution as shown in \cite{gerace2020generalisation}, using a more elaborate teacher.
\begin{figure}[!t]
\centering
    \includegraphics[scale=0.5]{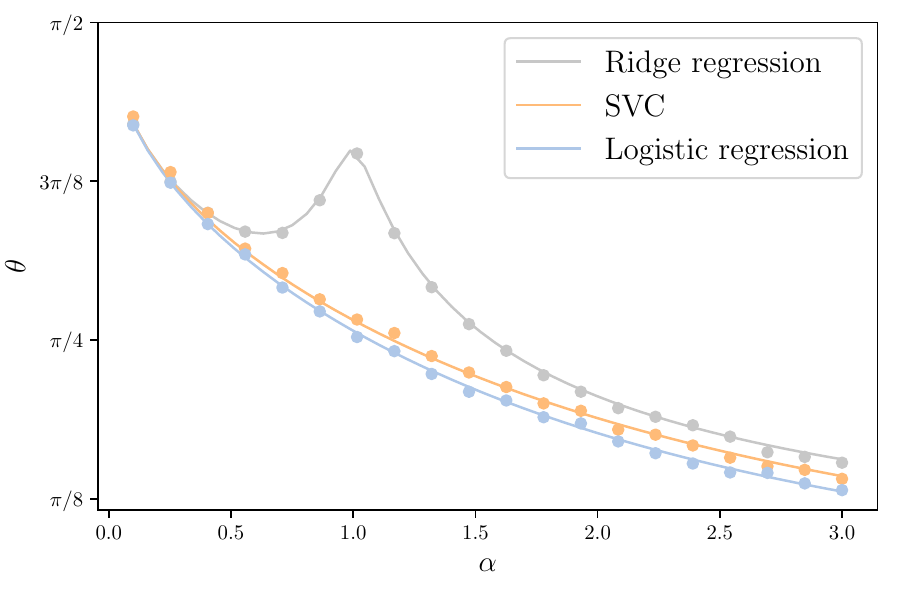}
    \includegraphics[scale=0.5]{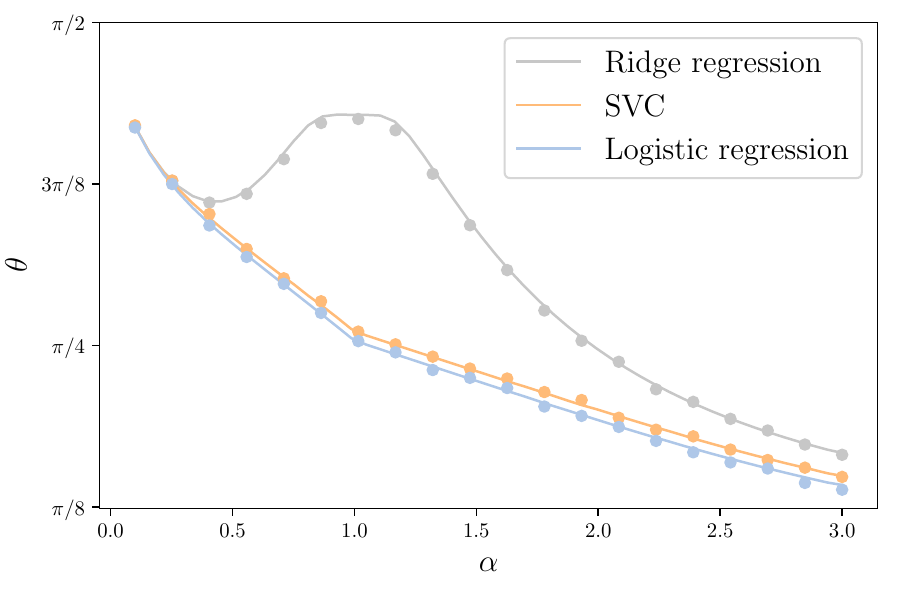}
    \caption{Illustration of Theorem \ref{main_th} in a binary classification problem with data generated as ${\mathbf{y} = \phi(\mathbf{F}\mathbf{x}_{0})}$ with the data matrix ${\bf F}$ being {\bf Left :} a Gaussian i.i.d. matrix and {\bf Right :} a random orthogonal invariant matrix with a squared uniform density of  singular values. We plot the angle between the estimator and the ground-truth vector $\theta = \mbox{arccos}(m_{x}^*/(\sqrt{\rho_{x}q_{x}^*}))$ as a function of the aspect ratio $\alpha = M/N$ with three different losses: ridge regression, a Support Vector Machine with linear kernel and a logistic regression. $f$ is a $\ell_2$ penalty with parameter $\lambda_2=10^{-3}$. The theoretical prediction (full line) is compared with numerical experiments (points) conducted using standard convex optimization solvers from \cite{pedregosa2011scikit}.}
    \label{fig_data}
\end{figure}
\subsection{Sparse logistic regression}
We now use the replica prediction to study sparse logistic regression with i.i.d Gaussian and row-orthogonal data, the latter being ubiquitous in signal processing. Row-orthogonal data gives rise to a discrete eigenvalue distribution for $\mathbf{F}^T \mathbf{F}$ of zeroes and ones:
\begin{equation}
    \lambda_{\mathbf{F}^{T}\mathbf{F}} \sim \max(0,1-\alpha)\delta(0)+\min(1,\alpha)\delta(1)
\end{equation}
and is often found to outperform Gaussian sensing matrices for recovery tasks, see e.g. \cite{kabashima2009typical} or \cite{gerbelot2020asymptotic}. In what follows, we define the sparsity $\rho$ of the ground truth vector as the fraction of non-zero components which are sampled from a standard normal distribution. Labels are generated with $\mathbf{y} = \sign(\mathbf{F x_0})$ as for Figure \ref{fig_data}. \\
\subsubsection{Effect of sparsity}
In Figure \ref{figure2}, we start by plotting the reconstruction angle against the aspect ratio of the measurement matrix for different values of the sparsity of the teacher vector, for $\ell_{2}$ regularization $f = \frac{\lambda_2}{2} \norm{\cdot}_2^2$ and $\ell_{1}$ regularization $f = \lambda_1 \norm{\cdot}_1$, and a fixed value of regularization parameters $\lambda_1, \lambda_2$.
\begin{figure*}[!t]
\centering
    \includegraphics[scale=0.45]{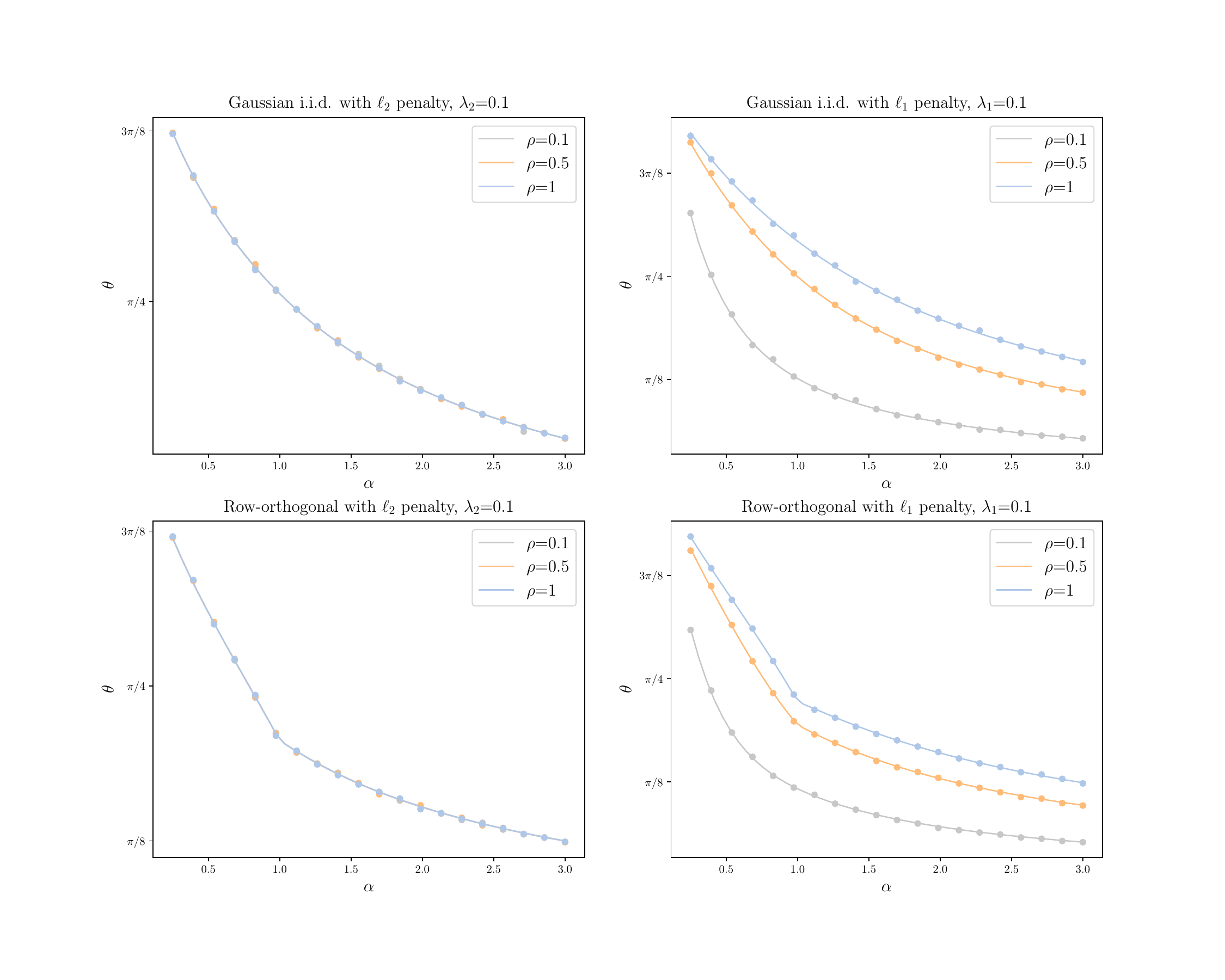}
    \caption{Effect of the sparsity of the planted vector. We plot the angle between the estimator and the ground truth in a binary classification problem with ${\mathbf{y} = \mbox{sign}(\mathbf{F}\mathbf{x}_{0})}$ as a function of $\alpha=M/N$, for different values of sparsity $\rho$. We use logistic regression. Figures in the top are for $\mathbf{F}$ Gaussian i.i.d., while figures in the bottom are for $\mathbf{F}$ row-orthogonal. {\bf Left :} we use a $\ell_2$ penalty with parameter $\lambda_2=0.1$, and notice that the angle is the same for any sparsity. {\bf Right :} we use a  $\ell_1$ penalty with parameter $\lambda_1=0.1$. The theoretical prediction (full line) is compared with numerical experiments (points) conducted using standard convex optimization solvers from \cite{pedregosa2011scikit}.}
    \label{figure2}
\end{figure*}
In the case of $\ell_{2}$-regularization, we observe that the reconstruction performance remains the same whatever the sparsity of the original teacher vector as all curves collapse together (top and bottom left). The ridge regularization is thus unable to differentiate sparse and non-sparse problems. For $\ell_{1}$, better performance is observed when the sparsity increases. Comparing the values for $\ell_{2}$ and $\ell_{1}$ also shows that, for a non-sparse signal, $\ell_{2}$ and $\ell_{1}$ reconstruction perform similarly. The largest difference is observed at $\rho = 0.1$, where the $\ell_{1}$ penalized logistic regression significantly outperforms the ridge one. We thus keep this value of the sparsity parameter for the next figures. \\
\quad \\
\subsubsection{Varying the regularization parameter at constant sparsity}
In  Figure \ref{figure3}, keeping the sparsity of the teacher constant at $\rho=0.1$, we look to tune the regularization strength. An interesting effect appears in the ridge-regularized case with row-orthogonal measurements : the curves collapse to a single one when the aspect ratio goes below $\alpha=1$. We find that the optimal regularization strength for the $\ell_{2}$ penalty lies around $\lambda_{2} = 0.01$, and for the $\ell_{1}$-penalty around $\lambda_{1} = 0.1$, for both types of matrices.
\begin{figure*}[!t]
    \centering
    \includegraphics[scale=0.45]{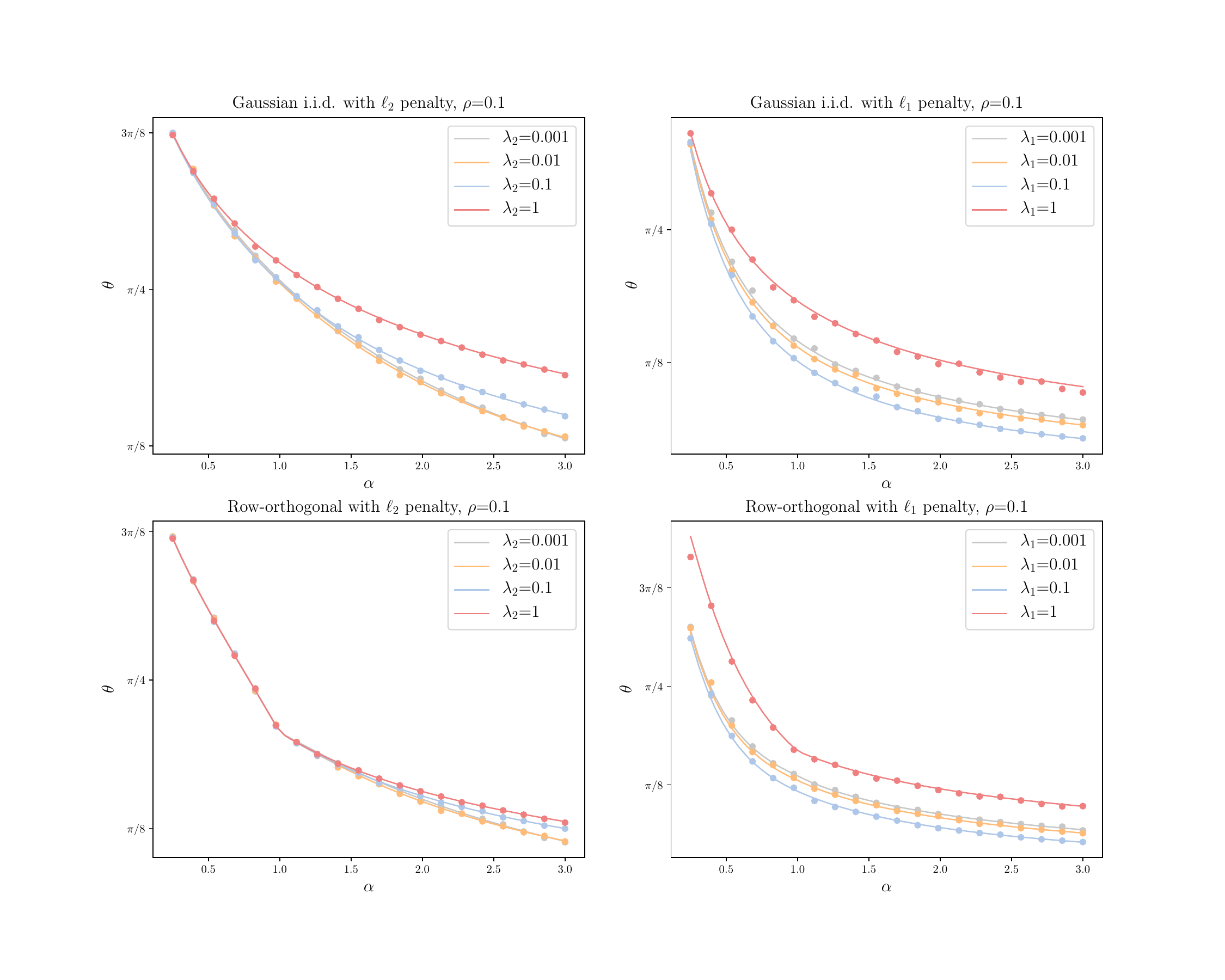}
    \caption{Tuning the regularization parameter. We still plot the angle between the estimator and the ground truth in a binary classification problem with ${\mathbf{y} = \sign(\mathbf{F}\mathbf{x}_{0})}$ as a function of $\alpha=M/N$, for a fixed sparsity of planted vector $\rho = 0.1$, for different values of regularization parameters. Figures in the top are for $\mathbf{F}$ Gaussian i.i.d., while figures in the bottom are for $\mathbf{F}$ row-orthogonal. {\bf Left :}  $\ell_2$ penalty with different values of regularization parameter $\lambda_2$. {\bf Right :}  $\ell_1$ penalty with different values of regularization parameter $\lambda_1$.}
    \label{figure3}
\end{figure*}\\

\subsubsection{Comparing case}
In Figure \ref{figure4}, we directly compare the reconstruction performance of logistic regression on a sparse problem with previously tuned regularization parameter of $\ell_{2}$ and $\ell_{1}$ penalties, with the two types of measurement matrices.
\begin{figure}[!t]
    \centering
    \includegraphics[scale=0.5]{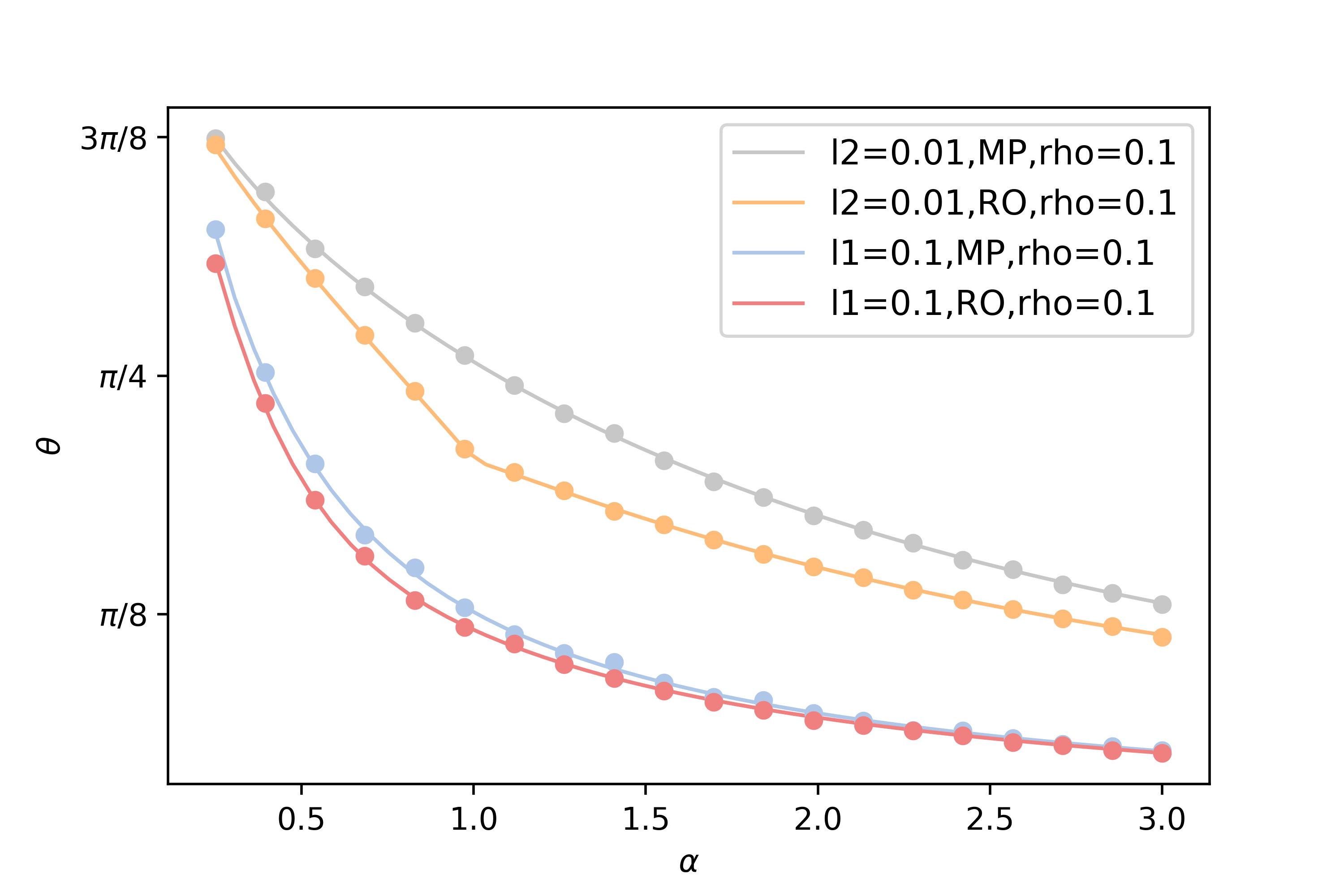}
    \caption{Comparing reconstruction performance for Gaussian i.i.d. and row-orthogonal matrices. In this figure, we compare the reconstruction angles between the estimator and the ground-truth for binary classification obtained with $\ell_1$ and $\ell_2$ penalties. We use logistic regression. The sparsity of the sparse vector is fixed to $\rho=0.1$. For both Gaussian i.i.d. and row-orthogonal data matrices, we see that $\ell_1$ penalty with $\lambda_1=0.1$ performs better than the $\ell_2$ penalty with $\lambda_2=0.01$. For those two penalties, row-orthogonal matrices allow to obtain smaller reconstruction angles than Gaussian i.i.d. matrices.}
    \label{figure4}
\end{figure}
We naturally observe that the $\ell_{1}$ penalty leads to better reconstruction of the sparse vector. Row-orthogonal matrices outperform the i.i.d. Gaussian ones with both regularization, although the gap is less significant with the $\ell_{1}$ penalty. \\
\quad \\
\subsubsection{Discussion}
Several non-trivial effects are observed when studying the interplay between eigenvalue distribution of the design matrix, loss function, regularization and structure of the underlying teacher vector. Looking for analytical simplifications of the fixed point equations from Theorem \ref{main_th} in specific cases would be interesting to understand how the key quantities interact and lead, for example, to the collapsing observed in $\ell_{2}$-penalized problems. This further motivates the use of these equations to determine reconstruction limits of generalized-linear modeling. Some examples include limits of sparse recovery for different types of measurement matrices, or finding if optimal losses can be designed to achieve performances close to Bayes optimal errors.
\section{Sketch of proof of Theorem \ref{main_th}}
Our proof follows an approach pioneered in \cite{bayati2011lasso} where the LASSO risk for i.i.d. Gaussian matrices is determined. The  idea is to build a sequence of iterates that provably converges towards the estimator $\mathbf{\hat{x}}$, while also knowing the statistical properties of those iterates through a set of equations. We must therefore concern ourselves with three fundamental aspects:

\begin{enumerate}[label=(\roman*),itemsep=0mm,nosep,leftmargin=0.6cm]
\item construct a sequence of iterates with a rigorous statistical characterization that matches their equations of Theorem \ref{main_th} at the fixed point,
\item verify that the sequence's fixed point corresponds to the estimator $\hat{\mathbf{x}}$,
\item check that this sequence is provably convergent, otherwise the iterates might drift off on a diverging trajectory, and the fixed point would never be reached. We thus make sure the statistical characterization indeed applies to the point of interest $\hat{\mathbf{x}}$.
\end{enumerate}
In short, we have a sequence of estimates $(\mathbf{x}_{k})_{k \in \mathbb{N}}$ taking values in $\mathbb{R}^{N}$, and their exact asymptotic (in N) distribution for any $k > 0$. To show that these statistics extend to $\hat{\mathbf{x}}$, we need to show that $\lim_{k \to \infty} \mathbf{x}_{k} = \hat{\mathbf{x}}$. To do so, we need the sequence to converge (i.e. point iii), and its fixed point to be $\mathbf{\hat{x}}$ (point ii). As indicated in the introduction, we will use an instance of the 2-layer MLVAMP algorithm to construct this sequence. Note that, for the sake of brevity, we do not verify that limiting points of 2-layer MLVAMP trajectories $\lim_{k \to \infty}\mathbf{x}_{k}$ converge empirically to the Gaussian distribution prescribed by the state evolution equations. This point is treated explicitly in \cite{emami2020generalization}. \\

The following lemma establishes the link between the state evolution equations and our main theorem.
\vspace{0.2cm}
\begin{lemma}(Fixed point of 2-layer MLVAMP state evolution equations)
\label{fixed_point_se_match}
  The state evolution equations of 2-layer MLVAMP from \cite{fletcher2018inference}, reminded in appendix \ref{SE_append}, match the equations of Theorem \ref{main_th} at their fixed point.
\end{lemma}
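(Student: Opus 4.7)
The lemma is a direct algebraic verification rather than a conceptually new statement: one writes down the state evolution (SE) recursion attached to Algorithm \ref{GVAMP}, imposes stationarity (drops all superscripts $(t)$, $(t{+}1)$), and checks that the resulting coupled scalar system coincides line by line with \eqref{thm1-equations}. The plan is therefore to take the SE from appendix \ref{SE_append} and the four blocks of Algorithm \ref{GVAMP} in turn, and for each block translate the vector update into its scalar asymptotic counterpart using the Gaussian representations \eqref{SE-assumption_main}.

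First I would treat the two denoising blocks. Using \eqref{SE-assumption_main} at the fixed point, $\hat{Q}_{1x}\mathbf{h}_{1x} - \hat{m}_{1x}\mathbf{x}_0 \stackrel{PL(2)}{=} \sqrt{\hat{\chi}_{1x}}\,\xi_{1x}$, so component-wise $\mathbf{h}_{1x}$ concentrates on $H_x = (\hat{m}_{1x}x_0 + \sqrt{\hat{\chi}_{1x}}\,\xi_{1x})/\hat{Q}_{1x}$ and \eqref{prox-f} reduces to $\eta_{f/\hat{Q}_{1x}}(H_x)$. The average derivative in Algorithm \ref{GVAMP} then gives $\chi_{1x} = \mathbb{E}[\eta'_{f/\hat{Q}_{1x}}(H_x)]/\hat{Q}_{1x}$. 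Plugging this into the algorithmic updates for $\hat{Q}_{2x}$, $\hat{m}_{2x}$ and $\hat{\chi}_{2x}$ (the latter two obtained by projecting the vector $\mathbf{h}_{2x} = (\hat{\mathbf{x}}_1/\chi_{1x} - \hat{Q}_{1x}\mathbf{h}_{1x})/\hat{Q}_{2x}$ onto $\mathbf{x}_0$ and computing its squared norm in the PL(2) sense), and using $\chi_x = (\hat{Q}_{1x}+\hat{Q}_{2x})^{-1}$, yields exactly the first column of \eqref{thm1-equations} for the $x$ variables. The $z$ denoiser is identical up to replacing $f$ by $g(\cdot,y)$, $x_0$ by $z_0$, $\xi_{1x}$ by $\xi_{1z}$, and with the prefactor $\alpha$ naturally appearing through the dimension ratio between the $z$ and $x$ spaces.

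Second, I would treat the LMMSE blocks \eqref{g1+}–\eqref{g1-}. Using the SVD $\mathbf{F} = \mathbf{UDV}^T$ and the rotational-invariance assumption, the matrix inversion diagonalizes in the $(\mathbf{V},\mathbf{U})$ basis. Combined with the fact, built into \eqref{SE-assumption_main}, that $\mathbf{V}^T(\hat{Q}_{2x}\mathbf{h}_{2x}-\hat{m}_{2x}\mathbf{x}_0)$ and $\mathbf{U}^T(\hat{Q}_{1z}\mathbf{h}_{1z}-\hat{m}_{1z}\mathbf{z}_0)$ are asymptotically i.i.d. Gaussian and independent of $\mathbf{D}$, every normalized trace of the form $\tfrac{1}{N}\mathrm{Tr}[(\hat{Q}_{2x}\mathrm{Id}+\hat{Q}_{2z}\mathbf{F}^T\mathbf{F})^{-1}\Phi(\mathbf{F}^T\mathbf{F})]$ converges to $\mathbb{E}_\lambda[\Phi(\lambda)/(\hat{Q}_{2x}+\lambda\hat{Q}_{2z})]$. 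Reading the algorithmic identities this way yields $\chi_{2x} = \mathbb{E}[(\hat{Q}_{2x}+\lambda\hat{Q}_{2z})^{-1}]$, hence the expression for $\hat{Q}_{1x}$; then projecting $\hat{\mathbf{x}}_2$ onto $\mathbf{x}_0$ and computing its PL(2) squared norm separates into (i) a deterministic $\lambda$-dependent contribution matching the $\hat{m}_{1x}$ formula and (ii) a noise contribution matching the $\hat{\chi}_{1x}$ formula. The $z$-side LMMSE update \eqref{g1+} is treated identically, except that the additional multiplication by $\mathbf{F}$ produces traces against $\lambda$ instead of $1$, and the dimensional factor $M/N = \alpha$ surfaces exactly where it does in the second column of \eqref{thm1-equations}.

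The main obstacle will be the careful bookkeeping in the LMMSE step: one must track how the Gaussian pieces of $\mathbf{h}_{2x}$ and $\mathbf{h}_{2z}$ propagate through $(\hat{Q}_{2x}\mathrm{Id}+\hat{Q}_{2z}\mathbf{F}^T\mathbf{F})^{-1}$, separating the signal component (proportional to $\mathbf{x}_0$ or $\mathbf{z}_0$) from the orthogonal noise component, and recognising that the cross terms vanish by the independence of $\mathbf{U},\mathbf{V}$ from $\mathbf{x}_0$ and from the $\xi$'s. Once this decomposition is made, the remaining calculation is a term-by-term matching of scalar expectations against $p_\lambda$, with the prefactors $\chi_x$, $\chi_z$, $\alpha$, $\rho_x$, $\rho_z$ falling into place. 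Crucially, no new analytic input beyond the SE and Haar invariance is needed; the lemma is essentially the statement that the SE of Algorithm \ref{GVAMP} is exactly \eqref{thm1-equations} rewritten.
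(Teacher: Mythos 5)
Your computational core is right and coincides with what the paper does in the first part of appendix \ref{SE_append}: assuming the Gaussian representations \eqref{SE-assumption_main}, each block of Algorithm \ref{GVAMP} scalarizes (denoisers via $H_x$, $H_z$ and Stein's lemma for the variance terms; LMMSE blocks via diagonalization in the SVD basis so that normalized traces become expectations over $p_\lambda$), and imposing stationarity recovers \eqref{thm1-equations}. This is exactly the heuristic derivation of \eqref{Kaba-fullSE} following \cite{takahashi2020macroscopic}, and the paper itself notes that its fixed point "easily" gives \eqref{thm1-equations}.

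The gap is that this is not yet a proof of the lemma as stated, because the state evolution of \cite{fletcher2018inference} is not \eqref{SE-assumption_main}: it is the recursion \eqref{rigorous_SE} in the variables $\gamma^\pm,\alpha^\pm,\tau^\pm,Q^\pm,P^\pm$, in which the ``plus'' perturbations $\mathbf{Q}_0^+,\mathbf{Q}_1^+$ carry a nonzero correlation $c_0^+, c_1^+$ with the ground truth, rather than being independent Gaussian noise around a rescaled signal. The substantive content of the paper's proof is precisely the bridge you take for granted: (i) verifying the uniform-Lipschitz and derivative-boundedness hypotheses needed for the rigorous SE to apply (appendix \ref{SE_assumptions}); (ii) decomposing $\mathbf{Q}_k^+ = (c_k^+/\tau_k^0)\mathbf{Q}_k^0 + \tilde{\mathbf{Q}}_k^+$ and invoking Lemma 5 of \cite{rangan2019vector} to show that $\mathbf{V}^T\tilde{\mathbf{Q}}_0^+$ and $\mathbf{U}^T\tilde{\mathbf{Q}}_1^+$ converge empirically to independent Gaussians, which is what turns \eqref{r-decorrelated} into a rigorous version of \eqref{SE-assumption_main} via the dictionaries \eqref{dictionary1}--\eqref{dictionary2}; and (iii) the term-by-term matching at the fixed point using relations such as $1-\alpha_0^+=\alpha_0^-$ and $\gamma_0^+\alpha_0^+=\gamma_0^-\alpha_0^-$, Stein's lemma, and the identity relating expectations under $p_\nu$ and $p_\mu$ that produces the $\alpha$ and $\max(0,1-\alpha)\delta_0$ factors. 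Without step (ii) your argument establishes only that the \emph{heuristic} SE matches \eqref{thm1-equations}, which does not discharge the lemma's role of importing a rigorous characterization of the iterates into the proof of Theorem \ref{main_th}.
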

\begin{proof}See appendix \ref{SE_append}.\end{proof}
\quad \\
This confirms that 2-layer MLVAMP is a good choice to design the sequences that we seek. 
We know that the iterates of 2-layer MLVAMP can be characterized by state evolution equations which correspond, at their fixed point, to the equations of Theorem \ref{main_th} by virtue of Lemma \ref{fixed_point_se_match}. The necessary assumptions for the state evolution equations to hold are verified in appendix \ref{SE_assumptions}. We must now show that the estimator of interest defined by (\ref{teacher}) and (\ref{student}) can be reached using 2-layer MLVAMP. We thus continue with point (ii).
\vspace{0.2cm}
\begin{lemma}(Fixed point of 2-layer MLVAMP)
  \label{VAMP_fixed}
  The fixed point of algorithm \eqref{GVAMP} matches the optimality condition of the unconstrained convex problem Eq.\eqref{student} 
\end{lemma}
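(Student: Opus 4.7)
The plan is to show that any fixed point of Algorithm~\ref{GVAMP} yields a pair $\hat{x}:=\hat{x}_1=\hat{x}_2$, $\hat{z}:=\hat{z}_1=\hat{z}_2$ (the identifications $\hat{x}_1=\hat{x}_2$ and $\hat{z}_1=\hat{z}_2$ at stationarity being shown in \cite{pandit2020inference}) satisfying the first-order optimality condition $0\in\partial f(\hat{x})+\mathbf{F}^\top\partial g(\mathbf{F}\hat{x},\mathbf{y})$. By convexity of $f$ and $g$, this is equivalent to $\hat{x}$ being a minimizer of~\eqref{student}. The argument is purely algebraic and has three ingredients: (a) two subgradient inclusions coming from the proximal denoisers $g_{1x},g_{1z}$; (b) an Onsager-type identity relating the $h_1$'s to the $h_2$'s at the fixed point; (c) the LMMSE normal equation, which couples the two subgradient inclusions through the linear map $\mathbf{F}$.

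For (a), the first-order optimality conditions of the proximal problems~\eqref{prox-f} and~\eqref{prox-g} directly yield
$$\hat{Q}_{1x}(h_{1x}-\hat{x})\in\partial f(\hat{x}),\qquad \hat{Q}_{1z}(h_{1z}-\hat{z})\in\partial g(\hat{z},\mathbf{y}).$$
For (b), multiplying the $h_{2x}$ update by $\hat{Q}_{2x}$ and substituting $1/\chi_{1x}=\hat{Q}_{1x}+\hat{Q}_{2x}$ (from the parameter rule $\hat{Q}_{2x}=1/\chi_{1x}-\hat{Q}_{1x}$) produces the Onsager cancellation $\hat{Q}_{2x}(\hat{x}-h_{2x})=\hat{Q}_{1x}(h_{1x}-\hat{x})$, and identically for $z$. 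Combined with (a), this yields $\hat{Q}_{2x}(\hat{x}-h_{2x})\in\partial f(\hat{x})$ and $\hat{Q}_{2z}(\hat{z}-h_{2z})\in\partial g(\hat{z},\mathbf{y})$. For (c), rearranging the LMMSE update~\eqref{g1-} for $\hat{x}_2$ gives the normal equation
$$\hat{Q}_{2x}(\hat{x}-h_{2x})+\hat{Q}_{2z}\mathbf{F}^\top(\mathbf{F}\hat{x}-h_{2z})=0.$$
Since $\mathbf{F}\hat{x}=\hat{z}$ (immediate from comparing~\eqref{g1+} and~\eqref{g1-}), substituting the two subgradient inclusions from (b) produces $0\in\partial f(\hat{x})+\mathbf{F}^\top\partial g(\mathbf{F}\hat{x},\mathbf{y})$, which is exactly the KKT condition for~\eqref{student}.

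The main obstacle is minor: beyond this bookkeeping, one has to invoke the stationarity identities $\hat{x}_1=\hat{x}_2$ and $\hat{z}_1=\hat{z}_2$ from \cite{pandit2020inference}, and note that distributing $\partial$ through the affine composition $z=\mathbf{F}x$ requires no constraint qualification (so subdifferential calculus applies cleanly). The genuine content of the global strategy is therefore not this lemma but points (i) and (iii) of the proof plan---characterizing the statistics of the iterates via state evolution and establishing convergence of the algorithm to its fixed point---which are handled elsewhere in the paper.
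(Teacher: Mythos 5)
Your proof is correct and follows essentially the same route as the paper's: invert the proximal updates to obtain subgradient inclusions, use the fixed-point relation $1/\chi_{1x}=\hat{Q}_{1x}+\hat{Q}_{2x}$ (and its $z$-analogue) to transfer them from the $\mathbf{h}_1$'s to the $\mathbf{h}_2$'s, and close the loop with the LMMSE normal equation and $\hat{\mathbf{z}}=\mathbf{F}\hat{\mathbf{x}}$. Your packaging of the substitution chain into an explicit Onsager cancellation plus normal equation is a slightly cleaner presentation of the same algebra, and your remark on subdifferential calculus (only the inclusion $\partial f+\mathbf{F}^{\top}\partial g(\mathbf{F}\cdot)\subseteq\partial(f+g(\mathbf{F}\cdot))$ is needed, which holds without constraint qualification) is a valid point the paper leaves implicit.
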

\begin{proof} See appendix \ref{fixed_point_proof}. \end{proof}
\quad \\
This part is a consequence of the structure of the algorithm and properties of proximal operators. We now move to point (iii) and seek to characterize the convergence properties of 2-layer MLVAMP. Instead of directly tackling the convergence of 2-layer MLVAMP on any convex GLM, we take a detour and focus on a constrained problem, where functions $f$ and $g$ are augmented by a $\ell_2$ norm with ridge parameters $\lambda_2$, $\tilde{\lambda}_2$. The called on intuition is that the algorithm will be more likely to converge in a strongly convex problem. We start by showing the convergence of MLVAMP in the constrained strongly convex setting, for values of $\lambda_{2}$ larger than a certain threshold, and any strictly positive $\tilde{\lambda}_{2}$.
\begin{lemma}(Linear convergence of 2-layer MLVAMP for strongly convex problems)
\label{conv_lemma}
Assume $f$ and $g$ are twice differentiable. Define the constrained problem
\begin{equation}
\label{smooth_student}
\mathbf{\hat{x}}(\lambda_{2},\tilde{\lambda}_{2}) = \argmin_{\mathbf{x} \in \mathbb{R}^{N}} \left\lbrace \tilde{g}(\mathbf{F}\mathbf{x},\mathbf{y})+\tilde{f}(\mathbf{x})\right\rbrace
\end{equation}
where $\tilde{f}(\mathbf{x}) = f(\mathbf{x})+\frac{\lambda_{2}}{2}\norm{\mathbf{x}}_{2}^{2}$ and $\tilde{g}(\mathbf{x}, \mathbf{y}) = g(\mathbf{x}, \mathbf{y})+\frac{\tilde{\lambda}_{2}}{2}\norm{\mathbf{x}}_{2}^{2}$.
Consider 2-layer MLVAMP applied to find (\ref{smooth_student}), from which we extract at each iteration the vector ${\mathbf{h}^{(t)} = \left[\mathbf{h}_{2z}^{(t)}, \mathbf{h}_{1x}^{(t)}\right]^{T}}$. Let $\mathbf{h^*}$ be its value at the fixed point of algorithm \eqref{GVAMP}. We then have that, for any $\tilde{\lambda_{2}}>0$, there exists a value $\lambda_{2}^{*}$ such that, for any $\lambda_{2}>\lambda_{2}^{*}$, there exists a strictly positive constant $c$ verifying $0<c<\lambda_{2}$, such that for any $t\in \mathbb{N}$:
  \begin{align}
  \norm{\mathbf{h}^{(t)}-\mathbf{h^{*}}}_{2}^{2} \leqslant \left(\frac{c}{\lambda_{2}}\right)^{t}\norm{\mathbf{h}^{(0)}-\mathbf{h^{*}}}_{2}^{2},\end{align}
The convergence of $\mathbf{h}^{(t)}$ implies that estimators $\mathbf{\hat{x}}_1^{(t)}$ and $\mathbf{\hat{x}}_2^{(t)}$ returned by 2-layer MLVAMP also converge to the desired $\mathbf{\hat{x}}(\lambda_2, \tilde{\lambda}_2)$, i.e., under the conditions listed above
\begin{align}
    &\lim_{t \to \infty} \norm{\hat{\mathbf{x}}^{(t)}-\mathbf{\hat{x}}(\lambda_2, \tilde{\lambda}_2)}_{2}^{2} = 0.
\end{align}
\end{lemma}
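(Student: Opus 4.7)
The plan is to recast the 2-layer MLVAMP iteration as a discrete-time dynamical system on the reduced state $\mathbf{h}^{(t)} = [\mathbf{h}_{2z}^{(t)}, \mathbf{h}_{1x}^{(t)}]^\top$ (together with the scalar adaptive parameters $\hat{Q}_{1x}^{(t)}, \hat{Q}_{2z}^{(t)}$) and to show that, for $\lambda_{2}$ large enough, the one-step map $\Psi : \mathbf{h}^{(t)} \mapsto \mathbf{h}^{(t+1)}$ obtained by composing the four half-updates of Algorithm \ref{GVAMP} is a strict contraction with squared Lipschitz constant at most $c/\lambda_{2}$, for some $c$ depending on $\tilde{\lambda}_{2}$ and on the support of the spectrum of $\mathbf{F}^\top \mathbf{F}$. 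Linear convergence of $\mathbf{h}^{(t)}$ to $\mathbf{h}^{*}$ then follows from Banach's fixed-point theorem, and convergence of $\hat{\mathbf{x}}_1^{(t)}, \hat{\mathbf{x}}_2^{(t)}$ to $\hat{\mathbf{x}}(\lambda_{2}, \tilde{\lambda}_{2})$ from continuity of the denoisers together with Lemma \ref{VAMP_fixed}.

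The first step is to show that the adaptive scalar parameters stay in a compact ``good'' region along the trajectory. Because $\tilde{f}$ is $\lambda_{2}$-strongly convex and twice differentiable, a direct computation of $\partial \mathrm{Prox}_{\tilde{f}/\hat{Q}_{1x}^{(t)}}$ shows its spectrum is contained in $\bigl[0, \hat{Q}_{1x}^{(t)}/(\hat{Q}_{1x}^{(t)}+\lambda_{2})\bigr]$, hence $\chi_{1x}^{(t)} \leq 1/(\hat{Q}_{1x}^{(t)}+\lambda_{2})$ and $\hat{Q}_{2x}^{(t)} \geq \lambda_{2}$ uniformly in $t$; the analogous argument for $\tilde{g}$ yields $\hat{Q}_{2z}^{(t)} \geq \tilde{\lambda}_{2}$. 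These bounds control the LMMSE matrix inverse $(\hat{Q}_{2z}^{(t)}\mathbf{F}^\top\mathbf{F} + \hat{Q}_{2x}^{(t)} \mathbf{I})^{-1}$ uniformly in $t$ via the compactly supported spectrum of $\mathbf{F}^\top\mathbf{F}$, and propagating through the reflection updates also pins $\hat{Q}_{1x}^{(t+1)}$ and $\hat{Q}_{1z}^{(t+1)}$ inside bounded intervals.

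The core of the argument is bounding the Jacobian of $\Psi$ on this invariant region. I decompose $\Psi$ into its four half-updates and rewrite each proximal block as a Peaceman--Rachford-style reflection $\mathbf{h}_{\text{new}} = \hat{\mathbf{u}} - (\hat{Q}_{\text{old}}/\hat{Q}_{\text{new}})(\mathbf{h}_{\text{old}} - \hat{\mathbf{u}})$, whose Jacobian is expressible through the Hessian of the Moreau envelope of $\tilde{f}$ (respectively $\tilde{g}$); the strong convexity then produces a block contraction factor of the form $1 - \Theta(\lambda_{2}^{-1})$ (respectively $1 - \Theta(\tilde{\lambda}_{2}^{-1})$). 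The two LMMSE half-updates are affine in $\mathbf{h}$ with Jacobians that are explicit rational functions of $\mathbf{F}, \mathbf{F}^\top$ and of the scalar parameters, so their operator norms admit uniform bounds via the spectral support of $\mathbf{F}^\top\mathbf{F}$. Composing the four and collecting terms yields $\|D\Psi\|_{2}^{2} \leq c/\lambda_{2}$; taking $\lambda_{2}^{*} = c$ gives strict contractivity for $\lambda_{2} > \lambda_{2}^{*}$.

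Given the contraction, $\|\mathbf{h}^{(t+1)} - \mathbf{h}^{*}\|_2^2 \leq (c/\lambda_{2})\|\mathbf{h}^{(t)} - \mathbf{h}^{*}\|_2^2$ and iterating delivers the stated geometric bound. Since $\hat{\mathbf{x}}_1^{(t)} = g_{1x}(\mathbf{h}_{1x}^{(t)}, \hat{Q}_{1x}^{(t)})$ is jointly Lipschitz on the bounded parameter region, and Lemma \ref{VAMP_fixed} identifies the MLVAMP fixed point with $\hat{\mathbf{x}}(\lambda_{2}, \tilde{\lambda}_{2})$, we get $\hat{\mathbf{x}}_1^{(t)} \to \hat{\mathbf{x}}(\lambda_{2}, \tilde{\lambda}_{2})$, and the same argument handles $\hat{\mathbf{x}}_2^{(t)}$. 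The main technical obstacle is the Jacobian computation itself: each scalar $\hat{Q}^{(t)}$ depends implicitly on $\mathbf{h}^{(t)}$ through averaged derivatives of the proximal operator, so rigorously differentiating $\Psi$ requires tracking these implicit dependencies and verifying that the resulting correction terms do not spoil the $1/\lambda_{2}$ decay of the leading contraction coefficient.
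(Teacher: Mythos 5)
Your overall strategy --- recasting the iteration as a dynamical system on $\mathbf{h}^{(t)}$, bounding the $\hat{Q}$ parameters through the Hessians of $\tilde f,\tilde g$ and the spectrum of $\mathbf{F}^{\top}\mathbf{F}$, and extracting a contraction factor $c/\lambda_2$ from the composition of proximal and LMMSE blocks --- is the same skeleton as the paper's, which implements it via the time-varying Lyapunov/LMI framework of Lessard et al.\ rather than via a Jacobian bound and Banach's theorem. However, two points in your sketch are genuine gaps rather than routine verifications. First, the obstacle you flag at the end is fatal to the route you chose: the one-step map $\Psi$ is not a composition of the four half-updates with frozen parameters, because $\hat Q_{2x}^{(t)}=1/\chi_{1x}^{(t)}-\hat Q_{1x}^{(t)}$ with $\chi_{1x}^{(t)}=\langle \mathrm{Prox}'_{\tilde f/\hat Q_{1x}^{(t)}}(\mathbf{h}_{1x}^{(t)})\rangle/\hat Q_{1x}^{(t)}$, so differentiating $\Psi$ in $\mathbf{h}$ requires differentiating $\mathrm{Prox}' = (\mathrm{Id}+\gamma \mathcal H_{\tilde f})^{-1}$, i.e.\ third derivatives of $f$ and $g$ --- which the lemma does not assume --- and even granting extra smoothness you give no bound on the resulting correction terms. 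The paper avoids this entirely by treating the $\hat Q^{(t)}$'s as an exogenous time-varying sequence, establishing uniform bounds on them at every $t$ (Appendix \ref{var_append}), and checking a linear matrix inequality at each step with the parameters frozen (Proposition \ref{find_Lyap}); this only requires Lipschitz constants of the operators at fixed parameters, hence only twice differentiability.

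Second, the quantitative accounting is off where it matters most. The per-block factor you attribute to the proximal step, $1-\Theta(\lambda_2^{-1})$, tends to $1$ as $\lambda_2\to\infty$ and would give slower, not faster, convergence for stronger regularization, contradicting the claimed rate $(c/\lambda_2)^t$. The actual mechanism is that $\mathrm{Prox}_{\tilde f/\hat Q_{1x}}$ is $\hat Q_{1x}/(\hat Q_{1x}+\lambda_2)=O(1/\lambda_2)$-Lipschitz, while several of the $z$-side quantities in the composition \emph{grow} linearly in $\lambda_2$ (e.g.\ $\hat Q_{1z}\lesssim \hat Q_{2x}/\lambda_{\min}(\mathbf F\mathbf F^{\top})$ and $\omega_2^2\lambda_{\max}(\mathbf W_4^{\top}\mathbf W_4)\lesssim \hat Q_{1z}b_2$), so the $1/\lambda_2$ rate only survives after a careful balancing of these competing scalings. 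That balancing is precisely the content of the Schur-complement analysis in Appendix \ref{conv_proof}; asserting $\norm{D\Psi}_2^2\leqslant c/\lambda_2$ without it leaves the core of the lemma unproved.
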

\begin{proof}
See appendix \ref{conv_proof}.
\end{proof} 
\quad \\
For a loss function $\tilde{g}$ with any non-zero strong convexity constant, and a regularization $\tilde{f}$ with a sufficiently strong convexity, 2-layer MLVAMP converges linearly towards its unique fixed point. Note that this convergence result is independent from the dimension. We elaborate on this lemma in the next section. An immediate consequence is the following lemma, which claims that Theorem \ref{main_th} holds when 2-layer MLVAMP converges. Since this result does not rely on an analytic continuation, the assumptions on the concentration of PL2 observables of $\hat{\mathbf{x}}$, given by the state evolution property, and approximation of the cost function by analytic functions with fast decaying higher order derivatives are not required. The result can also 
be stated for any PL2 observable, with no restriction on its derivability and decay of higher order derivatives. We summarize the necessary assumptions in the following list:
\begin{assumption} 
\label{str_conv_assum}
\begin{enumerate}[label=(\alph*)]
    \item [~]
    \item the functions $f$ and $g$ are proper, closed, convex and separable functions.
    \item the cost function $g(\mathbf{F}.,\mathbf{y})+f(.)$ is coercive, i.e. $\lim_{\norm{\mathbf{x}}\to \infty}g(\mathbf{F}\mathbf{x},\mathbf{y})+f(\mathbf{x}) = +\infty$.
    \item there exists a constant $B_{1}$ such that $\frac{1}{N}\norm{\hat{\mathbf{x}}}^{2}_{2}\leqslant B_{1}$ almost surely as $N \to \infty$.
    \item for any $\mathbf{x} \in \mbox{dom}(f)$ and any $\mathbf{x}' \in \partial f(\mathbf{x})$, there exists a constant $C$ such that $\norm{\mathbf{x}'}_{2}\leqslant C(1+\norm{\mathbf{x}}_{2})$. The same holds for $g$ on its domain. 
    \item the empirical distributions of the underlying truth $\mathbf{x}_{0}$, eigenvalues of $\mathbf{F}^T \mathbf{F}$, and noise vector $w_{0}$, respectively converge empirically with second order moments, as defined in appendix \ref{appendix:analysis_framework}, to independent scalar random variables $x_{0},w_{0},\lambda$ with distributions $p_{x_{0}}$, $p_{\lambda}$, $p_{w_{0}}$. We assume that the distribution $p_{\lambda}$ is not all-zero and has compact support. 
    \item the design matrix $\mathbf{F} = \mathbf{U}\mathbf{D}\mathbf{V}^{\top} \in \mathbb{R}^{M \times N}$ is rotationally invariant, as defined in the introduction, where the elements of the Haar distributed matrices $\mathbf{U},\mathbf{V}$ are independent of the elements of the ground truth vector $\mathbf{x}_{0}$, noise $\boldsymbol{\omega}_{0}$ and elements of $\mathbf{D}$.
    \item the solution to the set of fixed point equations \eqref{thm1-equations} exists and is unique for any convex functions $f,g$ verifying the 
    \item finally assume that $M,N \to \infty$ with fixed ratio $\alpha = M/N$.
\end{enumerate}
\end{assumption}
\begin{lemma}(Asymptotic error for the twice differentiable, sufficiently strongly convex problem) \\
\label{main_lemma}
Consider the strongly convex minimization problem with twice differentiable $f$ and $g$ \eqref{smooth_student}.
Under the set of assumptions~\ref{str_conv_assum}, for any $\tilde{\lambda}_{2}>0$, there exists a $\lambda_{2}^{*}$ such that, for any $\lambda_{2}>\lambda_{2}^{*}$, Then, for any pseudo-Lipschitz function of order 2 $\phi$, the following holds
:
\begin{align}
    &\lim_{N \to \infty} \frac{1}{N}\sum_{i=1}^{N}\phi(x_{0,i},\hat{x}_{i}) \stackrel{a.s.} = \mathbb{E}[\phi(x_{0},\mbox{Prox}_{f/\hat{Q}_{1x}^{(t)}}(H_{x}))]\\ 
    &\lim_{M \to \infty}\frac{1}{M}\sum_{i=1}^{M}\phi(z_{0,i},\hat{z}_{i}) \stackrel{a.s.} = \mathbb{E}[\phi(z_{0},\mbox{Prox}_{f/\hat{Q}_{1z}^{(t)}}(H_{z}))]
\end{align}
where the scalars $\hat{Q}_{1x},\hat{Q}_{1z}$ and the random variables $H_{x},H_{z}$ are defined as in Theorem \ref{main_th}.
\end{lemma}
\begin{proof}
Using the result from Lemma \ref{conv_lemma}, we have $\lim_{t\to \infty}\lim_{N \to \infty}\frac{1}{N}\norm{\mathbf{x}^{(t)}-\mathbf{\hat{x}}(\lambda_2, \tilde{\lambda}_2)}_{2}^{2} = 0$. As proven in \cite{emami2020generalization}, the state evolution parameters will converge to those of the fixed point of the state evolution equations along a converging trajectory of 2-layer MLVAMP. Using the assumption on the bounded averaged norm of $\hat{\mathbf{x}}$, the state evolution equations to show that the averaged norm of the iterates are bounded along a converging trajectory, and the state evolution equations to obtain the exact asymptotics of each iterate along the converging trajectory, an identical argument to that of the proof of Theorem 1.5 from \cite{bayati2011dynamics} gives Lemma \ref{main_lemma}.
\end{proof}
\quad \\
We are now left to prove Theorem \ref{main_th}, for any range of parameters $(\lambda_2, \tilde{\lambda}_2)$. $\tilde{\lambda}_{2}$ can already be chosen arbitrarily small. This means we need to relax the threshold value on $\lambda_{2}$ for the validity of the scalar quantities involved in Theorem 1. To do so, we start by introducing another modification of the original problem, where the objective functions are assumed to be real analytic. Lemma \ref{main_lemma} naturally holds for real analytic convex functions. Proving Theorem \ref{main_th} on the real analytic problem then boils down to performing an analytic continuation on the $\lambda_{2}$ parameter, and is detailed in Appendix \ref{analytic_continuation}. We thus have the following intermediate result :

\begin{lemma}(Asymptotics of the real analytic problem)
\label{analytic_error_lemma}
Consider assumption \ref{main_assum} is verified. Suppose additionally that $f$ and $g$ are real analytic. Then Theorem 1 holds for any $\tilde{\lambda}_{2} > 0$ and any $\lambda_{2}>0$.
\end{lemma}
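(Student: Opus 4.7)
The plan is to prove Lemma \ref{analytic_error_lemma} by analytic continuation in the regularization parameter $\lambda_2$. From Lemma \ref{main_lemma}, for any fixed $\tilde{\lambda}_2 > 0$, the conclusion of Theorem \ref{main_th} holds on the interval $(\lambda_2^*, \infty)$. I would extend this to the full half-line $(0, \infty)$ by showing that both sides of the asymptotic identity, viewed as functions of $\lambda_2$, are restrictions of real analytic functions that agree on an open set, so by the identity theorem they must agree throughout a common connected domain containing $(0, \infty)$.

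First, I would show that the finite-$N$ quantity $\Phi_N(\lambda_2) = \frac{1}{N}\sum_{i=1}^N \phi(x_{0,i}, \hat{x}_i(\lambda_2))$ is real analytic in $\lambda_2$ on $(0, \infty)$. For $\lambda_2 > 0$, the perturbed problem \eqref{smooth_student} is strongly convex with a unique minimizer $\hat{\mathbf{x}}(\lambda_2)$, and its first-order optimality condition $\nabla \tilde f(\mathbf{x}) + \mathbf{F}^T \nabla_1 \tilde g(\mathbf{F}\mathbf{x}, \mathbf{y}) = 0$ is a real analytic equation in $(\mathbf{x}, \lambda_2)$ with Hessian uniformly bounded below by $\lambda_2 \mathrm{Id}$. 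The analytic implicit function theorem then yields that $\hat{\mathbf{x}}(\lambda_2)$ is real analytic on $(0, \infty)$ and admits an analytic extension to a complex tubular neighborhood on which the real part of the Hessian stays positive definite. Composition with the pseudo-Lipschitz $\phi$ (which is assumed analytic in the regularity condition) gives analyticity of $\Phi_N$; the growth bounds in Assumption \ref{main_assum} together with the a.s. bound on $\tfrac{1}{N}\|\hat{\mathbf{x}}\|^2$ yield that the family $\{\Phi_N\}$ is locally uniformly bounded on this complex neighborhood.

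Next, I would show that $\Phi_\infty(\lambda_2) = \mathbb{E}[\phi(x_0, \eta_{\tilde{f}/\hat{Q}_{1x}^*}(H_x))]$, built from the fixed point of the scalar equations \eqref{thm1-equations}, is likewise real analytic in $\lambda_2$. Since $f$ is real analytic, the scalar proximal operator associated with $\tilde{f} = f + \tfrac{\lambda_2}{2}(\cdot)^2$ is real analytic in its argument and in $(\hat{Q}, \lambda_2)$ by the same implicit function argument applied to the one-dimensional strongly convex problem. Expectations against $p_{x_0}$, $p_{\omega_0}$, $p_\lambda$ preserve analyticity via dominated convergence and the growth conditions in Assumption \ref{main_assum}. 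By the uniqueness assumption on the fixed point of \eqref{thm1-equations} and an implicit function argument applied to the full fixed point system (whose Jacobian is nonsingular at the unique solution for $\lambda_2 > \lambda_2^*$ where matching with the finite-$N$ quantities already holds), the solution vector $(\hat{Q}_{1x}^*, \hat{Q}_{1z}^*, \hat{m}_{1x}^*, \hat{m}_{1z}^*, \hat{\chi}_{1x}^*, \hat{\chi}_{1z}^*)$ extends analytically to a complex neighborhood of $(0, \infty)$, and so does $\Phi_\infty$.

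Finally, I would combine these two facts. Lemma \ref{main_lemma} gives $\Phi_N(\lambda_2) \to \Phi_\infty(\lambda_2)$ almost surely for every $\lambda_2 \in (\lambda_2^*, \infty)$. Since $\{\Phi_N\}$ is a locally uniformly bounded sequence of analytic functions on a common complex neighborhood of $(0, \infty)$, Vitali's convergence theorem implies that the convergence extends to the whole neighborhood, and the analytic limit coincides with $\Phi_\infty$ on the open subset $(\lambda_2^*, \infty)$. By the identity theorem for analytic functions, equality persists on the entire connected complex neighborhood, and in particular on all of $(0, \infty)$. The main obstacle I expect lies in the control of the analytic continuation as $\lambda_2 \to 0$: the strong-convexity margin that protects the implicit function constructions and the Vitali bound degrades, so one must argue on a growing exhaustion of compact subsets $K \subset (0, \infty)$, choosing for each $K$ a tubular complex neighborhood on which the Hessian's real part stays uniformly positive definite and the $N$-uniform bounds hold, then patching the identity-theorem conclusions together. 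Once this is done, restricting to the real axis and sending $\tilde{\lambda}_2$ to an arbitrary positive value gives Lemma \ref{analytic_error_lemma}.
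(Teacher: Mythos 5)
Your overall strategy is the same as the paper's: start from Lemma~\ref{main_lemma} on $(\lambda_{2}^{*},\infty)$, show that both the empirical observable and the quantity defined by the fixed point of \eqref{thm1-equations} are real analytic in $\lambda_{2}$ on $(0,\infty)$, and conclude by uniqueness of analytic continuation. The difference is in how analyticity of the $N\to\infty$ limit is established. You extend everything to a complex tubular neighborhood and invoke Vitali's theorem to pass the limit through; the paper stays on the real line and instead proves, uniformly in $N$, factorial bounds $\abs{\psi^{(j)}(\lambda_{2})}\leqslant C\,j!/R^{j}$ on all derivatives of the finite-$N$ averages (Lemma~\ref{lemma:der_bound} via an induction on Leibniz's rule for the implicit equation defining $\hat{\mathbf{x}}(\lambda_{2})$, then Fa\`a di Bruno's formula, Proposition~\ref{prop_fdb}, for the composition with $\phi$), and uses the real-analytic characterization of Proposition~\ref{analytic_charac}. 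The two routes buy the same thing; yours is arguably cleaner once the complex extension is in hand, while the paper's avoids complexification at the cost of heavier combinatorics.

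The one step you assert rather than prove is exactly the step the paper introduces Assumption~\ref{add_reg} to handle: the claim that the family $\{\Phi_{N}\}$ is locally uniformly bounded \emph{on a complex neighborhood whose width does not shrink with $N$}. The a.s.\ bound on $\tfrac{1}{N}\norm{\hat{\mathbf{x}}(\lambda_{2})}_{2}^{2}$ controls $\Phi_{N}$ only on the real axis; to control the analytic continuation to complex $\lambda_{2}$ (equivalently, to get an $N$-independent radius of convergence of the Taylor series) one needs $N$-independent bounds on all derivatives $D^{(p)}\hat{\mathbf{x}}(\lambda_{2})$, and these come from the recursion
\begin{equation*}
D^{(p)}\hat{\mathbf{x}}(\lambda_{2}) = -\,\mathcal{O}^{-1}(\lambda_{2})\Bigl(\textstyle\sum_{i=1}^{p-1}\binom{p-1}{i}\mathcal{O}^{(i)}(\lambda_{2})D^{(p-i)}\hat{\mathbf{x}}(\lambda_{2})+D^{(p-1)}\hat{\mathbf{x}}(\lambda_{2})\Bigr),
\end{equation*}
which is only controllable because Assumption~\ref{add_reg} postulates $\norm{\mathcal{O}^{-1}(\lambda_{2})\mathcal{O}^{(p)}(\lambda_{2})}_{op}\leqslant B$ with $B$ independent of $N$, together with the derivative bounds on $\phi$. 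Without this (or an equivalent hypothesis), your tubular neighborhood could collapse as $N\to\infty$ and Vitali's theorem would not apply. You gesture at this in your final paragraph (``the $N$-uniform bounds hold'' on each compact) but do not supply the mechanism; once you import Assumption~\ref{add_reg} and run the induction above, your argument closes. A minor additional remark: no continuation in $\tilde{\lambda}_{2}$ is needed for this lemma, since Lemma~\ref{main_lemma} already holds for every $\tilde{\lambda}_{2}>0$; only $\lambda_{2}$ must be continued past the threshold $\lambda_{2}^{*}$.
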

Theorem \ref{main_th} can then be proven from Lemma \ref{analytic_error_lemma} by showing that the solutions of the original problem and of its real analytic approximation are arbitrarily close, and by carefully studying the limits $\tilde{\lambda}_{2}\to 0$ and $\lambda_{2} \to 0$. This is deferred to Appendix \ref{analytic_continuation}. Note that the proof of the analytic continuation presented here makes the one from \cite{gerbelot2020asymptotic}, which was incomplete, rigorous.  \\
The remaining technical part is the proof of the convergence Lemma \ref{conv_lemma}. For this purpose, we use a dynamical system reformulation of 2-layer MLVAMP and
a result from control theory, adapted to machine learning in \cite{lessard2016analysis} and more specifically to ADMM in \cite{nishihara2015general}.
\section{Convergence analysis of 2-layer MLVAMP}
\label{section:conv_sec}
The key idea of the approach pioneered in \cite{lessard2016analysis} is to recast any non-linear dynamical system as a linear one, where convergence will be naturally characterized by a matrix norm. For a given non-linearity $\tilde{\mathcal{O}}$ and iterate $\mathbf{v}$, we define the variable $\mathbf{u}= \tilde{\mathcal{O}}(\mathbf{v})$ and rewrite the initial algorithm in terms of this trivial transform. Any property of $\tilde{\mathcal{O}}$ is then summarized in a constraint matrix linking $\mathbf{v}$ and $\mathbf{u}$. For example, if $\tilde{\mathcal{O}}$ has Lipschitz constant $\omega$, then for all $t$:
\begin{equation}
    \norm{\mathbf{u}^{(t+1)}-\mathbf{u}^{(t)}}_{2}^{2} \leqslant \omega^{2} \norm{\mathbf{v}^{(t+1)}-\mathbf{v}^{(t)}}_{2}^{2},
\end{equation}
which can be rewritten in matrix form:
\begin{align}
    \mathbf{U}^{T}&\begin{bmatrix}\omega^{2}\mathbf{I}_{d_v} & 0 \\0 & -\mathbf{I}_{d_u}\end{bmatrix}\mathbf{U} \geqslant 0 \\
     \mbox{where} \thickspace \mathbf{U} &= \begin{bmatrix}\mathbf{v}^{(t+1)}-\mathbf{v}^{(t)} \\\mathbf{u}^{(t+1)}-\mathbf{u}^{(t)}\end{bmatrix}
\end{align}
where $\mathbf{I}_{d_v}, \mathbf{I}_{d_u}$ are the identity matrices with dimensions of $\mathbf{v},  \mathbf{u}$, i.e. $M$ or $N$ in our case. Any co-coercivity property (verified by proximal operators) can be rewritten in matrix form but yields non block diagonal constraint matrices. We will thus directly use the Lipschitz constants for our proof, as they lead to simpler derivations and suffice to prove the required result. The main theorem from \cite{lessard2016analysis}, adapted to ADMM in \cite{nishihara2015general}, then establishes a sufficient condition for convergence with a linear matrix inequality, involving the matrices defining the linear recast of the algorithm and the constraints. Let us now detail how this approach can be used on 2-layer MLVAMP.
\subsection{2-layer MLVAMP as a dynamical system : sketch of proof of Lemma 3}
We start by rewriting 2-layer MLVAMP in a more compact form: 
\begin{align}
\label{comp1}
& \text{Initialize }\mathbf{h}_{1x}^{(0)}, \mathbf{h}_{2z}^{(0)} \nonumber \\
    &\mathbf{h}_{1x}^{(t+1)} = \mathbf{W}^{(t)}_1 \tilde{\mathcal{O}}^{(t)}_1 \mathbf{h}_{1x}^{(t)} + \mathbf{W}^{(t)}_2 \tilde{\mathcal{O}}^{(t)}_2 (\mathbf{W}^{(t)}_3 \mathbf{h}_{2z}^{(t)} + \mathbf{W}^{(t)}_4 \tilde{\mathcal{O}}^{(t)}_{1}( \mathbf{h}_{1x}^{(t)})) \\
    &\mathbf{h}_{2z}^{(t+1)} = \mathbf{\tilde{\mathcal{O}}}^{(t)}_2(\mathbf{W}^{(t)}_3 \mathbf{h}_{2z}^{(t)} + \mathbf{W}^{(t)}_4 \tilde{\mathcal{O}}^{(t)}_1 (\mathbf{h}_{1x}^{(t)}))
    \label{comp2}
\end{align}
where 
\begin{align}
\label{rec_op}
\mathbf{W_1}^{(t)} &= \dfrac{\hat{Q}^{(t)}_{2x}}{\hat{Q}^{(t+1)}_{1x}} \left( \dfrac{1}{\chi^{(t+1)}_{2x}}(\hat{Q}^{(t+1)}_{2z} \mathbf{F}^T \mathbf{F}+\hat{Q}^{(t)}_{2x} \mbox{Id})^{-1} - \mbox{Id} \right)  \\
\mathbf{W_2}^{(t)} &= \dfrac{\hat{Q}^{(t+1)}_{2z}}{\chi^{(t+1)}_{2x} \hat{Q}^{(t+1)}_{1x}} (\hat{Q}^{(t+1)}_{2z} \mathbf{F}^T \mathbf{F}+\hat{Q}^{(t)}_{2x} \mbox{Id})^{-1} \mathbf{F}^T \\
\mathbf{W_3}^{(t)} &= \dfrac{\hat{Q}^{(t)}_{2z}}{\hat{Q}^{(t)}_{1z}} \left( \dfrac{1}{\chi^{(t)}_{2z}} \mathbf{F} (\hat{Q}^{(t)}_{2z} \mathbf{F}^T \mathbf{F}+\hat{Q}^{(t)}_{2x} \mbox{Id})^{-1} \mathbf{F}^T - \mbox{Id} \right) \\
\mathbf{W_4}^{(t)} &= \dfrac{\hat{Q}_{2x}^{(t)}}{\hat{Q}_{1z}^{(t)}\chi^{(t)}_{2z}} \mathbf{F} (\hat{Q}^{(t)}_{2z} \mathbf{F}^T \mathbf{F}+\hat{Q}^{(t)}_{2x} \mbox{Id})^{-1} \\
\mathbf{\tilde{O}_1}^{(t)}&= \dfrac{\hat{Q}^{(t)}_{1x}}{\hat{Q}^{(t)}_{2x}}\left( \dfrac{1}{\chi^{(t)}_{1x} \hat{Q}^{(t)}_{1x}} \mbox{Prox}_{\mathbf{f}/\hat{Q}^{(t)}_{1x}}(\cdot) - \mbox{Id}\right) \\
\mathbf{\tilde{O}_2}^{(t)}&= \dfrac{\hat{Q}^{(t)}_{1z}}{\hat{Q}^{(t+1)}_{2z}}\left( \dfrac{1}{\chi^{(t)}_{1z} \hat{Q}^{(t)}_{1z}} \mbox{Prox}_{\mathbf{g}(.,y)/\hat{Q}^{(t)}_{1z}}(\cdot) - \mbox{Id}\right).
\end{align}
For the linear recast, we then define the variables: 
\begin{align}
    &\mathbf{u}_{1}^{(t)} = \tilde{\mathcal{O}}^{(t)}_{1}(\mathbf{h}_{1x}^{(t)}), \thickspace
    \mathbf{v}^{(t)} =\mathbf{W}^{(t)}_{3}\mathbf{h}_{2z}^{(t)}+\mathbf{W}^{(t)}_{4}\mathbf{u}_1^{(t)},\\ 
   &\mathbf{u}_{2}^{(t)} = \tilde{\mathcal{O}}^{(t)}_{2}(\mathbf{v}^{(t)}), \\
    \text{s.t.} \thickspace
    &\mathbf{h}_{2z}^{(t+1)} = \mathbf{u}_{2}^{(t)},
    \mathbf{h}_{1x}^{(t+1)} = \mathbf{W}^{(t)}_{1}\mathbf{u}_{1}^{(t)}+\mathbf{W}^{(t)}_{2}\mathbf{u}_{2}^{(t)}.
\end{align}
where $\mathbf{u}_{1}, \mathbf{h}_{1x} \in \mathbb{R}^{N}$; and $\mathbf{v},\mathbf{u}_{2},\mathbf{h}_{2z} \in \mathbb{R}^{M}$. We then define as new variables the vectors
\begin{align}
\label{def_hu}
    \mathbf{h}^{(t)} = \begin{bmatrix}\mathbf{h}_{2z}^{(t)} \\ \mathbf{h}_{1x}^{(t)}\end{bmatrix}, \quad
    \mathbf{u}^{(t)} = \begin{bmatrix}\mathbf{u}_{2}^{(t)} \\ \mathbf{u}_{1}^{(t)}\end{bmatrix}, \\
    \label{def_w12}
    \mathbf{w}_{1}^{(t)} = \begin{bmatrix}\mathbf{h}_{1x}
^{(t)}\\ \mathbf{u}_{1}^{(t)}\end{bmatrix}, \quad
\mathbf{w}_{2}^{(t)} = \begin{bmatrix}\mathbf{v}^{(t)} \\ \mathbf{u}_{2}^{(t)}\end{bmatrix}.
\end{align}
This leads to the following linear dynamical system recast of \eqref{comp1}-\eqref{comp2}:
\begin{align}
\label{dyn_syst1}
  \mathbf{h}^{(t+1)} &= \mathbf{A}^{(t)}\mathbf{h}^{(t)}+\mathbf{B}^{(t)}\mathbf{u}^{(t)} \\
  \mathbf{w}_{1}^{(t)} &= \mathbf{C}^{(t)}_{1}\mathbf{h}^{(t)}+\mathbf{D}^{(t)}_{1}\mathbf{u}^{(t)} \\
  \mathbf{w}_{2}^{(t)} &= \mathbf{C}^{(t)}_{2}\mathbf{h}^{(t)}+\mathbf{D}^{(t)}_{2}\mathbf{u}^{(t)}
  \label{dyn_syst2}
\end{align}
where
\begin{align}
\mathbf{A}^{(t)} &= \mathbf{0}_{(M+N)\times (M+N)} \thickspace \mathbf{B}^{(t)} = \begin{bmatrix}\mathbf{I}_{M} & \mathbf{0}_{M \times N} \\\mathbf{W}^{(t)}_{2} & \mathbf{W}^{(t)}_{1}\end{bmatrix} \\
\mathbf{C}^{(t)}_{1} &= \begin{bmatrix}\mathbf{0}_{N \times M} & \mathbf{I}_{N} \\ \mathbf{0}_{N \times M} & \mathbf{0}_{N \times N}\end{bmatrix} \mathbf{D}^{(t)}_{1} = \begin{bmatrix} \mathbf{0}_{N \times M} & \mathbf{0}_{N \times N} \\ \mathbf{0}_{N \times M} & \mathbf{I}_{N}\end{bmatrix}\\
\mathbf{C}^{(t)}_{2} &= \begin{bmatrix}\mathbf{W}^{(t)}_{3} & \mathbf{0}_{M \times N} \\ \mathbf{0}_{M \times M} & \mathbf{0}_{M \times N}\end{bmatrix}  \mathbf{D}^{(t)}_{2} = \begin{bmatrix} \mathbf{0}_{M\times M} & \mathbf{W}^{(t)}_{4} \\ \mathbf{I}_{M} & \mathbf{0}_{M \times N}\end{bmatrix}.
\end{align}
$\mathbf{O}$ denotes a matrix with only zeros. The next step is to impose the properties of the non-linearities $\tilde{\mathcal{O}}^{(t)}_{1},\tilde{\mathcal{O}}^{(t)}_{2}$ through constraint matrices. The Lipschitz constants $\omega_{1}^{(t)},\omega_{2}^{(t)}$ of $\tilde{\mathcal{O}}^{(t)}_{1},\tilde{\mathcal{O}}^{(t)}_{2}$ can be determined using properties of proximal operators \cite{giselsson2016linear} and are directly linked to the strong convexity and smoothness of the cost function and regularization. The relevant properties of proximal operators are reminded in appendix \ref{appendix : prox_prop}, while the subsequent derivation of the Lipschitz constants is detailed in appendix \ref{conv_proof} and yields:
\begin{align}
    \label{lip_const}
    \omega_{1}^{(t)}&= \dfrac{\hat{Q}^{(t)}_{1x}}{\hat{Q}^{(t)}_{2x}} \sqrt{1+\dfrac{(\hat{Q}^{(t)}_{2x})^{2}-(\hat{Q}^{(t)}_{1x})^{2}}{(\hat{Q}^{(t)}_{1x} + \lambda_2)^2}}\\
    \omega_{2}^{(t)}&= \dfrac{\hat{Q}^{(t)}_{1z}}{\hat{Q}^{(t)}_{2z}} \sqrt{1+\dfrac{(\hat{Q}^{(t)}_{2z})^{2}-(\hat{Q}^{(t)}_{1z})^{2}}{(\hat{Q}^{(t)}_{1z} + \tilde{\lambda}_2)^2}}.
\end{align}
We thus define the constraints matrices
\begin{equation}
    \mathbf{M}^{(t)}_{1} = \begin{bmatrix}(\omega_{1}^{(t)})^{2} & 0 \\0 & -1\end{bmatrix} \otimes \mathbf{I}_{N} \quad \mathbf{M}^{(t)}_{2} = \begin{bmatrix}(\omega_{2}^{(t)})^{2} & 0 \\0 & -1\end{bmatrix} \otimes \mathbf{I}_{M}
\end{equation}
where $\otimes$ denotes the Kronecker product.
We then use a time dependent form of Theorem 4 from \cite{lessard2016analysis} in the appropriate form for 2-layer MLVAMP, as was done in \cite{nishihara2015general} for ADMM.
\newline
\begin{proposition}(Time dependent version of Theorem 4 from \cite{lessard2016analysis})
\label{find_Lyap}
    Consider, at each time step $t\in \mathbb{N}$, the following linear matrix inequality with $\tau_{(t)} \in [0,1]$:
    \begin{align}
    \label{the_LMI}
    0 &\succeq \begin{bmatrix}(\mathbf{A}^{(t)})^{T}\mathbf{P}\mathbf{A}^{(t)}-(\tau_{(t)})^{2}\mathbf{P} & (\mathbf{A}^{(t)})^{T}\mathbf{P}\mathbf{B}^{(t)} \\
    (\mathbf{B}^{(t)})^{T}\mathbf{P}\mathbf{A}^{(t)} & (\mathbf{B}^{(t)})^{T}\mathbf{P}\mathbf{B}^{(t)} \end{bmatrix}\\
    &+\begin{bmatrix}\mathbf{C}^{(t)}_{1} & \mathbf{D}^{(t)}_{1} \\\mathbf{C}^{(t)}_{2} & \mathbf{D}^{(t)}_{2}\end{bmatrix}^{T}\begin{bmatrix}\beta^{(t)}_{1} \mathbf{M}^{(t)}_{1} & \mathbf{0}_{2N \times 2M}  \\\mathbf{0}_{2M \times 2N} & \beta^{(t)}_{2}\mathbf{M}^{(t)}_{2}\end{bmatrix}\begin{bmatrix}\mathbf{C}^{(t)}_{1} & \mathbf{D}^{(t)}_{1} \\\mathbf{C}^{(t)}_{2} & \mathbf{D}^{(t)}_{2}\end{bmatrix} \notag
\end{align}
If, at each time step, (\ref{the_LMI}) is feasible for some $\mathbf{P} \succ 0$ and $\beta_1^{(t)},\beta_2^{(t)} \geqslant 0$, then for any initialization $\mathbf{h}^{(0)}$, $\mathbf{h}^{(t)}$ converges to $\mathbf{h^*}$, the fixed point of \eqref{dyn_syst1}-\eqref{dyn_syst2}:
\begin{equation}
    \forall t, \quad \norm{\mathbf{h}^{(t)}-\mathbf{h}^*} \leqslant \sqrt{\kappa(\mathbf{P})}(\tau^{*})^{t}\norm{\mathbf{h}^{(0)}-\mathbf{h^*}}
\end{equation}
where $\kappa(\mathbf{P})$ is the condition number of $\mathbf{P}$ and we defined $\tau^{*} = \sup_{t}\tau_{(t)}$.
\end{proposition}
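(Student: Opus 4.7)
The plan is to mimic the Lyapunov function argument of \cite{lessard2016analysis,nishihara2015general}, adapted to a time-varying system. Take $V(\mathbf{h}) = (\mathbf{h}-\mathbf{h}^{*})^{T}\mathbf{P}(\mathbf{h}-\mathbf{h}^{*})$ as candidate Lyapunov function, where $\mathbf{P} \succ 0$ is supplied by feasibility of the LMI and, for the telescoping to work, must be taken common across all time steps. Since $\mathbf{h}^{*}$ is a fixed point of \eqref{dyn_syst1}-\eqref{dyn_syst2}, the associated $\mathbf{u}^{*},\mathbf{w}_{1}^{*},\mathbf{w}_{2}^{*}$ satisfy $\mathbf{h}^{*} = \mathbf{A}^{(t)}\mathbf{h}^{*}+\mathbf{B}^{(t)}\mathbf{u}^{*}$ and $\mathbf{w}_{i}^{*} = \mathbf{C}_{i}^{(t)}\mathbf{h}^{*}+\mathbf{D}_{i}^{(t)}\mathbf{u}^{*}$ for every $t$. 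Subtracting from the dynamics produces the same linear recursion for the deviations $\Delta\mathbf{h}^{(t)} := \mathbf{h}^{(t)}-\mathbf{h}^{*}$, $\Delta\mathbf{u}^{(t)} := \mathbf{u}^{(t)}-\mathbf{u}^{*}$, $\Delta\mathbf{w}_{i}^{(t)} := \mathbf{w}_{i}^{(t)}-\mathbf{w}_{i}^{*}$.

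The central step is to left- and right-multiply the LMI \eqref{the_LMI} by the stacked vector $\boldsymbol{\xi}^{(t)} = \left[(\Delta\mathbf{h}^{(t)})^{T},\;(\Delta\mathbf{u}^{(t)})^{T}\right]^{T}$ and its transpose. Using the error dynamics, the first block reduces to $V(\mathbf{h}^{(t+1)}) - (\tau_{(t)})^{2}V(\mathbf{h}^{(t)})$, while the second block contributes $\sum_{i=1,2}\beta_{i}^{(t)}\,(\Delta\mathbf{w}_{i}^{(t)})^{T}\mathbf{M}_{i}^{(t)}(\Delta\mathbf{w}_{i}^{(t)})$. Because each $\tilde{\mathcal{O}}_{i}^{(t)}$ is single-valued and $\omega_{i}^{(t)}$-Lipschitz, and $\mathbf{u}_{i}^{*} = \tilde{\mathcal{O}}_{i}^{(t)}(\mathbf{v}_{i}^{*})$ at the fixed point, the Lipschitz inequality applied between the current iterate and the fixed point yields $\|\Delta\mathbf{u}_{i}^{(t)}\|_{2}^{2} \leqslant (\omega_{i}^{(t)})^{2}\|\Delta\mathbf{v}_{i}^{(t)}\|_{2}^{2}$, which is precisely $(\Delta\mathbf{w}_{i}^{(t)})^{T}\mathbf{M}_{i}^{(t)}(\Delta\mathbf{w}_{i}^{(t)}) \geqslant 0$. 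With $\beta_{i}^{(t)} \geqslant 0$, the negative semi-definiteness required by \eqref{the_LMI} therefore forces the one-step contraction
\begin{equation*}
    V(\mathbf{h}^{(t+1)}) \leqslant (\tau_{(t)})^{2}\,V(\mathbf{h}^{(t)}).
\end{equation*}
Iterating and using $\tau_{(t)} \leqslant \tau^{*}$ gives $V(\mathbf{h}^{(t)}) \leqslant (\tau^{*})^{2t}\,V(\mathbf{h}^{(0)})$, and the stated bound follows from the spectral sandwich $\lambda_{\min}(\mathbf{P})\|\mathbf{h}\|_{2}^{2} \leqslant \mathbf{h}^{T}\mathbf{P}\mathbf{h} \leqslant \lambda_{\max}(\mathbf{P})\|\mathbf{h}\|_{2}^{2}$, producing $\|\Delta\mathbf{h}^{(t)}\|_{2} \leqslant \sqrt{\kappa(\mathbf{P})}\,(\tau^{*})^{t}\|\Delta\mathbf{h}^{(0)}\|_{2}$ with $\kappa(\mathbf{P}) = \lambda_{\max}(\mathbf{P})/\lambda_{\min}(\mathbf{P})$.

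The main obstacle is uniform-in-time feasibility of \eqref{the_LMI}. Because $\mathbf{A}^{(t)},\mathbf{B}^{(t)},\mathbf{C}_{i}^{(t)},\mathbf{D}_{i}^{(t)}$ and the Lipschitz constants $\omega_{i}^{(t)}$ from \eqref{lip_const} all depend on the scalar state-evolution parameters $\hat{Q}_{jx}^{(t)},\hat{Q}_{jz}^{(t)}$, an a priori solution could yield a $\mathbf{P}$ drifting with $t$, which would break the telescoping. This is precisely where the sufficient strong convexity required by Lemma \ref{conv_lemma} enters: it keeps $\omega_{i}^{(t)}$ uniformly bounded below the threshold above which \eqref{the_LMI} becomes infeasible, so that a single $\mathbf{P} \succ 0$ and a common contraction rate $\tau^{*}<1$ can be selected. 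Proposition \ref{find_Lyap} itself is agnostic to how this uniformity is established; its hypothesis is checked pointwise in $t$ in the downstream convergence analysis of Appendix \ref{conv_proof}.
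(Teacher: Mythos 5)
Your proof is correct and follows essentially the same route as the paper's: multiply the LMI \eqref{the_LMI} on both sides by a stacked difference vector, use the linear dynamics together with the Lipschitz quadratic constraints to show the $\beta_{i}^{(t)}$ terms are nonnegative and can be dropped, and deduce a one-step contraction of the $\mathbf{P}$-weighted quadratic form, which is then iterated. The only (minor) difference is that you anchor the argument at the fixed point $\mathbf{h}^{*}$, as in the original Theorem 4 of \cite{lessard2016analysis}, whereas the paper's appendix contracts successive increments $\mathbf{h}^{(t)}-\mathbf{h}^{(t-1)}$; your choice in fact yields the stated bound on $\norm{\mathbf{h}^{(t)}-\mathbf{h}^{*}}$ with the constant $\sqrt{\kappa(\mathbf{P})}$ more directly.
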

\begin{proof}
see appendix \ref{time_lyap_proof}
\end{proof}
\quad \\
We show in appendix \ref{conv_proof} how the additional ridge penalties from the constrained problem (\ref{smooth_student}) parametrized by $\lambda_{2}, \tilde{\lambda}_{2}$ can be used to make (\ref{the_LMI}) feasible and prove Lemma \ref{conv_lemma}. The core idea is to leverage on the Lipschitz constants (\ref{lip_const}), the operator norms of the matrices defined in (\ref{rec_op}) and the following upper and lower bounds on the $\hat{Q}$ parameters defined by the fixed point of state evolution equations:
\begin{align}
    &\lambda_{min}(\mathcal{H}_f) \leqslant \hat{Q}^{(t)}_{2x} \leqslant \lambda_{max} (\mathcal{H}_f) \\
    &\lambda_{min}(\mathcal{H}_g) \leqslant \hat{Q}^{(t+1)}_{2z} \leqslant \lambda_{max} (\mathcal{H}_g)   \\
    &\hat{Q}^{(t)}_{2z} \lambda_{min}(\mathbf{F}^T \mathbf{F}) \leqslant \hat{Q}^{(t+1)}_{1x} \leqslant \hat{Q}^{(t)}_{2z} \lambda_{max}(\mathbf{F}^T \mathbf{F}) \\
    &\dfrac{\hat{Q}^{(t)}_{2x}}{\lambda_{max}(\mathbf{F} \mathbf{F}^T)} \leqslant \hat{Q}^{(t)}_{1z} \leqslant \dfrac{\hat{Q}^{(t)}_{2x}}{\lambda_{min}(\mathbf{F} \mathbf{F}^T)},
\end{align}
where $\mathcal{H}_{f},\mathcal{H}_{g}$ are the Hessian of the loss and regularization functions taken at the fixed point. These bounds are obtained from the definitions of $\chi_{x},\chi_{z}$ in the state evolution equations (or equivalently in Theorem \ref{main_th}), and the fact that the derivative of a proximal operator reads, for a twice differentiable function:
\begin{equation}
    \mathcal{D}_{\mbox{$\eta_{\gamma f}$}}(\mathbf{x}) = (\text{Id}+\gamma \mathcal{H}_{f}(\mbox{$\eta_{\gamma f}$}(\mathbf{x})))^{-1}.
\end{equation}
Detail of this derivation can also be found in appendices \ref{appendix : prox_prop} and \ref{conv_proof}. For the constrained problem (\ref{smooth_student}), the maximum and minimum eigenvalues of the Hessians are directly augmented by $\tilde{\lambda}_{2},\lambda_{2}$, which allows us to control the scaling of the $\hat{Q}$ parameters. The rest of the convergence proof is then based on successive application of Schur's lemma \cite{horn2012matrix} on the linear matrix inequality (\ref{the_LMI}); and translating the resulting conditions on inequalities which can be verified by choosing the appropriate $\tilde{\lambda}_{2},\lambda_{2},\beta_{1}^{(t)}, \beta_{2}^{(t)}$. Convergence of gradient-based descent methods for sufficiently strongly-convex objectives is a coherent result from an optimization point of view. This is corroborated by the symbolic convergence rates derived for ADMM in \cite{nishihara2015general}, where a sufficiently strongly convex objective is also considered.
\subsection{Numerical experiments for Lemma \ref{conv_lemma}}
Here we provide numerical evidence for the linear convergence condition proved in Lemma 3. We consider a logistic regression penalized with the $\ell_{1}$ norm ($\lambda_{1}$ = 0.1) with an ill-conditioned design matrix, with i.i.d. standard normal elements. This corresponds to the setting of Figure \ref{figure3}. Since the logistic loss is strongly convex on any compact space, we do not need to add $\tilde{\lambda}_{2}$. We follow the convergence of 2-layer MLVAMP for this problem for increasing values of an additional ridge penalty $\lambda_{2} = 0,0.01,0.05,0.1$ and plot the average distance between successive iterates $\frac{1}{N}\norm{\mathbf{h}_{1x}^{(t+1)}-\mathbf{h}_{1x}^{(t+1)}}_{2}^{2}$ and the evolution of the reconstruction angle $\theta$ as a function of the number of iterations. We perform two experiments with aspect ratios $\alpha = 1$ and $\alpha = 0.2$. For $\alpha = 1$, 2-layer MLVAMP converges without any additional ridge penalty, and convergence is accelerated by larger values of $\lambda_{2}$. As a sanity check, note that the reconstruction angle of the estimator returned by the algorithm for $\lambda_{2} = 0$ (grey line on the lower left plot) converges to the value predicted at Figure \ref{figure3} for $\alpha=1,\lambda_{1}=0.1$ and a Gaussian matrix. For $\alpha=0.2$n the design matrix is ill-conditioned and we see that 2-layer MLVAMP diverges. Adding the ridge penalty leads to converging trajectories for a sufficiently large value of $\lambda_{2}$, as shown on the upper right block. Larger values of $\lambda_{2}$ again lead to faster convergence.
\begin{figure*}[!ht]
    \centering
    \includegraphics[scale=0.6]{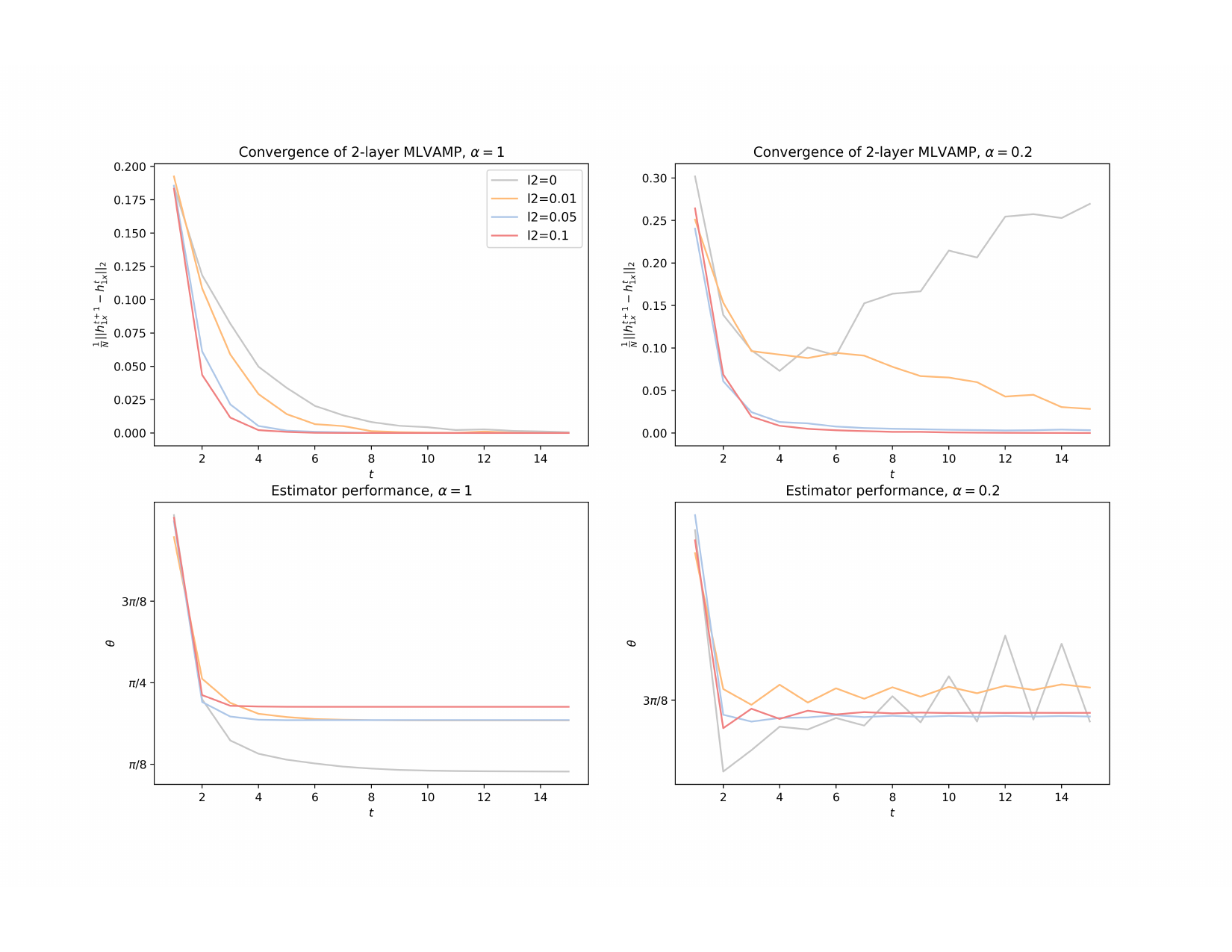}
    \caption{Convergence of 2-layer MLVAMP on a logistic regression with $\ell_{1}$ penalty with $\lambda_{1} = 0.1$, a Gaussian design matrix and two values of the aspect ratio $\alpha=1$ (left) and $\alpha=0.2$ (right). For $\alpha=1$, the algorithm converges regardless of the additional ridge penalty and we recover the performance predicted by Theorem \ref{main_th} for the plain $\ell_{1}$ regularization. For $\alpha=0.2$, the plain $\ell_{1}$ leads to an unstable iteration and a sufficiently large additional ridge indeed leads to convergence. In both cases, the larger the additional ridge, the faster the algorithm converges.}
    \label{convl2fig}
\end{figure*}\\
\quad \\
\section*{Acknowledgments}
The authors would like to thank Andrea Montanari, Benjamin Aubin, Yoshiyuki Kabashima and Lenka Zdeborov\'a for  discussions. This work is supported by the French Agence Nationale de la Recherche under grant ANR-17-CE23-0023-01 PAIL and ANR-19-P3IA-0001 PRAIRIE. Additional funding is acknowledged from ``Chaire de recherche sur les mod\`eles et sciences des donn\'ees'', Fondation CFM pour la Recherche-ENS.  

\newpage
\bibliographystyle{alpha}
\bibliography{references}

\newcommand{\etalchar}[1]{$^{#1}$}
\begin{thebibliography}{EKBB{\etalchar{+}}13}

\bibitem[AG16]{advani2016equivalence}
Madhu Advani and Surya Ganguli.
\newblock An equivalence between high dimensional bayes optimal inference and
  m-estimation.
\newblock In {\em Advances in Neural Information Processing Systems}, pages
  3378--3386, 2016.

\bibitem[BC{\etalchar{+}}11]{bauschke2011convex}
Heinz~H Bauschke, Patrick~L Combettes, et~al.
\newblock {\em Convex analysis and monotone operator theory in Hilbert spaces},
  volume 408.
\newblock Springer, 2011.

\bibitem[{Bie}03]{Kinzel2003}
Thomas~Published {Biehl, Michael; Caticha, Nestor; Opper, Manfred; Villmann}.
\newblock {Statistical Physics of Learning and Generalization}.
\newblock {\em Adaptivity and Learning}, pages 77--88, 2003.

\bibitem[BM11a]{bayati2011dynamics}
Mohsen Bayati and Andrea Montanari.
\newblock The dynamics of message passing on dense graphs, with applications to
  compressed sensing.
\newblock {\em IEEE Transactions on Information Theory}, 57(2):764--785, 2011.

\bibitem[BM11b]{bayati2011lasso}
Mohsen Bayati and Andrea Montanari.
\newblock The lasso risk for gaussian matrices.
\newblock {\em IEEE Transactions on Information Theory}, 58(4):1997--2017,
  2011.

\bibitem[BPC{\etalchar{+}}11]{boyd2011distributed}
Stephen Boyd, Neal Parikh, Eric Chu, Borja Peleato, and Jonathan Eckstein.
\newblock Distributed optimization and statistical learning via the alternating
  direction method of multipliers.
\newblock {\em Foundations and Trends{\textregistered} in Machine learning},
  3(1):1--122, 2011.

\bibitem[DM16]{donoho2016high}
David Donoho and Andrea Montanari.
\newblock High dimensional robust m-estimation: Asymptotic variance via
  approximate message passing.
\newblock {\em Probability Theory and Related Fields}, 166(3-4):935--969, 2016.

\bibitem[DMM09]{donoho2009message}
David~L Donoho, Arian Maleki, and Andrea Montanari.
\newblock Message-passing algorithms for compressed sensing.
\newblock {\em Proceedings of the National Academy of Sciences},
  106(45):18914--18919, 2009.

\bibitem[EKBB{\etalchar{+}}13]{el2013robust}
Noureddine El~Karoui, Derek Bean, Peter~J Bickel, Chinghway Lim, and Bin Yu.
\newblock On robust regression with high-dimensional predictors.
\newblock {\em Proceedings of the National Academy of Sciences},
  110(36):14557--14562, 2013.

\bibitem[ESAP{\etalchar{+}}20]{emami2020generalization}
Melikasadat Emami, Mojtaba Sahraee-Ardakan, Parthe Pandit, Sundeep Rangan, and
  Alyson Fletcher.
\newblock Generalization error of generalized linear models in high dimensions.
\newblock In {\em International Conference on Machine Learning}, pages
  2892--2901. PMLR, 2020.

\bibitem[EVdB01]{engel2001statistical}
Andreas Engel and Christian Van~den Broeck.
\newblock {\em Statistical mechanics of learning}.
\newblock Cambridge University Press, 2001.

\bibitem[FRS18]{fletcher2018inference}
Alyson~K Fletcher, Sundeep Rangan, and Philip Schniter.
\newblock Inference in deep networks in high dimensions.
\newblock In {\em 2018 IEEE International Symposium on Information Theory
  (ISIT)}, pages 1884--1888. IEEE, 2018.

\bibitem[FSARS16]{fletcher2016expectation}
Alyson Fletcher, Mojtaba Sahraee-Ardakan, Sundeep Rangan, and Philip Schniter.
\newblock Expectation consistent approximate inference: Generalizations and
  convergence.
\newblock In {\em 2016 IEEE International Symposium on Information Theory
  (ISIT)}, pages 190--194. IEEE, 2016.

\bibitem[GAK20]{gerbelot2020asymptotic}
C{\'e}dric Gerbelot, Alia Abbara, and Florent Krzakala.
\newblock Asymptotic errors for high-dimensional convex penalized linear
  regression beyond gaussian matrices.
\newblock In {\em Conference on Learning Theory}, pages 1682--1713. PMLR, 2020.

\bibitem[GB16]{giselsson2016linear}
Pontus Giselsson and Stephen Boyd.
\newblock Linear convergence and metric selection for douglas-rachford
  splitting and admm.
\newblock {\em IEEE Transactions on Automatic Control}, 62(2):532--544, 2016.

\bibitem[GD89]{gardner1989three}
Elizabeth Gardner and Bernard Derrida.
\newblock Three unfinished works on the optimal storage capacity of networks.
\newblock {\em Journal of Physics A: Mathematical and General}, 22(12):1983,
  1989.

\bibitem[GLK{\etalchar{+}}20]{gerace2020generalisation}
Federica Gerace, Bruno Loureiro, Florent Krzakala, Marc M{\'e}zard, and Lenka
  Zdeborov{\'a}.
\newblock Generalisation error in learning with random features and the hidden
  manifold model.
\newblock In {\em International Conference on Machine Learning}, pages
  3452--3462. PMLR, 2020.

\bibitem[GS10]{ganguli2010statistical}
Surya Ganguli and Haim Sompolinsky.
\newblock Statistical mechanics of compressed sensing.
\newblock {\em Physical review letters}, 104(18):188701, 2010.

\bibitem[HJ12]{horn2012matrix}
Roger~A Horn and Charles~R Johnson.
\newblock {\em Matrix analysis}.
\newblock Cambridge university press, 2012.

\bibitem[HMRT22]{hastie2022surprises}
Trevor Hastie, Andrea Montanari, Saharon Rosset, and Ryan~J Tibshirani.
\newblock Surprises in high-dimensional ridgeless least squares interpolation.
\newblock {\em The Annals of Statistics}, 50(2):949--986, 2022.

\bibitem[Kab03]{kabashima2003cdma}
Yoshiyuki Kabashima.
\newblock A cdma multiuser detection algorithm on the basis of belief
  propagation.
\newblock {\em Journal of Physics A: Mathematical and General}, 36(43):11111,
  2003.

\bibitem[Kab08]{kabashima2008inference}
Yoshiyuki Kabashima.
\newblock Inference from correlated patterns: a unified theory for perceptron
  learning and linear vector channels.
\newblock In {\em Journal of Physics: Conference Series}, volume~95, page
  012001. IOP Publishing, 2008.

\bibitem[KP02]{krantz2002primer}
Steven~G Krantz and Harold~R Parks.
\newblock {\em A primer of real analytic functions}.
\newblock Springer Science \& Business Media, 2002.

\bibitem[KU04]{kabashima2004bp}
Yoshiyuki Kabashima and Shinsuke Uda.
\newblock A bp-based algorithm for performing bayesian inference in large
  perceptron-type networks.
\newblock In {\em International Conference on Algorithmic Learning Theory},
  pages 479--493. Springer, 2004.

\bibitem[KWT09]{kabashima2009typical}
Yoshiyuki Kabashima, Tadashi Wadayama, and Toshiyuki Tanaka.
\newblock A typical reconstruction limit for compressed sensing based on
  lp-norm minimization.
\newblock {\em Journal of Statistical Mechanics: Theory and Experiment},
  2009(09):L09003, 2009.

\bibitem[LRP16]{lessard2016analysis}
Laurent Lessard, Benjamin Recht, and Andrew Packard.
\newblock Analysis and design of optimization algorithms via integral quadratic
  constraints.
\newblock {\em SIAM Journal on Optimization}, 26(1):57--95, 2016.

\bibitem[M{\'e}z89]{mezard1989space}
Marc M{\'e}zard.
\newblock The space of interactions in neural networks: Gardner's computation
  with the cavity method.
\newblock {\em Journal of Physics A: Mathematical and General}, 22(12):2181,
  1989.

\bibitem[Min01]{minka2001family}
Thomas~Peter Minka.
\newblock {\em A family of algorithms for approximate Bayesian inference}.
\newblock PhD thesis, Massachusetts Institute of Technology, 2001.

\bibitem[Mit19]{mitra2019compressed}
Partha~P Mitra.
\newblock Compressed sensing and overparametrized networks: Overfitting peaks
  in a model of misparametrized sparse regression in the interpolation limit.
\newblock 2019.

\bibitem[MKMZ17]{manoel2017multi}
Andre Manoel, Florent Krzakala, Marc M{\'e}zard, and Lenka Zdeborov{\'a}.
\newblock Multi-layer generalized linear estimation.
\newblock In {\em 2017 IEEE International Symposium on Information Theory
  (ISIT)}, pages 2098--2102. IEEE, 2017.

\bibitem[MM09]{mezard2009information}
Marc Mezard and Andrea Montanari.
\newblock {\em Information, physics, and computation}.
\newblock Oxford University Press, 2009.

\bibitem[MM22]{mei2022generalization}
Song Mei and Andrea Montanari.
\newblock The generalization error of random features regression: Precise
  asymptotics and the double descent curve.
\newblock {\em Communications on Pure and Applied Mathematics}, 75(4):667--766,
  2022.

\bibitem[MPV87]{mezard1987spin}
Marc M{\'e}zard, Giorgio Parisi, and Miguel Virasoro.
\newblock {\em Spin glass theory and beyond: An Introduction to the Replica
  Method and Its Applications}, volume~9.
\newblock World Scientific Publishing Company, 1987.

\bibitem[NLR{\etalchar{+}}15]{nishihara2015general}
Robert Nishihara, Laurent Lessard, Ben Recht, Andrew Packard, and Michael
  Jordan.
\newblock A general analysis of the convergence of admm.
\newblock In {\em International Conference on Machine Learning}, pages
  343--352. PMLR, 2015.

\bibitem[OK96]{opper1996statistical}
Manfred Opper and Wolfgang Kinzel.
\newblock Statistical mechanics of generalization.
\newblock In {\em Models of neural networks III}, pages 151--209. Springer,
  1996.

\bibitem[OKKN90]{Opper1990}
M.~Opper, W.~Kinzel, J.~Kleinz, and R.~Nehl.
\newblock {On the ability of the optimal perceptron to generalise}.
\newblock {\em Journal of Physics A: General Physics}, 23(11), 1990.

\bibitem[PB{\etalchar{+}}14]{parikh2014proximal}
Neal Parikh, Stephen Boyd, et~al.
\newblock Proximal algorithms.
\newblock {\em Foundations and Trends{\textregistered} in Optimization},
  1(3):127--239, 2014.

\bibitem[PSAR{\etalchar{+}}20]{pandit2020inference}
Parthe Pandit, Mojtaba Sahraee-Ardakan, Sundeep Rangan, Philip Schniter, and
  Alyson~K Fletcher.
\newblock Inference with deep generative priors in high dimensions.
\newblock {\em IEEE Journal on Selected Areas in Information Theory},
  1(1):336--347, 2020.

\bibitem[PVG{\etalchar{+}}11]{pedregosa2011scikit}
Fabian Pedregosa, Ga{\"e}l Varoquaux, Alexandre Gramfort, Vincent Michel,
  Bertrand Thirion, Olivier Grisel, Mathieu Blondel, Peter Prettenhofer, Ron
  Weiss, Vincent Dubourg, et~al.
\newblock Scikit-learn: Machine learning in python.
\newblock {\em the Journal of machine Learning research}, 12:2825--2830, 2011.

\bibitem[Ran11]{rangan2011generalized}
Sundeep Rangan.
\newblock Generalized approximate message passing for estimation with random
  linear mixing.
\newblock In {\em 2011 IEEE International Symposium on Information Theory
  Proceedings}, pages 2168--2172. IEEE, 2011.

\bibitem[RSF19]{rangan2019vector}
Sundeep Rangan, Philip Schniter, and Alyson~K Fletcher.
\newblock Vector approximate message passing.
\newblock {\em IEEE Transactions on Information Theory}, 2019.

\bibitem[SCC19]{sur2019likelihood}
Pragya Sur, Yuxin Chen, and Emmanuel~J Cand{\`e}s.
\newblock The likelihood ratio test in high-dimensional logistic regression is
  asymptotically a rescaled chi-square.
\newblock {\em Probability Theory and Related Fields}, 175(1-2):487--558, 2019.

\bibitem[SRF16]{schniter2016vector}
Philip Schniter, Sundeep Rangan, and Alyson~K Fletcher.
\newblock Vector approximate message passing for the generalized linear model.
\newblock In {\em 2016 50th Asilomar Conference on Signals, Systems and
  Computers}, pages 1525--1529. IEEE, 2016.

\bibitem[SST92]{seung1992statistical}
Hyunjune~Sebastian Seung, Haim Sompolinsky, and Naftali Tishby.
\newblock Statistical mechanics of learning from examples.
\newblock {\em Physical review A}, 45(8):6056, 1992.

\bibitem[TAH18]{thrampoulidis2018precise}
Christos Thrampoulidis, Ehsan Abbasi, and Babak Hassibi.
\newblock Precise error analysis of regularized $ m $-estimators in high
  dimensions.
\newblock {\em IEEE Transactions on Information Theory}, 64(8):5592--5628,
  2018.

\bibitem[TK22]{takahashi2022macroscopic}
Takashi Takahashi and Yoshiyuki Kabashima.
\newblock Macroscopic analysis of vector approximate message passing in a
  model-mismatched setting.
\newblock {\em IEEE Transactions on Information Theory}, 2022.

\bibitem[TV04]{tulino2004random}
Antonia~M Tulino and Sergio Verd{\'u}.
\newblock {\em Random matrix theory and wireless communications}.
\newblock Now Publishers Inc, 2004.

\bibitem[WRB93]{watkin1993statistical}
Timothy~LH Watkin, Albrecht Rau, and Michael Biehl.
\newblock The statistical mechanics of learning a rule.
\newblock {\em Reviews of Modern Physics}, 65(2):499, 1993.

\bibitem[ZK16]{zdeborova2016statistical}
Lenka Zdeborov{\'a} and Florent Krzakala.
\newblock Statistical physics of inference: Thresholds and algorithms.
\newblock {\em Advances in Physics}, 65(5):453--552, 2016.

\end{thebibliography}
\newpage
\appendix
\section{Convergence of vector sequences}
\label{appendix:analysis_framework}
This section is a brief summary of the framework originally introduced in \cite{bayati2011dynamics} and used in \cite{fletcher2018inference, rangan2019vector}. We review the key definitions and verify that they apply in our setting. We remind the full set of state evolution equations from \cite{fletcher2018inference} at (\ref{rigorous_SE}), when applied to learning a GLM, in appendix \ref{SE_append}, along with the required assumptions for them to hold in appendix \ref{SE_assumptions}.  \\ The main building blocks are the notions of \emph{vector sequence} and \emph{pseudo-Lipschitz function}, which allow to define the \emph{empirical convergence with p-th order moment}.
Consider a vector of the form
\begin{equation}
    \mathbf{x}(N) = (\mathbf{x}_{1}(N),...,\mathbf{x}_{N}(N))
\end{equation}
where each sub-vector $\mathbf{x}_{n}(N) \in \mathbb{R}^{r}$ for any given $r \in \mathbb{N}^{*}$. For r=1, which we use in Theorem \ref{main_th}, $\mathbf{x}(N)$ is denoted a \emph{vector sequence}. \\
Given $p\geqslant 1$, a function $\mathbf{f}:\mathbb{R}^{r}\to \mathbb{R}^{s}$ is said to be \emph{pseudo-Lipschitz continuous of order p} if there exists a constant $C>0$ such that for all $\mathbf{x}_{1}, \mathbf{x}_{2} \in \mathbb{R}^{s}$:
\begin{equation}
    \norm{\mathbf{f}(\mathbf{x}_{1})-\mathbf{f}(\mathbf{x}_{2})} \leqslant C \norm{\mathbf{x}_{1}-\mathbf{x}_{2}}\left[1+\norm{\mathbf{x}_{1}}^{p-1}+\norm{\mathbf{x}_{2}}^{p-1}\right]
\end{equation}
Then, a given vector sequence $\mathbf{x}(N)$ \emph{converges empirically with p-th order moment} if there exists a random variable $X \in \mathbb{R}^{r}$ such that:
\begin{itemize}
    \item $\mathbb{E}\norm{X}_{p}^{p}<\infty$; and
    \item for any scalar-valued pseudo-Lipschitz continuous $\mathbf{f}:\mathbb{R}^{r} \to \mathbb{R}$ of order p,
    \begin{equation}
        \lim_{N\to \infty}\frac{1}{N}\sum_{n=1}^{N}\mathbf{f}(x_{n}(N))=\mathbb{E}[f(X)] \thickspace
    \end{equation}
\end{itemize}
Note that defining an empirically converging singular value distribution implicitly defines a sequence of matrices $\mathbf{F}(N)$ using the definition of rotational invariance from the introduction. This naturally brings us back to the original definitions from \cite{bayati2011dynamics}.
An important point is that the almost sure convergence of the second condition holds for random vector sequences, such as the ones we consider in the introduction. Note that the noise vector $\mathbf{\omega}_{0}$ must also satisfy these conditions, and naturally does when it is an i.i.d. Gaussian one. We also remind the definition of \emph{uniform Lipschitz continuity}.\\

For a given mapping $\phi(\mathbf{x},A)$ defined on $\mathbf{x} \in \mathcal{X}$ and $A \in \mathbb{R}$, we say it is \emph{uniformly Lipschitz continuous} in $\mathbf{x}$ at $A = \bar{A}$ if there exists constants $L_{1}$ and $L_{2} \geqslant 0$ and an open neighborhood U of $\bar{A}$ such that:
\begin{equation}
    \norm {\phi(\mathbf{x}_{1},A)-\phi(\mathbf{x}_{2},A)} \leqslant L_{1}\norm{\mathbf{x}_{1}-\mathbf{x}_{2}}
\end{equation}
for all $\mathbf{x}_{1}, \mathbf{x}_{2} \in \mathcal{X}$ and $A \in U$; and
\begin{equation}
    \norm{\phi(\mathbf{x},A_{1})-\phi(\mathbf{x},A_{2})} \leqslant L_{2}(1+\norm{\mathbf{x}})\abs{A_{1}-A_{2}}
\end{equation}
for all $\mathbf{x} \in \mathcal{X}$ and $A_{1},A_{2} \in U$. \\

We discuss the required assumptions for the state evolution equations to hold in detail, and why they are verified in our setting, in appendix \ref{SE_assumptions}.
\section{Convex analysis and properties of proximal operators}
\label{appendix : prox_prop}
We start this section with a few useful definitions from convex analysis, which can all be found in textbooks such as \cite{bauschke2011convex}.
We then remind important properties of proximal operators, which we use in appendix \ref{conv_proof} to derive upper bounds on the Lipschitz constants of the non-linear operators $\tilde{\mathcal{O}}_{1}, \tilde{\mathcal{O}}_{2}$. In what follows, we denote $\mathcal{X}$ the Hilbert space with scalar inner product serving as input and output space, here $\mathbb{R}^{N}$ or $\mathbb{R}^{M}$. For simplicity, we will write all operators as going from $\mathcal{X}$ to $\mathcal{X}$.
\begin{definition}(Strong convexity) A proper closed function is $\sigma$-strongly convex with $\sigma >0$ if ${f-\frac{\sigma}{2}\norm{.}^{2}}$ is convex. If f is differentiable,
    the definition is equivalent to
    \begin{equation}
        f(x) \geqslant f(y) + \langle \nabla f(y), x-y \rangle + \frac{\sigma}{2} \norm{x-y}^{2}
    \end{equation}
    for all $x,y \in \mathcal{X}$.
\end{definition}

\begin{definition}(Smoothness for convex functions) A proper closed function $f$ is $\beta$-smooth with $\beta >0$ if $\frac{\beta}{2}\norm{.}^{2}-f$ is convex. If f is 
    differentiable, the definition is equivalent to 
    \begin{equation}
        f(x) \leqslant f(y) + \langle \nabla f(y), x-y \rangle + \frac{\beta}{2} \norm{x-y}^{2}
    \end{equation}
    for all $x,y \in \mathcal{X}$.
\end{definition}
An immediate consequence of those definitions is the following second order condition: for twice differentiable functions, $f$ is $\sigma$-strongly convex and $\beta$-smooth if and only if:
\begin{equation}
    \sigma \rm{Id} \preceq \mathcal{H}_{f} \preceq \beta \rm{Id}.
\end{equation}
\begin{definition}(Co-coercivity)
Let $T :\mathcal{X} \to \mathcal{X}$ and $\beta \in \mathbb{R}^{*}_{+}$. Then $T$ is $\beta$ co-coercive if $\beta T$ is firmly-nonexpansive, i.e.
\begin{equation}
    \langle \mathbf{x}-\mathbf{y}, T( \mathbf{x})-T(\mathbf{y}) \rangle \geqslant \beta \norm{T(\mathbf{x})-T(\mathbf{y})}_{2}^{2}
\end{equation}
for all $\mathbf{x},\mathbf{y} \in \mathcal{X}$.
\end{definition}
Proximal operators are 1 co-coercive or equivalently firmly-nonexpansive.
\begin{corollary}(Remark 4.24 \cite{bauschke2011convex}) A mapping $T : \mathcal{X} \to \mathcal{X}$ is $\beta$-cocoercive if and only if 
    $\beta$T is half-averaged. This means that T can be expressed as:
    \begin{equation}
        T = \frac{1}{2\beta}(\rm{Id}+S)
    \end{equation}
    where $S$ is a nonexpansive operator.
\end{corollary}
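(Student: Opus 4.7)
The statement is an equivalence, so the plan is to prove both directions by direct algebraic manipulation, exploiting the polarization identity $\|a-b\|^2=\|a\|^2-2\langle a,b\rangle+\|b\|^2$ in the Hilbert space $\mathcal{X}$. The key trick is to explicitly define the candidate nonexpansive operator $S := 2\beta T - \mathrm{Id}$ (so that $T=\frac{1}{2\beta}(\mathrm{Id}+S)$ holds by construction) and then recognize that the nonexpansiveness of $S$ and the $\beta$-cocoercivity of $T$ are, after expansion, the same inequality written in two different variables.

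For the forward direction, assume $T$ is $\beta$-cocoercive. I would compute, for arbitrary $x,y\in\mathcal{X}$,
\begin{equation*}
\|S(x)-S(y)\|^{2} = \bigl\|2\beta(T(x)-T(y))-(x-y)\bigr\|^{2}
= 4\beta^{2}\|T(x)-T(y)\|^{2} - 4\beta\langle x-y,\,T(x)-T(y)\rangle + \|x-y\|^{2}.
\end{equation*}
The cocoercivity inequality gives $\langle x-y,T(x)-T(y)\rangle \geq \beta\|T(x)-T(y)\|^2$, which when substituted into the middle term yields $\|S(x)-S(y)\|^{2}\le \|x-y\|^2$. Hence $S$ is nonexpansive, and the representation $T=\frac{1}{2\beta}(\mathrm{Id}+S)$ gives the half-averaged form.

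For the backward direction, assume $\beta T = \tfrac{1}{2}(\mathrm{Id}+S)$ with $S$ nonexpansive. Solve for $S = 2\beta T - \mathrm{Id}$ and run the same expansion in reverse: nonexpansiveness $\|S(x)-S(y)\|^2\le\|x-y\|^2$ becomes
\begin{equation*}
4\beta^{2}\|T(x)-T(y)\|^{2} - 4\beta\langle x-y,\,T(x)-T(y)\rangle \le 0,
\end{equation*}
which, after dividing by the strictly positive constant $4\beta$, is exactly the $\beta$-cocoercivity of $T$.

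Since the computation is a single polarization expansion read in both directions, there is no real obstacle here; the only minor care needed is to note $\beta>0$ so that the division step in the backward direction is legitimate, and to make clear that the half-averaged representation and the operator identity $S=2\beta T-\mathrm{Id}$ are two equivalent ways of stating the same relationship between $T$ and $S$, which allows the two directions to be proved by the same two-line identity.
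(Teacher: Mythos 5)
Your proof is correct and is the standard argument; the paper does not prove this statement but simply cites Remark 4.24 of Bauschke--Combettes, where essentially the same expand-the-squared-norm computation appears. (Minor quibble: what you invoke is just the expansion $\norm{a-b}^2=\norm{a}^2-2\langle a,b\rangle+\norm{b}^2$, not the polarization identity, but this does not affect the argument.)
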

\begin{proposition}(Resolvent of the sub-differential \cite{bauschke2011convex})
\label{prop_resolvent}
The proximal mapping of a convex function $f$ is the resolvent of the sub-differential $\partial f$ of $f$:
  \begin{equation}
      \mbox{Prox}_{\gamma f} = (\rm{Id}+\gamma \partial f)^{-1}.
  \end{equation}
\end{proposition}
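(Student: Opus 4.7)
The plan is to establish both inclusions $\mathrm{Prox}_{\gamma f}(z) \subseteq (\mathrm{Id}+\gamma\partial f)^{-1}(z)$ and the reverse, and to confirm that the right-hand side is single-valued so the statement makes sense as an equality of mappings. The starting point is the variational definition
\begin{equation}
\mathrm{Prox}_{\gamma f}(z) = \argmin_{x \in \mathcal{X}} \left\{ f(x) + \frac{1}{2\gamma}\|x - z\|^2 \right\}.
\end{equation}
Since $f$ is proper, closed, and convex and the quadratic term is coercive and strongly convex with modulus $1/\gamma$, the objective is proper, closed, and strongly convex, so the minimizer exists and is unique; this gives $\mathrm{Prox}_{\gamma f}$ as a well-defined single-valued map.

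Next I would apply Fermat's rule: a point $x$ is the minimizer if and only if $0$ belongs to the subdifferential of the objective at $x$. The quadratic $\tfrac{1}{2\gamma}\|\cdot - z\|^2$ is finite-valued and continuous on all of $\mathcal{X}$, which trivially satisfies the qualification condition for the Moreau–Rockafellar sum rule. Hence
\begin{equation}
0 \in \partial f(x) + \frac{1}{\gamma}(x - z),
\end{equation}
which rearranges to $z \in x + \gamma\,\partial f(x) = (\mathrm{Id} + \gamma\partial f)(x)$, i.e.\ $x \in (\mathrm{Id}+\gamma\partial f)^{-1}(z)$. This yields one inclusion; reversing each step (which is permissible since the optimality condition is necessary \emph{and} sufficient for a convex objective) gives the other inclusion, so the two point-sets coincide.

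It remains to check that $(\mathrm{Id}+\gamma\partial f)^{-1}$ is itself single-valued, so that the equality is an equality of functions rather than of set-valued maps. Since $f$ is convex, $\partial f$ is a monotone operator, and therefore $\mathrm{Id} + \gamma \partial f$ is strongly monotone with modulus $1$: if $z \in (\mathrm{Id}+\gamma\partial f)(x_1)$ and $z \in (\mathrm{Id}+\gamma\partial f)(x_2)$, subtracting and using $\langle u_1 - u_2, x_1 - x_2\rangle \geqslant 0$ for any $u_i \in \partial f(x_i)$ forces $\|x_1 - x_2\|^2 \leqslant 0$, hence $x_1 = x_2$. Combining this with the previous paragraph yields $\mathrm{Prox}_{\gamma f}(z) = (\mathrm{Id}+\gamma\partial f)^{-1}(z)$ for every $z \in \mathcal{X}$.

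There is no real obstacle in this proof; the only delicate point worth flagging explicitly is the sum rule for subdifferentials, which can fail in general but holds here without hypothesis thanks to the continuity and finiteness of the quadratic smoothing term. Everything else — existence, uniqueness, and the equivalence of the optimality condition with the resolvent inclusion — is a direct consequence of convexity and monotonicity.
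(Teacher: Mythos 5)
Your proof is correct, and since the paper does not prove this proposition itself but simply cites it from \cite{bauschke2011convex}, your argument --- existence and uniqueness of the minimizer by strong convexity, Fermat's rule combined with the Moreau--Rockafellar sum rule (valid here without qualification because the quadratic term is finite and continuous) to get $0 \in \partial f(x) + \frac{1}{\gamma}(x-z)$, and strong monotonicity of $\mathrm{Id}+\gamma\,\partial f$ for single-valuedness of the resolvent --- is exactly the standard derivation that the citation points to. Nothing is missing.
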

The following proposition is due to \cite{giselsson2016linear}, and is useful to determine upper bounds on the Lipschitz constant of update functions involving proximal operators.
\begin{proposition}(Proposition 2 from \cite{giselsson2016linear})
\label{proposition : gisel_idh}
Assume that $f$ is $\sigma$-strongly convex and 
    $\beta$-smooth and that $\gamma \in ]0, \infty[$. Then $\mbox{Prox}_{\gamma f} - \frac{1}{1+\gamma \beta}\rm{Id}$ is $\frac{1}{\frac{1}{1+\gamma \beta}-\frac{1}{1+\gamma \sigma}}$-cocoercive
    if $\beta > \sigma$ and 0-Lipschitz if $\beta = \sigma$. If $f$ has no smoothness constant, the same holds by taking $\beta = + \infty$.
\end{proposition}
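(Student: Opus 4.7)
The plan is to hide $\mbox{Prox}_{\gamma f}$ behind a convex potential and then invoke the Baillon--Haddad theorem. Concretely, one uses the representation $\mbox{Prox}_{\gamma f} = \nabla g$ for the convex function $g(x) = \tfrac{1}{2}\|x\|^{2} - \gamma \mathcal{M}_{\gamma f}(x)$, where $\mathcal{M}_{\gamma f}$ is the Moreau envelope already appearing in appendix \ref{appendix : prox_prop}; this follows from the standard identity $\nabla \mathcal{M}_{\gamma f}(x) = \gamma^{-1}(x - \mbox{Prox}_{\gamma f}(x))$ together with monotonicity of the proximal map. The proposition then reduces to showing that $h := g - \tfrac{1}{2(1+\gamma\beta)}\|\cdot\|^{2}$ is convex and has Lipschitz gradient with constant $L = \tfrac{1}{1+\gamma\sigma} - \tfrac{1}{1+\gamma\beta}$: once this is established, Baillon--Haddad applied to $h$ gives that $\nabla h = \mbox{Prox}_{\gamma f} - \tfrac{1}{1+\gamma\beta}\mbox{Id}$ is $1/L$-cocoercive, which is the stated conclusion.

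The Lipschitz bound on $\nabla g$ is the easier of the two. If $u = \mbox{Prox}_{\gamma f}(x)$ and $v = \mbox{Prox}_{\gamma f}(y)$, Proposition \ref{prop_resolvent} yields $x - u \in \gamma \partial f(u)$ and $y - v \in \gamma \partial f(v)$, and $\sigma$-strong convexity of $f$ implies $\sigma$-strong monotonicity of $\partial f$. Taking the inner product of $x - y = (u-v) + \gamma(s - s')$ with $u-v$, for some $s \in \partial f(u)$ and $s' \in \partial f(v)$, gives $\langle x-y, u-v\rangle \geq (1+\gamma\sigma)\|u-v\|^{2}$, whence $\|u-v\| \leq \tfrac{1}{1+\gamma\sigma}\|x-y\|$ by Cauchy--Schwarz.

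The strong convexity of $g$, equivalent to $\tfrac{1}{1+\gamma\beta}$-strong monotonicity of $\mbox{Prox}_{\gamma f}$, is the technical heart of the argument. Using $\beta$-smoothness of $f$, Baillon--Haddad applied to $f$ gives the cocoercivity estimate $\langle u-v, \nabla f(u) - \nabla f(v)\rangle \geq \tfrac{1}{\beta}\|\nabla f(u) - \nabla f(v)\|^{2}$, while $\beta$-Lipschitzness of $\nabla f$ gives the complementary bound $\langle u-v, \nabla f(u) - \nabla f(v)\rangle \leq \beta \|u-v\|^{2}$. Expanding $\|x-y\|^{2} = \|u-v\|^{2} + 2\gamma\langle u-v, \nabla f(u)-\nabla f(v)\rangle + \gamma^{2}\|\nabla f(u)-\nabla f(v)\|^{2}$, substituting the cocoercivity estimate, and comparing with $(1+\gamma\beta)\langle x-y, u-v\rangle$, reduces the target inequality to the non-negativity of $\gamma\beta \|u-v\|^{2} - \gamma \langle u-v, \nabla f(u)-\nabla f(v)\rangle$, which is exactly the complementary bound.

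Combining the two estimates, $g$ is simultaneously $\tfrac{1}{1+\gamma\beta}$-strongly convex and $\tfrac{1}{1+\gamma\sigma}$-smooth, so $h$ is convex and $L$-smooth and the stated cocoercivity follows. The degenerate case $\beta=\sigma$ collapses both Hessian bounds to the same scalar, forcing $\nabla h \equiv 0$, so $\mbox{Prox}_{\gamma f} - \tfrac{1}{1+\gamma\beta}\mbox{Id}$ is a constant map and thus $0$-Lipschitz. Passing $\beta \to +\infty$ recovers the case where $f$ has no smoothness constant: $\tfrac{1}{1+\gamma\beta} \to 0$ and the cocoercivity bound degenerates to the classical $(1+\gamma\sigma)$-cocoercivity of $\mbox{Prox}_{\gamma f}$. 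The main obstacle is the lower bound on the strong-monotonicity constant: the Lipschitz side uses nothing beyond monotonicity, but $\langle u-v, x-y\rangle \geq \tfrac{1}{1+\gamma\beta}\|x-y\|^{2}$ genuinely couples the implicit equation defining $\mbox{Prox}_{\gamma f}$ to Baillon--Haddad for $f$ itself, and this is where the smoothness of $f$ is actually consumed.
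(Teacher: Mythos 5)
Your proof is correct. The paper itself does not prove this proposition --- it is imported verbatim (citation to \cite{giselsson2016linear}) --- and your argument is essentially the standard one from that source: the potential $g(x)=\tfrac12\norm{x}^2-\gamma\mathcal{M}_{\gamma f}(x)$ you use is exactly the convex conjugate $(\gamma f+\tfrac12\norm{\cdot}^2)^*$, whose gradient is $\mbox{Prox}_{\gamma f}$, and Giselsson's proof likewise transfers the $(1+\gamma\sigma)$-strong convexity and $(1+\gamma\beta)$-smoothness of $\gamma f+\tfrac12\norm{\cdot}^2$ to $\tfrac{1}{1+\gamma\beta}$-strong convexity and $\tfrac{1}{1+\gamma\sigma}$-smoothness of the conjugate before invoking Baillon--Haddad; you simply replace the conjugate-duality step with direct monotonicity computations, which checks out (the reduction of the strong-monotonicity inequality to $\gamma\beta\norm{u-v}^2-\gamma\langle u-v,\nabla f(u)-\nabla f(v)\rangle\geqslant 0$ is exactly right). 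One remark: the constant you prove, $\bigl(\tfrac{1}{1+\gamma\sigma}-\tfrac{1}{1+\gamma\beta}\bigr)^{-1}$, is positive and is the one appearing in the cited reference; the statement as printed in the paper has the two fractions swapped, which would give a negative cocoercivity constant for $\beta>\sigma$ --- this is a typo that your proof silently (and correctly) fixes. A cosmetic quibble only: your closing remark that ``the Lipschitz side uses nothing beyond monotonicity'' understates it --- that bound consumes the $\sigma$-strong monotonicity of $\partial f$, not mere monotonicity.
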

We will use these definitions and properties to derive the Lipschitz constants of $\tilde{\mathcal{O}}_{1},\tilde{\mathcal{O}}_{2}$ in appendix \ref{conv_proof}.
\begin{lemma}{Jacobian of the proximal} \\
Using proposition \ref{prop_resolvent}, the proximal operator can be written, for any parameter $\gamma \in \mathbb{R^{+}}$ and $\mathbf{x}$ in the input space $\mathcal{X}$:
\begin{equation}
    \mbox{$\mbox{Prox}_{\gamma f}$}(\mathbf{x}) = \left(\rm{Id}+\gamma \partial f\right)^{-1}(\mathbf{x}).
\end{equation}
For any convex and differentiable function $f$, we have: 
\begin{equation}
    \mbox{$\mbox{Prox}_{\gamma f}$}(\mathbf{x})+\gamma \nabla f(\mbox{Prox}_{\gamma f}(\mathbf{x})) = \mathbf{x}
\end{equation}
For a twice differentiable $f$, applying the chain rule then yields:
\begin{equation}
    \mathcal{D}_{\mbox{$\mbox{Prox}_{\gamma f}$}}(\mathbf{x})+\gamma \mathcal{H}_{f}(\mbox{$\mbox{Prox}_{\gamma f}$}(\mathbf{x})) \mathcal{D}_{\mbox{$\mbox{Prox}_{\gamma f}$}}(\mathbf{x})  = \rm{Id}
\end{equation}
where $\mathcal{D}$ is the Jacobian matrix and $\mathcal{H}$ the Hessian. Since f is a convex function, its Hessian is positive semi-definite, and, knowing that $\gamma$ is strictly positive, the matrix $(\rm{Id}+\gamma \mathcal{H}_{f}(\mbox{$\mbox{Prox}_{\gamma f}$}))$ is invertible. We thus have: 
\begin{equation}
    \mathcal{D}_{\mbox{$\mbox{Prox}_{\gamma f}$}}(\mathbf{x}) = (\rm{Id}+\gamma \mathcal{H}_{f}(\mbox{$\mbox{Prox}_{\gamma f}$}(\mathbf{x})))^{-1}
\end{equation}
\end{lemma}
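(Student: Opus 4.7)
The plan is to derive the Jacobian formula directly from the first-order optimality characterization of the proximal operator, with the implicit function theorem providing the only nontrivial piece.

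First I would invoke Proposition \ref{prop_resolvent} to rewrite $\mbox{Prox}_{\gamma f}(\mathbf{x}) = (\text{Id} + \gamma \partial f)^{-1}(\mathbf{x})$. Because $f$ is assumed differentiable, $\partial f = \{\nabla f\}$ is single-valued, so this resolvent identity is equivalent to the implicit relation
\begin{equation}
\mathbf{x} = \mbox{Prox}_{\gamma f}(\mathbf{x}) + \gamma \nabla f\bigl(\mbox{Prox}_{\gamma f}(\mathbf{x})\bigr),
\end{equation}
which is exactly the stationarity condition of the strongly convex inner minimization defining the proximal.

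Next, assuming $f$ is $\mathcal{C}^2$, I would differentiate both sides with respect to $\mathbf{x}$ and apply the chain rule on the right to obtain
\begin{equation}
\text{Id} = \mathcal{D}_{\mbox{Prox}_{\gamma f}}(\mathbf{x}) + \gamma\,\mathcal{H}_f\bigl(\mbox{Prox}_{\gamma f}(\mathbf{x})\bigr)\,\mathcal{D}_{\mbox{Prox}_{\gamma f}}(\mathbf{x}) = \bigl(\text{Id} + \gamma\,\mathcal{H}_f(\mbox{Prox}_{\gamma f}(\mathbf{x}))\bigr)\,\mathcal{D}_{\mbox{Prox}_{\gamma f}}(\mathbf{x}).
\end{equation}
Convexity of $f$ gives $\mathcal{H}_f \succeq 0$, hence $\text{Id} + \gamma \mathcal{H}_f \succeq \text{Id}$ is invertible for every $\gamma>0$, and left-multiplying by its inverse yields the claimed formula.

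The only subtle step is justifying that $\mbox{Prox}_{\gamma f}$ is itself differentiable so that the chain rule applies. I would do this by the implicit function theorem applied to $F(\mathbf{x},\mathbf{y}) = \mathbf{y} + \gamma\nabla f(\mathbf{y}) - \mathbf{x}$: the partial Jacobian $\partial_\mathbf{y} F = \text{Id} + \gamma \mathcal{H}_f(\mathbf{y})$ is invertible by the same positive-definiteness argument, so $\mathbf{y}(\mathbf{x}) = \mbox{Prox}_{\gamma f}(\mathbf{x})$ is $\mathcal{C}^1$ with the derivative computed above. This is essentially routine; the real content of the lemma is the algebraic identity rather than any deep obstacle, and no further argument is required.
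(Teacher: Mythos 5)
Your proposal follows exactly the paper's route: rewrite the proximal as the resolvent $(\mathrm{Id}+\gamma\partial f)^{-1}$, turn it into the implicit relation $\mathbf{x} = \mbox{Prox}_{\gamma f}(\mathbf{x}) + \gamma\nabla f(\mbox{Prox}_{\gamma f}(\mathbf{x}))$, differentiate via the chain rule, and invert $\mathrm{Id}+\gamma\mathcal{H}_f$ using positive semi-definiteness of the Hessian. Your extra step of justifying differentiability of the proximal via the implicit function theorem is a small, welcome addition that the paper leaves implicit, but the argument is otherwise identical.
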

\begin{lemma}{Proximal of ridge regularized functions} \\
\label{prox_ridge}
Since we consider only separable functions, we can work with scalar version of the proximal operators.
The scalar proximal of a given function with an added ridge regularization can be written:
\begin{align}
    \mbox{Prox}_{\gamma (f+\frac{\lambda_{2}}{2}\norm{.}_{2}^{2})}(x) &= (\rm{Id}+\gamma (\partial f+\lambda_{2}))^{-1}(x) \\
    &= ((1+\gamma\lambda_{2})Id+\gamma f')^{-1}(x)
\end{align}
where the second equality is true only for differentiable $f$.
If $f$ is real analytic, we can apply the analytic inverse function theorem \cite{krantz2002primer} and verify analyticity in $\lambda_{2}$ of the proximal.
\end{lemma}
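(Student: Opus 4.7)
The plan is to split the lemma into its three assertions and dispatch each with standard tools from convex analysis. For the first identity, I would apply Proposition~\ref{prop_resolvent} to the augmented function $h := f + \frac{\lambda_2}{2}\|\cdot\|_2^2$. The only step requiring justification is the subdifferential calculation $\partial h = \partial f + \lambda_2\,\mathrm{Id}$, which follows from the Moreau--Rockafellar sum rule: the quadratic term is everywhere finite and differentiable, so its subdifferential reduces to the singleton $\{\lambda_2 x\}$ and the sum rule applies unconditionally. Substituting into the resolvent identity then yields the displayed expression.

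For the second equality, assuming $f$ differentiable makes $\partial f$ single-valued, so the operator inside the resolvent becomes $x \mapsto x + \gamma f'(x) + \gamma \lambda_2 x$. Regrouping the two linear terms produces $(1+\gamma\lambda_2)\mathrm{Id} + \gamma f'$, as claimed.

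For the analyticity assertion, I would invoke the analytic inverse function theorem. Define $\Phi(x,\lambda_2) := (1+\gamma\lambda_2)x + \gamma f'(x)$; real analyticity of $f$ implies real analyticity of $f'$, and hence of $\Phi$, which is therefore jointly real analytic in $(x,\lambda_2)$. Convexity of $f$ gives $f'' \geq 0$, so $\partial_x \Phi = (1+\gamma\lambda_2) + \gamma f''(x) \geq 1 > 0$ for every $\lambda_2 \geq 0$; this simultaneously ensures strict monotonicity of $\Phi(\cdot,\lambda_2)$ (hence global invertibility in $x$) and the nonvanishing Jacobian hypothesis required by the theorem. The theorem then produces a jointly real analytic inverse in a neighborhood of every point, which is exactly the scalar proximal of $\gamma h$ viewed as a function of $(x,\lambda_2)$. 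I do not foresee a substantive obstacle: the lemma is essentially bookkeeping, and the only delicate point is insisting on the joint (rather than single-variable) version of the analytic inverse function theorem in order to obtain analyticity in $\lambda_2$ and not merely in $x$.
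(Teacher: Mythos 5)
Your proof is correct and follows essentially the same route as the paper, which treats this lemma as a near-immediate consequence of Proposition~\ref{prop_resolvent} together with the analytic inverse function theorem. Your additional care in justifying the subdifferential sum rule and in checking that $\partial_x\Phi = (1+\gamma\lambda_2)+\gamma f''(x)\geqslant 1>0$ (so that the \emph{joint} analytic inverse/implicit function theorem applies and yields analyticity in $\lambda_2$, not only in $x$) merely fills in details the paper leaves implicit.
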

Finally, we remind a result from \cite{bauschke2011convex} describing the limiting behavior of regularized estimators for vanishing regularization.
\begin{proposition}(Theorem 26.20 from \cite{bauschke2011convex})
\label{approx_l2}
  Let f and h be proper, lower semi-continuous, convex functions defined on $\mathcal{X}$. Suppose that $\argmin f \cap \mbox{dom}(h) \neq \emptyset$ and that $h$ is coercive and strictly convex. Then $h$ admits a unique minimizer $\mathbf{x}_{0}$ over $\argmin f$ and , for every $\epsilon \in ]0,1[$, the regularized problem \begin{equation}
      \argmin_{\mathbf{x} \in \mathcal{X}} f(\mathbf{x})+\epsilon h(\mathbf{x})
  \end{equation}
  admits a unique solution $\mathbf{x}_{\epsilon}$. If we assume further that $h$ is uniformly convex on any closed ball of the input space, then $\lim_{\epsilon \to 0} \mathbf{x}_{\epsilon} = \mathbf{x}_{0}$.
\end{proposition}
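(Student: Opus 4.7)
The plan is to verify the three claims in order---existence and uniqueness of $x_0$, existence and uniqueness of $x_\epsilon$, and the limit $x_\epsilon \to x_0$---and then to isolate the role of the uniform-convexity-on-balls hypothesis, which enters only in the last step. Throughout I write $x_*$ for any fixed reference point in $\argmin f \cap \mathrm{dom}(h)$, which is nonempty by hypothesis.

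For the first claim, $\argmin f$ is closed and convex since $f$ is proper, lsc, and convex. The restriction of $h$ to this set inherits properness, lsc, and strict convexity; its sublevel sets are intersections of the (bounded) sublevel sets of $h$ with $\argmin f$, hence bounded, so the restriction remains coercive. Weierstrass gives existence and strict convexity gives uniqueness of $x_0$. For the second claim, the regularized objective $J_\epsilon := f + \epsilon h$ is proper, lsc, and strictly convex (the last because $\epsilon h$ is). Any proper lsc convex function admits a continuous affine minorant, so $f(x) \geq -a - b\|x\|$ for some constants $a,b$; combined with coercivity of $\epsilon h$ this makes $J_\epsilon$ coercive, and the same Weierstrass-plus-strict-convexity argument yields a unique minimizer $x_\epsilon$.

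For the third claim, plugging $x_*$ into the optimality of $x_\epsilon$ and using $f(x_\epsilon) \geq \min f = f(x_*)$ yields the a priori bound $h(x_\epsilon) \leq h(x_*)$. Hence $\{x_\epsilon\}_{\epsilon \in (0,1)}$ lies in the bounded sublevel set $\{h \leq h(x_*)\}$. A standard lsc argument shows that any cluster point $\bar x$ belongs to $\argmin f$ (since $f(x_\epsilon) \leq f(x_*) + \epsilon(h(x_*) - h(x_\epsilon))$ with the right-hand side converging to $f(x_*)$) and in fact minimizes $h$ over $\argmin f$ (from the inequality $h(x_\epsilon) \leq h(y)$ for every $y \in \argmin f \cap \mathrm{dom}(h)$ combined with lsc of $h$). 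By uniqueness in the first claim every cluster point equals $x_0$, and in particular $h(x_\epsilon) \to h(x_0)$.

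The main obstacle is upgrading ``every cluster point is $x_0$'' into norm convergence of the whole family. In finite dimension a bounded family with a unique cluster point converges automatically, but in a general Hilbert setting this step requires the uniform-convexity-on-closed-balls hypothesis. Applied on a fixed ball containing all $x_\epsilon$ and $x_0$, it gives an inequality of the form $\tfrac{1}{2}h(x_\epsilon) + \tfrac{1}{2}h(x_0) \geq h\bigl(\tfrac{x_\epsilon + x_0}{2}\bigr) + \phi(\|x_\epsilon - x_0\|)$ for a modulus $\phi$ that is nonnegative and vanishes only at zero. Passing to the limit, using $h(x_\epsilon) \to h(x_0)$ together with lsc of $h$ at the midpoint, forces $\phi(\|x_\epsilon - x_0\|) \to 0$ and hence $\|x_\epsilon - x_0\| \to 0$. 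This modulus step is the only piece beyond routine Weierstrass/lsc reasoning, and it is precisely where strict convexity alone is insufficient.
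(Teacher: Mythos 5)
The paper gives no proof of this proposition---it is quoted as Theorem 26.20 of Bauschke--Combettes and used as a black box---so yours is the only argument to assess. It follows the standard Tikhonov/viscosity-selection route and is essentially correct: the a priori bound $h(x_\epsilon)\leqslant h(x_*)$, the lower-semicontinuity argument identifying every cluster point with $x_0$, the value convergence $h(x_\epsilon)\to h(x_0)$, and the use of the modulus of uniform convexity on a ball containing the whole family to convert value convergence into norm convergence are all the right steps; you also correctly isolate the uniform-convexity hypothesis as doing work only in infinite dimension (in $\mathbb{R}^N$, where the paper applies the result, boundedness plus a unique cluster point already finishes the job). Two pieces of bookkeeping should be made explicit: ``Weierstrass'' and ``lsc at the midpoint'' must be read in the weak topology (convex lsc functions are weakly lsc and bounded closed convex sets are weakly compact), and one should note that boundedness together with uniqueness of the weak cluster point upgrades to weak convergence of the whole family before the midpoint inequality is passed to the limit (or else run the modulus argument along subsequences).

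The one step whose justification fails as written is the coercivity of $J_\epsilon=f+\epsilon h$. You argue from an affine minorant $f(x)\geqslant -a-b\norm{x}$ that coercivity of $\epsilon h$ forces coercivity of the sum; this inference is false when $b>0$, because in the terminology used here ``coercive'' only means $h(x)\to+\infty$ as $\norm{x}\to\infty$ (not supercoercive), and a coercive convex $h$ can grow with arbitrarily small linear rate along some directions, so $\epsilon h(x)-b\norm{x}$ need not tend to $+\infty$. The hypothesis that rescues the claim is $\argmin f\neq\emptyset$: it gives $f\geqslant\min f>-\infty$, hence $J_\epsilon\geqslant \min f+\epsilon h$ is coercive directly. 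With that one-line replacement the proof is complete.
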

\section{From replica potentials to Moreau envelopes}
\label{app:rep_mor}
Here we show how the potentials defined for the replica free energy of corollary $\ref{cor_free}$ can be mapped to Moreau envelopes in the zero temperature limit, i.e. $\beta \to \infty$ where $\beta$ is the inverse temperature. We consider the scalar case since the replica expressions are scalar. All functions are separable here, so any needed generalization to the multidimensional case is immediate. We start by reminding the definition of the Moreau envelope \cite{bauschke2011convex,parikh2014proximal} $\mathcal{M}_{\gamma f}$ of a proper, closed and convex function $f$ for a given $\gamma \in \mathbb{R}^{*}_{+}$ and any $z \in \mathbb{R}$:
\begin{equation}
    \mathcal{M}_{\gamma f}(z) = \inf_{x \in \mathbb{R}}\left\lbrace f(x)+(1/2\gamma)\norm{x-z}_{2}^{2}\right\rbrace
\end{equation}
The Moreau envelope can be interpreted as a smoothed version of a given objective function with the same minimizer. For $\ell_{1}$ minimization for example, it allows to work with a differentiable objective. By definition of the proximal operator we have the following identity:
\begin{align}
    \mbox{Prox}_{\gamma f}(z) &= \argmin_{x \in \mathbb{R}} \left\lbrace f(x)+(1/2\gamma)\norm{x-z}_{2}^{2}\right\rbrace \\
    \mathcal{M}_{\gamma f}(z) &= f(\mbox{Prox}_{\gamma f}(z))+\frac{1}{2}\norm{\mbox{Prox}_{\gamma f}(z)-z}^{2}_{2}
\end{align}
We can now match the replica potentials with the Moreau envelope. We start from the definition of said potentials, to which we apply Laplace's approximation:
\begin{align}
    &\phi_x(\hat{m}_{1x}, \hat{Q}_{1x}, \hat{\chi}_{1x}; x_0, \xi_{1x})
    =\lim_{\beta \rightarrow \infty}...\notag\\
    &\dfrac{1}{\beta} \log \int e^{-\frac{\beta\hat{Q}_{1x}}{2}x^2 + \beta(\hat{m}_{1x} x_0 + \sqrt{\hat{\chi}_{1x}}\xi_{1x})x - \beta f(x)}dx \\
    &= -\frac{\hat{Q}_{1x}}{2}(x^{*})^2 + (\hat{m}_{1x} x_0 + \sqrt{\hat{\chi}_{1x}}\xi_{1x})x^{*}- f(x^{*})
\end{align}
where
\begin{align}
x^{*} = \argmin_{x} \bigg\lbrace -&\frac{\hat{Q}_{1x}}{2}x^2+...\notag\\ &(\hat{m}_{1x} x_0 + \sqrt{\hat{\chi}_{1x}}\xi_{1x})x - f(x) \bigg\rbrace
\end{align}
This is an unconstraint convex optimization problem, thus its optimality condition is enough to characterize its set of minimizers:
\begin{align}
    &-\hat{Q}_{1x}x^{*}+(\hat{m}_{1x} x_0 + \sqrt{\hat{\chi}_{1x}}\xi_{1x})-\partial f(x^{*}) = 0 \\
    &\iff x^{*} = (Id+\frac{1}{\hat{Q}_{1x}}\partial f)^{-1}\left(\frac{\hat{m}_{1x} x_0 + \sqrt{\hat{\chi}_{1x}}\xi_{1x}}{\hat{Q}_{1x}}\right)\\
    &\iff x^{*} = \mbox{Prox}_{\frac{f}{\hat{Q}_{1x}}}\left(\frac{\hat{m}_{1x} x_0 + \sqrt{\hat{\chi}_{1x}}\xi_{1x}}{\hat{Q}_{1x}}\right)
\end{align}
Replacing this in the replica potential and completing the square, we get:
\begin{align}
    &\phi_{x}(\hat{m}_{1x}, \hat{Q}_{1x}, \hat{\chi}_{1x}; x_0, \xi_{1x}) = -f(\mbox{Prox}_{\gamma f}(X))...\notag\\
    &\hspace{2cm}-\frac{\hat{Q}_{1x}}{2}\norm{X-\mbox{Prox}_{\gamma f}(X)}_{2}^{2}+\frac{X^{2}}{2}\hat{Q}_{1x} \\
    &= \hat{Q}_{1x}\frac{X^{2}}{2}-\mathcal{M}_{\frac{1}{\hat{Q}_{1x}}f}(X)
\end{align}
where we used the shorthand $X =\frac{\hat{m}_{1x} x_0 + \sqrt{\hat{\chi}_{1x}}\xi_{1x}}{\hat{Q}_{1x}}$. 
\section{Fixed point of multilayer vector approximate message passing}
\label{fixed_point_proof}
Here we show that the fixed point of 2-layer MLVAMP coincides with the optimality condition of the convex problem \ref{student}, proving Lemma \ref{VAMP_fixed}.
Writing the fixed point of the scalar parameters of algorithm (\ref{GVAMP}), we get the following prescriptions on the scalar quantities:
\begin{align}
    &\dfrac{1}{\chi_x} \equiv \dfrac{1}{\chi_{1x}} = \dfrac{1}{\chi_{2x}} = \hat{Q}_{1x} + \hat{Q}_{2x}\\
    &\dfrac{1}{\chi_z} \equiv \dfrac{1}{\chi_{1z}} = \dfrac{1}{\chi_{2z}} = \hat{Q}_{1z} + \hat{Q}_{2z}  \label{var_fixedpoint}\\
    &\hat{Q}_{1x} \chi_{1x} + \hat{Q}_{2x} \chi_{2x} = 1 \\
    &\hat{Q}_{1z} \chi_{1z} + \hat{Q}_{2z} \chi_{2z} = 1 \label{alpha_fixedpoint}
\end{align}
and the following ones on the estimates, as proved in \cite{pandit2020inference} section III:
\begin{align}
    \hat{\mathbf{x}}_{1} &= \hat{\mathbf{x}}_{2} \hspace{0.8cm}
    \hat{\mathbf{z}}_{1} = \hat{\mathbf{z}}_{2} \\
    \hat{\mathbf{z}}_{1} &= \mathbf{F}\hat{\mathbf{x}}_{1} \hspace{0.52cm} 
    \hat{\mathbf{z}}_{2} = \mathbf{F}\hat{\mathbf{x}}_{2}
\end{align}
We would like the fixed point of MLVAMP to satisfy the following first-order optimality condition
\begin{equation}
    \partial f(\mathbf{\hat{x}}) + \mathbf{F}^T \partial g(\mathbf{F \hat{x}})=0, \label{firstorder_condition}
\end{equation}
which characterizes the unique minimizer of the unconstraint convex problem (\ref{student}).
Replacing $\mathbf{h}_{1x}$'s expression inside $\mathbf{h}_{2x}$ reads
\begin{align}
    \mathbf{h}_{2x} &= \left(\dfrac{\mathbf{\hat{x}}_1}{\chi_x}-\hat{Q}_{1x} \mathbf{h}_{1x}\right)/\hat{Q}_{2x}\\
    &= \left(\dfrac{\mathbf{\hat{x}}_1}{\chi_x}-\left(\frac{\mathbf{\hat{x}}_2}{\chi_{x}}-\hat{Q}_{2x} \mathbf{h}_{2x}\right)\right)/\hat{Q}_{2x}
\end{align}
and using~\eqref{var_fixedpoint} we get $\mathbf{\hat{x}}_1=\mathbf{\hat{x}}_2$, and a similar reasoning gives  $\mathbf{\hat{z}}_2=\mathbf{\hat{z}}_1$. From~\eqref{g1+} and~\eqref{g1-}, we clearly find $\mathbf{\hat{z}}_2= \mathbf{F \hat{x}_2}$. Inverting the proximal operators in~\eqref{prox-f} and~\eqref{prox-g} yields
\begin{align}
    \mathbf{\hat{x}}_1+\frac{1}{\hat{Q}_{1x}}\partial g(\mathbf{\hat{x}}_1)&= \mathbf{h}_{1x} \label{r0-}\\
    \mathbf{\hat{z}}_1+\frac{1}{\hat{Q}_{1z}}\partial g(\mathbf{\hat{z}}_1)&= \mathbf{h}_{1z}.
\end{align}
Starting from the MLVAMP equation on $\mathbf{h}_{1x}$, we write
\begin{align}
    \mathbf{h}_{1x} &= \left(\frac{\mathbf{\hat{x}}_2}{\chi_x} - \hat{Q}_{2x} \mathbf{h}_{2x}\right) / \hat{Q}_{1x} \\
    &= \frac{\left(\frac{\mathbf{\hat{x}}_2}{\chi_x}  - (\hat{Q}_{2z}\mathbf{F}^T \mathbf{F} + \hat{Q}_{2x} \mbox{Id})\mathbf{\hat{x}}_2 + \hat{Q}_{2z} \mathbf{F}^T \mathbf{h}_{2z}\right)}{\hat{Q}_{1x}} \\
    &= - \frac{\left( \hat{Q}_{2z} \mathbf{F}^T \mathbf{F} + \hat{Q}_{2x} \left( 1 - \dfrac{1}{\chi_x \hat{Q}_{2x}}\right) \mbox{Id} \right) \mathbf{\hat{x}}_2}{ \hat{Q}_{2x}} \\
    &+\mathbf{F}^T \left( \hat{Q}_{1z} \left( \dfrac{1}{\chi_z \hat{Q}_{1z}}-1\right) \mathbf{\hat{z}}_1 - \partial\mathbf{g}(\mathbf{\hat{z}}_1) \right)
\end{align}
which is equal to the left-hand term in~\eqref{r0-}. Using this equality, as well as $\mathbf{\hat{z}}_1 = \mathbf{F \hat{x}_2}$ and relations~\eqref{var_fixedpoint} and~\eqref{alpha_fixedpoint} yields
\begin{equation}
    \partial f(\mathbf{\hat{x}}_2) + \mathbf{F}^T \partial g(\mathbf{F \hat{x}_2})=0.
\end{equation}
Hence, the fixed point of MLVAMP satisfies the optimality condition~\eqref{firstorder_condition} and is indeed the desired estimator: $\mathbf{\hat{x}}_1 = \mathbf{\hat{x}}_2 = \mathbf{\hat{x}}$.

\section{State evolution equations}
\label{SE_append}
This appendix is intended mainly for completeness, to show that the fixed point equations from Theorem \ref{main_th}, stemming from the heuristic state evolution written in \cite{takahashi2022macroscopic} are indeed made rigorous by the results presented in \cite{fletcher2018inference}.
\subsection{Heuristic state evolution equations}
The state evolution equations track the evolution of MLVAMP~\eqref{GVAMP} and provide statistical properties of its iterates. They are derived in~\cite{takahashi2022macroscopic} taking the heuristic assumption that $\mathbf{h_{1x}}, \mathbf{h_{1z}}, \mathbf{h_{2x}}, \mathbf{h_{2z}}$ behave as Gaussian estimates, which comes from the physics cavity approach:
\begin{subequations}
\label{SE-assumption}
\begin{align}
    \hat{Q}_{1x}^{(t)}\mathbf{h}_{1x}^{(t)} - \hat{m}_{1x}^{(t)} \mathbf{x_0} &\stackrel{PL2} = \sqrt{\hat{\chi}_{1x}^{(t)}} \boldsymbol{\xi}_{1x}^{(t)}\\
    \mathbf{V}^T (\hat{Q}_{2x}^{(t)} \mathbf{h}_{2x}^{(t)} - \hat{m}_{2x}^{(t)} \mathbf{x_0}) &\stackrel{PL2} = \sqrt{\hat{\chi}_{2x}^{(t)}} \boldsymbol{\xi}_{2x}^{(t)}
    \\
     \mathbf{U}^T (\hat{Q}_{1z}^{(t)}\mathbf{h}_{1z}^{(t)} - \hat{m}_{1z}^{(t)} \mathbf{z_0}) &\stackrel{PL2} = \sqrt{\hat{\chi}_{1z}^{(t)}} \boldsymbol{\xi}_{1z}^{(t)} \\
   \hat{Q}_{2z}^{(t)}\mathbf{h}_{2z}^{(t)} - \hat{m}_{2z}^{(t)} \mathbf{z_0} &\stackrel{PL2} = \sqrt{\hat{\chi}_{2z}^{(t)}} \boldsymbol{\xi}_{2z}^{(t)}
\end{align}
\end{subequations}
where $\stackrel{PL2} =$ denotes $PL2$ convergence. $\mathbf{U}$ and $\mathbf{V}$ come from the singular value decomposition $\mathbf{F}=\mathbf{U D V}^T$ and are Haar-sampled; $\mathbf{\xi}_{1x}^{(t)}, \mathbf{\xi}_{2x}^{(t)}, \mathbf{\xi}_{1z}^{(t)}, \mathbf{\xi}_{2z}^{(t)}$ are normal Gaussian vectors, independent from $\mathbf{x_0}, \mathbf{z_0}, \mathbf{V}^T \mathbf{x_0}$ and $\mathbf{U}^{T} \mathbf{z_0}$. Parameters $\hat{Q}_{1x}^{(t)}, \hat{Q}_{1z}^{(t)}$, $\hat{Q}_{2x}^{(t)}, \hat{Q}_{2z}^{(t)}$ are defined through MLVAMP's iterations~\eqref{GVAMP}; while parameters $\hat{m}_{1x}^{(t)}, \hat{m}_{1z}^{(t)}, \hat{m}_{2x}^{(t)}, \hat{m}_{2z}^{(t)}$ and $\hat{\chi}_{1x}^{(t)}, \hat{\chi}_{1z}^{(t)}, \hat{\chi}_{2x}^{(t)}, \hat{\chi}_{2z}^{(t)}$ are prescribed through SE equations. Other useful variables are the overlaps and squared norms of estimators, for $k\in \{ 1,2\}$:

\begin{align*}
    m_{kx}^{(t)} &= \frac{\mathbf{x}_0^\top \hat{\mathbf{x}}_k^{(t)}}{N} \quad \quad q_{kx}^{(t)} =\frac{ \|\hat{\mathbf{x}}_{k}^{(t)}\|_2^2}{N} \\
    m_{kz}^{(t)} &= \frac{\mathbf{z}_0^\top \hat{\mathbf{z}}_k^{(t)}}{M} \quad \quad q_{kz}^{(t)} = \frac{\|\hat{\mathbf{z}}_{k}^{(t)}\|_2^2}{M}.
\end{align*}
Starting from assumptions~\eqref{SE-assumption}, and following the derivation of~\cite{takahashi2022macroscopic} adapted to the iteration order from~\eqref{GVAMP}, the heuristic state evolution equations read: 
\begin{subequations}
\label{Kaba-fullSE}
\begin{align}
    \text{Initialize} \quad &\hat{Q}_{1x}^{(0)}, \hat{Q}_{2z}^{(0)}, \hat{m}_{1x}^{(0)}, \hat{m}_{2z}^{(0)}, \hat{\chi}_{1x}^{(0)}, \hat{\chi}_{2z}^{(0)}>0. \notag\\
    m_{1x}^{(t)}&=\mathbb{E}\left[x_{0}\eta_{f/\hat{Q}_{1x}^{(t)}}\left(\frac{\hat{m}_{1x}^{(t)}x_{0}+\sqrt{\hat{\chi}^{(t)}_{1x}}\xi_{1x}^{(t)}}{\hat{Q}_{1x}^{(t)}}\right)\right] \\
    \chi_{1x}^{(t)}&= \dfrac{1}{\hat{Q}_{1x}^{(t)}} \mathbb{E}\left[\eta'_{f/\hat{Q}_{1x}^{(t)}}\left(\frac{\hat{m}_{1x}^{(t)}x_{0}+\sqrt{\hat{\chi}^{(t)}_{1x}}\xi_{1x}^{(t)}}{\hat{Q}_{1x}^{(t)}}\right)\right] \\
    q_{1x}^{(t)}&= \mathbb{E}\left[\eta^{2}_{f/\hat{Q}_{1x}^{(t)}}\left(\frac{\hat{m}_{1x}^{(t)}x_{0}+\sqrt{\hat{\chi}^{(t)}_{1x}}\xi_{1x}^{(t)}}{\hat{Q}_{1x}^{(t)}}\right)\right] \label{q1x}\\
    \hat{Q}_{2x}^{(t)} &= \frac{1}{\chi_{1x}^{(t)}} - \hat{Q}_{1x}^{(t)} 
    \\
    \hat{m}_{2x}^{(t)} &= \frac{m_{1x}^{(t)}}{\rho_x \chi_{1x}^{(t)}} - \hat{m}_{1x}^{(t)} \label{m2hat}\\
     \hat{\chi}_{2x}^{(t)} &= \frac{q_{1x}^{(t)}}{(\chi_{1x}^{(t)})^2} - \frac{(m_{1x}^{(t)})^2}{\rho_x (\chi_{1x}^{(t)})^2} -  \hat{\chi}_{1x}^{(t)} \label{chi2xhat}\\
    m_{2z}^{(t)} &= \frac{\rho_x}{\alpha} \mathbb{E}\left[\frac{\lambda(\hat{m}_{2x}^{(t)} + \lambda \hat{m}_{2z}^{(t)})}{\hat{Q}_{2x}^{(t)} + \lambda \hat{Q}_{2z}^{(t)}}\right]\\
  \chi_{2z}^{(t)} &= \frac{1}{\alpha}\mathbb{E}\left[\frac{\lambda}{\hat{Q}_{2x}^{(t)} + \lambda\hat{Q}_{2z}^{(t)}}\right]\\
  q_{2z}^{(t)} &= \frac{1}{\alpha}\mathbb{E}\left[\frac{\lambda(\hat{\chi}_{2x}^{(t)} + \lambda\hat{\chi}_{2z}^{(t)})}{(\hat{Q}_{2x}^{(t)} + \lambda\hat{Q}_{2z}^{(t)})^2}\right]\\
  &\hspace{0.75cm}+\frac{\rho_x}{\alpha}\mathbb{E} \left[\frac{\lambda(\hat{m}_{2x}^{(t)} + \lambda\hat{m}_{2z}^{(t)})^2}{(\hat{Q}_{2x}^{(t)} + \lambda\hat{Q}_{2z}^{(t)})^2}\right]\\
  \hat{Q}_{1z}^{(t)} &= \frac{1}{\chi_{2z}^{(t)}} - \hat{Q}_{2z}^{(t)}\\
  \hat{m}_{1z}^{(t)} &= \frac{m_{2z}^{(t)}}{\rho_z \chi_{2z}^{(t)}} - \hat{m}_{2z}^{(t)}
      \end{align}
\begin{align}
   \hat{\chi}_{1z}^{(t)} &= \frac{q_{2z}^{(t)}}{(\chi_{2z}^{(t)})^2} - \frac{(m_{2z}^{(t)})^2}{\rho_z (\chi_{2z}^{(t)})^2} -  \hat{\chi}_{2z}^{(t)}\\
m_{1z}^{(t)}&=\mathbb{E}\left[z_{0}\eta_{g(y,.)/\hat{Q}_{1z}^{(t)}}\left(\frac{\hat{m}_{1z}^{(t)}z_{0}+\sqrt{\hat{\chi}^{(t)}_{1z}}\xi_{1z}^{(t)}}{\hat{Q}_{1z}^{(t)}}\right)\right]
\\
 \chi_{1z}^{(t)}&= \dfrac{1}{\hat{Q}_{1z}^{(t)}} \mathbb{E}\left[\eta'_{g(y,.)/\hat{Q}_{1z}^{(t)}}\left(\frac{\hat{m}_{1z}^{(t)}z_{0}+\sqrt{\hat{\chi}^{(t)}_{1z}}\xi_{1z}^{(t)}}{\hat{Q}_{1z}^{(t)}}\right)\right]
\\
q_{1z}^{(t)} &= \mathbb{E}\left[\eta^{2}_{g(y,.)/\hat{Q}_{1z}^{(t)}}\left(\frac{\hat{m}_{1z}^{(t)}z_{0}+\sqrt{\hat{\chi}^{(t)}_{1z}}\xi_{1z}^{(t)}}{\hat{Q}_{1z}^{(t)}}\right)\right]\\
\hat{Q}_{2z}^{(t+1)} &= \frac{1}{\chi_{1z}^{(t)}} - \hat{Q}_{1z}^{(t)}
\\
   \hat{m}_{2z}^{(t+1)}&= \frac{m_{1z}^{(t)}}{\rho_z \chi_{1z}^{(t)}} - \hat{m}_{1z}^{(t)} 
    \\
    \hat{\chi}_{2z}^{(t+1)} &= \frac{q_{1z}^{(t)}}{(\chi_{1z}^{(t)})^2} - \frac{(m_{1z}^{(t)})^2}{\rho_z (\chi_{1z}^{(t)})^2} -  \hat{\chi}_{1z}^{(t)}\\
    m_{2x}^{(t+1)} &= \rho_x \mathbb{E}
    \left[\frac{\hat{m}_{2x}^{(t)} + \lambda \hat{m}_{2z}^{(t+1)}}{\hat{Q}_{2x}^{(t)} + \lambda \hat{Q}_{2z}^{(t+1)}}\right] \label{chi2z}\\
    \chi_{2x}^{(t+1)} &= \mathbb{E}\left[\frac{1}{\hat{Q}_{2x}^{(t)} + \lambda \hat{Q}_{2z}^{(t+1)}}\right]\\
    q_{2x}^{(t+1)} &= \mathbb{E}
    \left[\frac{\hat{\chi}_{2x}^{(t)} + \lambda\hat{\chi}_{2z}^{(t+1)}}{(\hat{Q}_{2x}^{(t)} + \lambda\hat{Q}_{2z}^{(t+1)})^2}\right]\\
 &\hspace{0.75cm}+\rho_x \mathbb{E}\left[\frac{(\hat{m}_{2x}^{(t+1)} + \lambda\hat{m}_{2z}^{(t+1)})^2}{(\hat{Q}_{2x}^{(t)} + \lambda\hat{Q}_{2z}^{(t+1)})^2}\right] \\
  \hat{Q}_{1x}^{(t+1)} &= \frac{1}{\chi_{2x}^{(t+1)}} - \hat{Q}_{2x}^{(t)}
    \\
    \hat{m}_{1x}^{(t+1)} &= \frac{m_{2x}^{(t+1)}}{\rho_x \chi_{2x}^{(t+1)}} - \hat{m}_{2x}^{(t)}
    \\
    \hat{\chi}_{1x}^{(t+1)} &= \frac{q_{2x}^{(t+1)}}{(\chi_{2x}^{(t+1)})^2} - \frac{(m_{2x}^{(t+1)})^2}{\rho_x (\chi_{2x}^{(t+1)})^2} -  \hat{\chi}_{2x}^{(t)}.
\end{align}
\end{subequations}
We are interested in the fixed point of these state evolution equations, where  $\chi_{1x}^{(t)} = \chi_{2x}^{(t)} = \chi_x$, $q_{1x}^{(t)} = q_{2x}^{(t)} =q_x$, $m_{1x}^{(t)} = m_{2x}^{(t)} = m_x$, $\chi_{1z}^{(t)} = \chi_{2z}^{(t)} = \chi_z$, $q_{1z}^{(t)} = q_{2z}^{(t)} =q_z$, and $m_{1z}^{(t)} = m_{2z}^{(t)} = m_z$ are achieved. From there we easily recover eq. (\ref{thm1-equations}). However, these equations are not rigorous since the starting assumptions are not proven. Therefore, we will turn to a rigorous formalism to consolidate those results. 
\subsection{Necessary assumptions for the rigorous state evolution equations}
\label{SE_assumptions}
Here we remind the main assumptions needed for the rigorous state evolution equations to hold, as they are listed for Theorem 1 of \cite{fletcher2018inference}, and show they are verified in our setting.
\begin{assumption} 
\begin{itemize}
    \item [~]
    \item the empirical distributions of the underlying truth $\mathbf{x}_{0}$, eigenvalues of $\mathbf{F}^T \mathbf{F}$, and noise vector $w_{0}$, respectively converge with second order moments, as defined in appendix \ref{appendix:analysis_framework}, to independent scalar random variables $x_{0},w_{0},\lambda$ with distributions $p_{x_{0}}$, $p_{\lambda}$, $p_{w_{0}}$. We assume that the distribution $p_{\lambda}$ is not all-zero and has compact support. 
    \item the design matrix $\mathbf{F} = \mathbf{U}\mathbf{D}\mathbf{V}^{\top} \in \mathbb{R}^{M \times N}$ is rotationally invariant, as defined in the introduction, where the elements of the Haar distributed matrices $\mathbf{U},\mathbf{V}$ are independent of the random variables $x_{0},w_{0},\lambda$
    \item assume that $M,N \to \infty$ with fixed ratio $\alpha = M/N$ independent of $M,N$.
    \item the activation function $\phi(.,\mathbf{w}_{0})$ from Eq.\eqref{teacher} is pseudo-Lipschitz of order 2.
    \item the constants $\left\langle \partial_{\mathbf{h}_{1x}^{(t)}} g_{1x}(\mathbf{h}_{1x}^{(t)},\hat{Q}_{1x}^{(t)})\right\rangle,\left\langle \partial_{\mathbf{h}_{1z}^{(t)}} g_{1z}(\mathbf{h}_{1z}^{(t)},\hat{Q}_{1z}^{(t)})\right\rangle,\left\langle \partial_{\mathbf{h}_{2x}^{(t)}} g_{2x}(\mathbf{h}_{2x}^{(t)}, \mathbf{h}_{2z}^{(t+1)}, \hat{Q}_{2x}^{(t)}, \hat{Q}_{2z}^{(t+1)})\right\rangle$ \\ $\left\langle \partial_{\mathbf{h}_{2x}^{(t)}} g_{2z}(\mathbf{h}_{2x}^{(t)}, \mathbf{h}_{2z}^{(t)}, \hat{Q}_{2x}^{(t)}, \hat{Q}_{2z}^{(t)})\right\rangle$ from algorithm \eqref{GVAMP} are all in $[0,1]$.
    \item the component estimation functions $g_{1x}(\mathbf{h}_{1x}^{(t)},\hat{Q}_{1x}^{(t)}),g_{1z}(\mathbf{h}_{1z}^{(t)},\hat{Q}_{1z}^{(t)}),g_{2x}(\mathbf{h}_{2x}^{(t)}, \mathbf{h}_{2z}^{(t+1)}, \hat{Q}_{2x}^{(t)}, \hat{Q}_{2z}^{(t+1)}),$\\$g_{2z}(\mathbf{h}_{2x}^{(t)}, \mathbf{h}_{2z}^{(t)} \hat{Q}_{2x}^{(t)}, \hat{Q}_{2z}^{(t)})$ from algorithm \eqref{GVAMP} are uniformly Lipschitz continuous, at all time steps $t$, respectively in $\mathbf{h}_{1x}^{(t)}$ at $\hat{Q}_{1x}^{(t)}$, in $\mathbf{h}_{1z}^{(t)}$ at $\hat{Q}_{1z}^{(t)}$, $\mathbf{h}_{2x}^{(t)}$ at $\hat{Q}_{2x}^{(t)}$ and in $\mathbf{h}_{2z}^{(t)}$ at $\hat{Q}_{2z}^{(t)}$.
\end{itemize}
\end{assumption}
\quad \\
The first four points are included in the set of assumptions \ref{main_assum} and are therefore verified. We need to check the last two points, starting with the function $g_{1x}(\mathbf{h}_{1x}^{(t)},\hat{Q}_{1x}^{(t)})=\mbox{Prox}_{f/\hat{Q}_{1x}^{(t)}}(\mathbf{h}_{1x}^{(t)})$. Since proximal operators are firmly nonexpansive, they are 1-Lipschitz and we thus have, using the separability of the function $f$:
\begin{equation}
   \left\langle \partial_{\mathbf{h}_{1x}^{(t)}} g_{1x}(\mathbf{h}_{1x}^{(t)},\hat{Q}_{1x}^{(t)})\right\rangle =  \frac{1}{N}\sum_{i=1}^{N}\mbox{Prox}'_{f_{i}/\hat{Q}_{1x}^{(t)}}(\mathbf{h}_{1x,i}^{(t)}) \in [0,1]
\end{equation}
where each $f_{i} : \mathbb{R} \to \mathbb{R}$ is the same function applied to each coordinates. Now consider the restriction of $g_{1x}(\mathbf{h}_{1x}^{(t)},\hat{Q}_{1x}^{(t)})$ to its second argument. Its gradient w.r.t. $\hat{Q}_{1x}^{(t)}$ at a given point $\mathbf{h}_{1x}^{(t)}$ verifies, assuming the function $f$ is differentiable:
\begin{align}
    \norm{\nabla_{\hat{Q}_{1x}^{(t)}}\mbox{Prox}_{f/\hat{Q}_{1x}^{(t)}}(\mathbf{h}_{1x}^{(t)})}_{2} &= \norm{(Id+\frac{1}{\hat{Q}_{1x}^{(t)}}\mathcal{H}_{f}(\mbox{Prox}_{f/\hat{Q}_{1x}^{(t)}}(\mathbf{h}_{1x}^{(t)})))^{-1}\nabla f(\mathbf{h_{1x}^{(t)}})}_{2} \notag \\
    &\leqslant \norm{\nabla f(\mathbf{h}_{1x}^{(t)})}_{2} \notag \\
    &\leqslant C(1+\norm{\mathbf{h}_{1x}^{(t)}}_{2})
\end{align}
where the last line is obtained using the scaling conditions on the subdifferential of $f$ from assumption \ref{main_assum}.
Then, for any $\hat{Q}_{1x}^{(t)}, \hat{Q}_{1x}^{(t')}$, $\norm{\mbox{Prox}_{f/\hat{Q}_{1x}^{(t)}}-\mbox{Prox}_{f/\hat{Q}_{1x}^{(t')}}}_{2}\leqslant C(1+\norm{\mathbf{h}_{1x}^{(t)}}_{2})\abs{\hat{Q}_{1x}^{(t)}-\hat{Q}_{1x}^{(t')}}$ and $g_{1x}(\mathbf{h}_{1x}^{(t)},\hat{Q}_{1x}^{(t)})$ is uniformly Lipschitz in $\mathbf{h}_{1x}^{(t)}$ at $\hat{Q}_{1x}^{(t)}$, at any time index $t$. The argument is identical for $g_{1z}(\mathbf{h}_{1z}^{(t)},\hat{Q}_{1z}^{(t)})=\mbox{Prox}_{f/\hat{Q}_{1z}^{(t)}}(\mathbf{h}_{1z}^{(t)})$. The functions \\ $g_{2x}(\mathbf{h}_{2x}^{(t)}, \mathbf{h}_{2z}^{(t+1)}, \hat{Q}_{2x}^{(t)}, \hat{Q}_{2z}^{(t+1)}),g_{2z}(\mathbf{h}_{2x}^{(t)}, \mathbf{h}_{2z}^{(t)}, \hat{Q}_{2x}^{(t)}, \hat{Q}_{2z}^{(t)})$ have explicit expressions and it is straightforward to check the last two points using linear algebra and the assumptions on the spectrum of $\mathbf{F}^{\top}\mathbf{F}$.
\subsection{Rigorous state evolution formalism}
We now look into the state evolution equations derived for MLVAMP in~\cite{schniter2016vector}. Those equations are proven to be exact in the asymptotic limit, and follow the same algorithm as~\eqref{GVAMP}. In particular, they provide statistical properties of vectors $\mathbf{h}_{1x}, \mathbf{h}_{2x}, \mathbf{h}_{1z}, \mathbf{h}_{2z}$. We can read relations from~\cite{fletcher2018inference} using the following dictionary between our notations and theirs, valid at each iteration of the algorithm:
\begin{subequations}
\label{dictionary1}
\begin{align}
    \hat{Q}_{1x}, \hat{Q}_{2x}, \hat{Q}_{1z}, \hat{Q}_{2z} &\longleftrightarrow \gamma_0^-, \gamma_0^+, \gamma_1^+, \gamma_1^- \\
    \chi_{1x} \hat{Q}_{1x}, \chi_{2x} \hat{Q}_{2x} &\longleftrightarrow  \alpha_0^-, \alpha_0^+\\
    \chi_{1z} \hat{Q}_{1z}, \chi_{2z} \hat{Q}_{2z} &\longleftrightarrow  \alpha_1^-, \alpha_1^+ \\
    \mathbf{x_0}, \mathbf{z_0}, \rho_x, \rho_z &\longleftrightarrow \mathbf{Q}_0^0, \mathbf{Q}_1^0, \tau_0^0, \tau_1^0\\
    \mathbf{h}_{1x}, \mathbf{h}_{2x}, \mathbf{h}_{1z}, \mathbf{h}_{2z} &\longleftrightarrow \mathbf{r}_0^-, \mathbf{r}_0^+, \mathbf{r}_1^+, \mathbf{r}_1^-.
\end{align}
\end{subequations}
Placing ourselves in the asymptotic limit, \cite{fletcher2018inference} shows the following equalities:
\begin{subequations}
\label{r-distributions}
\begin{align}
    \mathbf{r}_0^- &= \mathbf{Q}_0^0 + \mathbf{Q}_0^-\\
    \mathbf{r}_0^+ &= \mathbf{Q}_0^0 + \mathbf{Q}_0^+ \\
    \mathbf{r}_1^- &= \mathbf{Q}_1^0 + \mathbf{Q}_1^- \\
    \mathbf{r}_1^+ &= \mathbf{Q}_1^0 + \mathbf{Q}_1^+
\end{align}
\end{subequations}
where $\mathbf{Q}_0^{-} \sim \mathcal{N}(0, \tau_0^{-})^N$ and $\mathbf{Q}_1^{-} \sim \mathcal{N}(0, \tau_1^{-})^N$ are i.i.d. Gaussian vectors. $\mathbf{Q}_0^+$, $\mathbf{Q}_1^+$ have the following norms and non-zero correlations with ground-truth vectors $\mathbf{Q}_0^0, \mathbf{Q}_1^0$:
\begin{align}
\tau_0^+ \equiv \dfrac{\norm{\mathbf{Q}_0^+}_2^2}{N}\hspace{1cm} c_0^+ \equiv \dfrac{\mathbf{Q}_0^{0T} \mathbf{Q_0^+}}{N}\\
\tau_1^+ \equiv \dfrac{\norm{\mathbf{Q}_1^+}_2^2}{M}\hspace{1cm}c_1^+ \equiv \dfrac{\mathbf{Q}_1^{0T} \mathbf{Q_1^+}}{M}.
\end{align}
With simple manipulations, we can rewrite~\eqref{r-distributions} as:
\begin{subequations}
\label{r-decorrelated}
\begin{align}
    \mathbf{r}_0^- &\deq \mathbf{Q}_{0} + \mathbf{Q}_0^- \\
    \mathbf{V}^T \mathbf{r}_0^+ &\deq \left( 1 + \dfrac{c_0^+}{\tau_0^0}\right) \mathbf{V}^T \mathbf{Q}_0^0 +\mathbf{V}^T \mathbf{\tilde{Q}}_0^+ \\
   \mathbf{r}_1^- &\deq \mathbf{Q}_1^0 + \mathbf{Q}_1^- \\
    \mathbf{U}^T \mathbf{r}_1^+ &\deq \left(1 + \dfrac{c_1^+}{\tau_1^0} \right) \mathbf{U}^T \mathbf{Q}_1^0 +\mathbf{U}^T \mathbf{\tilde{Q}}_1^+
\end{align}
\end{subequations}
where for $k \in \{1,2 \}$ vectors
\begin{equation}
    \mathbf{\tilde{Q}}_k^+ = - \dfrac{c_k^+}{\tau_k^0} \mathbf{Q}_k^0 + \mathbf{Q}_k^+
\end{equation}
and $\mathbf{Q}_0^-, \mathbf{Q}_1^-$
have no correlation with ground-truth vectors $\mathbf{Q}_0^0$, $\mathbf{Q}_1^0$, $\mathbf{U}^T \mathbf{Q}_0^0$, $\mathbf{V}^T \mathbf{Q}_1^0$. Besides, Lemma 5 from~\cite{rangan2019vector} states that $\mathbf{V}^T \mathbf{\tilde{Q}}_0^+$ and $\mathbf{U}^T \mathbf{\tilde{Q}}_1^+$ have components that converge empirically to Gaussian variables, respectively $\mathcal{N}(0, \tau_0^+)$ and $\mathcal{N}(0, \tau_1^+)$. Let us now translate this in our own terms, using the following relations that complete our dictionary with state evolution parameters:
\begin{subequations}
\label{dictionary2}
\begin{align}
    &\frac{\hat{m}_{1x}}{\hat{Q}_{1x}} \longleftrightarrow 1 \hspace{2.6cm} \frac{\hat{m}_{2z}}{\hat{Q}_{2z}} \longleftrightarrow 1\\ &\frac{\hat{m}_{2x}}{\hat{Q}_{2x}} \longleftrightarrow 1 + \frac{c_0^+}{\tau_0^0} \hspace{1.7cm}
    \frac{\hat{m}_{1z}}{\hat{Q}_{1z}} \longleftrightarrow  1 + \frac{c_1^+}{\tau_1^0}
    \\
    &\frac{\hat{\chi}_{1x}}{\hat{Q}_{1x}^2} \longleftrightarrow \tau_0^-\hspace{2.4cm} \frac{\hat{\chi}_{2z}}{\hat{Q}_{2z}^2} \longleftrightarrow \tau_1^-
    \\
    &\frac{\hat{\chi}_{2x}}{\hat{Q}_{2x}^2} \longleftrightarrow \tau_0^+ - \frac{(c_0^+)^2}{\tau_0^0} \hspace{1.1cm}  
    \frac{\hat{\chi}_{1z}}{\hat{Q}_{1z}^2} \longleftrightarrow \tau_1^+ - \frac{(c_1^+)^2}{\tau_1^0}.  
\end{align}
\end{subequations}
Simple bookkeeping transforms equations~\eqref{r-decorrelated} into a rigorous statement of starting assumptions~\eqref{r-distributions} from~\cite{takahashi2022macroscopic}. Since those assumptions are now rigorously 
established in the asymptotic limit, the remaining derivation of state evolution equations~\eqref{Kaba-fullSE} holds and provides a mathematically exact statement.
\subsection{Scalar equivalent model of state evolution}
For the sake of completeness, we will provide an overview of the explicit matching between the state evolution formalism from~\cite{fletcher2018inference} which was developed in a series of papers, and the replica formulation from~\cite{takahashi2022macroscopic} which relies on statistical physics methods. Although not necessary to our proof, it is interesting to develop an intuition about the correspondence between those two faces of the same coin. We have seen in the previous subsection that~\cite{fletcher2018inference} introduces ground-truth vectors $\mathbf{Q}_0^0, \mathbf{Q}_1^0$, estimates $\mathbf{r}_0^\pm, \mathbf{r}_1^\pm$ which are related to vectors $\mathbf{Q}_0^\pm, \mathbf{Q}_1
^\pm$. Let us introduce a few more vectors using matrices from the singular value decomposition $\mathbf{F}=\mathbf{U} \mathbf{D} \mathbf{V}^T$.
Let $\mathbf{s}_{\nu} \in \mathbb{R}^N$ be the vector containing all square roots of eigenvalues of $\mathbf{F}^T \mathbf{F}$ with $p_{\nu}$ its element-wise distribution; and  $\mathbf{s}_\mu \in \mathbb{R}^M$ the vector containing all square roots of eigenvalues of $\mathbf{F} \mathbf{F}^T$ with $p_{\mu}$ its element-wise distribution. Note that those two vectors contain the singular values of $\mathbf{F}$, but one of them also contains $\max(M,N) - \min(M,N)$ zero values. $p_\mu$ and $p_\nu$ are both well-defined since $p_\lambda$ is properly defined in Assumptions~\ref{main_assum}. We also define
\begin{align*}
    \mathbf{P}_0^0 &= \mathbf{V}^T \mathbf{Q}_0^0 \hspace{0.5cm} \mathbf{P}_0^+ = \mathbf{V}^T \mathbf{Q}_0^+ \hspace{0.5cm} \mathbf{P}_0^- = \mathbf{V}^T \mathbf{Q}_0^- \\
    \mathbf{P}_1^0 &= \mathbf{U} \mathbf{Q}_1^0 \hspace{0.7cm} \mathbf{P}_1^+ = \mathbf{U} \mathbf{Q}_1^+ \hspace{0.75cm} \mathbf{P}_1^- = \mathbf{U} \mathbf{Q}_1^-.
\end{align*}
By virtue of Lemma 5 from~\cite{rangan2019vector}, the six previous vectors have elements that converge empirically to a Gaussian variable. Hence, all defined vectors have an element-wise separable distribution, and we can write the state evolution as a scalar model on random variables sampled from those distributions. To do so, we will simply write the variables without the bold font: for instance $Z_0^0 \sim p_{x_0}$, $s_\nu \sim p_\nu$, and $Q_0^-$ refers to the random variable distributed according to the element-wise distribution of vector $\mathbf{Q}_0^-$. The scalar random variable state evolution from~\cite{fletcher2018inference} now reads:
\begin{subequations}
\label{rigorous_SE}
\begin{align}
    &\mbox{Initialize }  \gamma_{1}^{-(0)},\gamma_{0}^{-(0)},\tau_{0}^{-(0)},\tau_{1}^{-(0)},
    \\&Q_{0}^{-(0)}\hspace{-0.1cm} \sim \mathcal{N}(0,\tau_{0}^{-(0)}),Q_{1}^{-(0)}\hspace{-0.1cm}\sim \mathcal{N}(0,\tau_{1}^{-(0)}), \alpha_{0}^{-(0)},\alpha_{1}^{-(0)} \notag
    \\ 
    &\textit{Initial pass (ground truth only)} \notag \\ 
    &s_{\nu} \sim p_\nu, \hspace{0.5cm}s_\mu \sim p_\mu, \hspace{0.5cm} Q_{0}^{0}\sim p_{x_{0}}  \\
    &\tau_{0}^{0} = \mathbb{E}[(Q_{0}^{0})^{2}] \hspace{0.65cm} P_{0}^{0} \sim \mathcal{N}(0,\tau_{0}^{0}) \\
    &Q_{1}^{0} = s_{\mu} P_{0}^{0} \hspace{1.1cm} \tau_{1}^{0} = \mathbb{E}[(s_{\mu}P_{0}^{0})^{2}] = \mathbb{E}[(s_{\mu})^{2}]\tau_{0}^{0} \\ &P_{1}^{0} \sim \mathcal{N}(0,\tau_{1}^{0})\\ \notag \\
    &\textit{Forward Pass (estimation):} \notag \\
    &\alpha_{0}^{+(t)} = \mathbb{E}\left[\eta'_{f/\gamma_{0}^{-(t)}}(Q_0^0+Q_{0}^{-(t)})\right] \label{alpha0+} \\
    &\gamma_{0}^{+(t)} = \frac{\gamma_{0}^{(t)}}{\alpha_{0}^{+(t)}}-\gamma_{0}^{-(t)} \\
    &Q_{0}^{+(t)} = \frac{1}{1-\alpha_{0}^{+(t)}}\bigg\lbrace\eta_{f/\gamma_{0}^{-(t)}}(Q_0^0+Q_{0}^{-(t)})-... \notag
    \\&\hspace{5cm}Q_0^0-\alpha_{0}^{+}Q_{0}^{-(t)}\bigg\rbrace \label{up_SE1} \\ &\mathbf{K}_{0}^{+(t)} = Cov\left(Q_{0}^{0},Q_{0}^{+(t)}\right)\\ &\left(P_{0}^{0},P_{0}^{+(t)}\right) \sim \mathcal{N}\left(0,\mathbf{K}_{0}^{+(t)}\right)
        \end{align}
\begin{align}
    &\alpha_{1}^{+(t)} = \mathbb{E}\left[\frac{s_\mu^{2}\gamma_{1}^{-(t)}}{\gamma_{1}^{-(t)}s_{\mu}^{2}+\gamma_{0}^{+(t)}}\right] \\
    &\gamma_{1}^{+(t)} = \frac{\gamma_{1}^{-(t)}}{\alpha_{1}^{+(t)}}-\gamma_{1}^{-(t)} \\
    &Q_{1}^{+(t)} = \frac{1}{1-\alpha_{1}^{+(t)}}\bigg\lbrace \frac{s_\mu^{2}\gamma_{1}^{-(t)}}{\gamma_{1}^{-(t)}s_{\mu}^{2}+\gamma_{0}^{+(t)}}(Q_{1}^{-(t)}+Q_{1}^{0})+...\notag\\
    &\frac{s_{\mu}\gamma_{0}^{+(t)}}{\gamma_{1}^{-(t)}s_{\mu}^{2}+\gamma_{0}^{+(t)}}(P_{0}^{+(t)}+P_{0}^{0})-Q_{1}^{0}-\alpha_{1}^{+(t)}Q_{1}^{-(t)}\bigg\rbrace \label{up_SE2}\\
    &\mathbf{K}_{1}^{+(t)} = Cov\left(Q_{1}^{0},Q_{1}^{+(t)}\right)\\ &\left(P_{1}^{0},P_{1}^{+(t)}\right) \sim \mathcal{N}\left(0,\mathbf{K}_{1}^{+(t)}\right) 
    \\ \notag \\
    &\textit{Backward Pass (estimation):} \notag \\
    &\alpha_{1}^{-(t+1)}=\mathbb{E}\left[\eta_{g(y,.)/\gamma_{1}^{+(t)}}(P_{1}^{0}+P_{1}^{+(t)})\right] \\ &\gamma_{1}^{-(t+1)} = \frac{\gamma_{1}^{+(t)}}{\alpha_{1}^{-(t+1)}}-\gamma_{1}^{+(t)} \\
    & P_{1}^{-(t+1)} = \frac{1}{1-\alpha_{1}^{-(t+1)}}\bigg\lbrace\eta_{g(y,.)/\gamma_{1}^{+(t)}}(P_{1}^{0}+P_{1}^{+(t)}) \notag
    \\&\hspace{1.5cm}-P_{1}^{0}-\alpha_{1}^{-(t+1)}P_{1}^{+(t)}\bigg\rbrace \label{up_SE3} \\
    &\tau_{1}^{-(t+1)} = \mathbb{E}\left[(P_{1}^{-(t+1)})^{2}\right] \hspace{0.5cm}Q_{1}^{-(t+1)} \sim \mathcal{N}(0,\tau_{1}^{-(t+1)})\\
    & \alpha_{0}^{-(t+1)} = \mathbb{E}\left[\frac{\gamma_{0}^{+(t)}}{\gamma_{1}^{-(t+1)}s_{\nu}^{2}+\gamma_{0}^{+(t)}}\right]\\
    &\gamma_{0}^{-(t+1)} = \frac{\gamma_{0}^{+(t)}}{\alpha_{0}^{-(t+1)}}-\gamma_{0}^{+(t)} 
\\
    & P_{0}^{-(t+1)} = \frac{1}{1-\alpha_{0}^{-(t+1)}}\bigg\lbrace\frac{s_{\nu}\gamma_{1}^{-(t)}}{\gamma_{1}^{-(t+1)}s_{\nu}^{2}+\gamma_{0}^{+(t)}}(Q_{1}^{-(t+1)}+Q_{1}^{0})\notag\\
    &+ \frac{\gamma_{0}^{+(t)}}{\gamma_{1}^{-(t+1)}s_{\nu}^{2}+\gamma_{0}^{+(t)}}(P_{0}^{+(t)}+P_{0}^{0}) -P_{0}^{0}-\alpha_{0}^{-(t+1)}P_{0}^{+(t)}\bigg\rbrace  \label{up_SE4} \\
    & \tau_{0}^{-(t+1)} = \mathbb{E}\left[(P_{0}^{-(t+1)})^{2}\right]\hspace{0.5cm} Q_{0}^{-(t+1)} \sim \mathcal{N}(0,\tau_{0}^{-(t+1)}).
\end{align}
\end{subequations} 

\subsection{Direct matching of the state evolution fixed point equations}
To be consistent, we should be able to show that equations~\eqref{rigorous_SE} allow us to recover equations~\eqref{Kaba-fullSE} at their fixed point. Although somewhat tedious, this task is facilitated using dictionaries~\eqref{dictionary1} and~\eqref{dictionary2}. We shall give here an overview of this matching through a few examples.
\begin{itemize}
    \item Recovering equation~\eqref{m2hat}
\end{itemize}
Let us start from the rigorous scalar state evolution, in particular equation~\eqref{up_SE1} that defines variable $Q_0^{+}$. We get rid of time indices here since we focus on the fixed point. We first compute the correlation 
\begin{align}
    c_0^{+} &= \mathbb{E} \left[ Q_0^0 Q_0^+\right]\\ &=\dfrac{1}{1-\alpha_0^+} \left\lbrace \mathbb{E} \left[Q_0^0 \eta_{f/\gamma_0^-}(Q_0^0 + Q_0^-)\right] - \tau_0^0  \right\rbrace \label{c0+}
\end{align}
where we have used $\mathbb{E}[(Q_0^0)^2] = \tau_0^0$. At the fixed point, we know from MLVAMP or simply translating equations~\eqref{var_fixedpoint}, \eqref{alpha_fixedpoint} that 
\begin{equation*}
    1 - \alpha_0^+ = \alpha_0^-, \hspace{1cm} \dfrac{1}{\alpha_0^-} = \dfrac{\gamma_0^- + \gamma_0^+}{\gamma_0^+}, \hspace{1cm} \gamma_0^+ \alpha_0^+ = \gamma_0^- \alpha_0^-.
\end{equation*}
Simple manipulations take us to
\begin{align}
    &c_0^+ = \dfrac{\mathbb{E} \left[Q_0^0 \eta_{f/\gamma_0^-}(Q_0^0 + Q_0^-)\right]}{\alpha_0^-} - \tau_0^0 (1 + \dfrac{\gamma_0^-}{\gamma_0^+}) \\
    &\left( 1 + \dfrac{c_0^+}{\tau_0^0} \right) \gamma_0^+ = \dfrac{\mathbb{E} \left[Q_0^0 \eta_{f/\gamma_0^-}(Q_0^0 + Q_0^-)\right] \gamma_0^+}{\tau_0^0 \alpha_0^-} - \gamma_0^-. \label{Sundeep-eq1}
\end{align}
Now let us translate this back into our notations. The term $\mathbb{E} \left[Q_0^0 \eta_{f/\gamma_0^-}(Q_0^0 + Q_0^-)\right]$ simply translates into $m_{1x}$, and the rest of the terms can all be changed according to our dictionary. \eqref{Sundeep-eq1} exactly becomes
\begin{equation}
    \hat{m}_{2x} = \dfrac{m_{1x}}{\rho_x \chi_x} - \hat{m}_{1x},
\end{equation}
hence we perfectly recover equations~\eqref{m2hat} at the fixed point.
\begin{itemize}
    \item Recovering equation~\eqref{chi2xhat}
\end{itemize}
We start again from~\eqref{up_SE1} and square it:
\begin{align}
    &\mathbb{E}\left[ (Q_0^+)^2 \right] = \dfrac{1}{(1-\alpha_0^+)^2} \bigg\lbrace \mathbb{E}\left[ \eta_{f/\gamma_0^-}^2(Q_0^0 + Q_0^-)\right] +...\notag\\
    &
    (\alpha_0^+)^2 \mathbb{E}\left[ (Q_0^-)^2\right] - 2 \mathbb{E} \left[Q_0^0 \eta_{f/\gamma_0^-}(Q_0^0 + Q_0^-)\right]\notag\\
    &-2 \alpha_0^+ \mathbb{E}\left[Q_0^-\eta_{f/\gamma_0^-}^2(Q_0^0 + Q_0^-) +\mathbb{E}\left[ (Q_0^0)^2\right]\right] \bigg\rbrace \\
    &\tau_0^+ = \dfrac{1}{(1-\alpha_0^+)^2} \bigg\lbrace \mathbb{E}\left[ \eta_{f/\gamma_0^-}^2(Q_0^0 + Q_0^-)\right] + \tau_0^0 + ...\notag\\
    &(\alpha_0^+)^2 \tau_0^- - 2 \mathbb{E} \left[Q_0^0 \eta_{f/\gamma_0^-}(Q_0^0 + Q_0^-)\right]-...\notag\\
   &\hspace{2cm} 2 \alpha_0^+ \mathbb{E}\left[Q_0^-\eta_{f/\gamma_0^-}^2(Q_0^0 + Q_0^-) \right]  \bigg\rbrace. \label{beginning}
\end{align}
Since $Q_0^-$ is a Gaussian variable, independent from $Q_0^0$, we can use Stein's lemma and use  equation~\eqref{alpha0+} to get
\begin{equation}
    \mathbb{E}\left[Q_0^-\eta_{f/\gamma_0^-}^2(Q_0^0 + Q_0^-) \right] = \alpha_0^+ \tau_0^-. \label{steinQ0}
\end{equation}
Moreover, from~\eqref{c0+} we have
\begin{align}
    &(c_0^+)^2 (\alpha_0^-)^2= \left( \mathbb{E} \left[Q_0^0 \eta_{f/\gamma_0^-}(Q_0^0 + Q_0^-)\right] - \tau_0^0 \right)^2 \\
    &\dfrac{(c_0^+)^2 (\alpha_0^-)^2}{\tau_0^0} - \dfrac{(\mathbb{E} \left[Q_0^0 \eta_{f/\gamma_0^-}(Q_0^0 + Q_0^-)\right])^2}{\tau_0^0} = ... \notag\\
    &\hspace{1.5cm}-2 \mathbb{E} \left[Q_0^0 \eta_{f/\gamma_0^-}(Q_0^0 + Q_0^-)\right] + \tau_0^0. \label{usefulc0+}
\end{align}
Replacing~\eqref{steinQ0} and~\eqref{usefulc0+} into~\eqref{beginning}, we reach
\begin{align}
\label{finalform}
    &\left(\tau_0^+ - \dfrac{(c_0^+)^2}{\tau_0^0}\right) (\alpha_0^-)^2 = \mathbb{E}\left[ \eta_{f/\gamma_0^-}^2(Q_0^0 + Q_0^-)\right] \notag\\
    &- \dfrac{\left(\mathbb{E}\left[Q_0^0 \eta_{f/\gamma_0^-}(Q_0^0 + Q_0^-)\right]\right)^2}{\tau_0^0} - (\alpha_0^+)^2 \tau_0^- \\
    &\left( \tau_0^+ - \dfrac{(c_0^+)^2}{\tau_0^0}\right) (\gamma_0^+)^2 = \dfrac{\mathbb{E}\left[ \eta_{f/\gamma_0^-}^2(Q_0^0 + Q_0^-)\right] (\gamma_0^+)^2}{(\alpha_0^-)^2} \notag\\
    &- \dfrac{\left(\mathbb{E} \left[Q_0^0 \eta_{f/\gamma_0^-}(Q_0^0 + Q_0^-)\right]\right)^2 (\gamma_0^+)^2}{\tau_0^0 (\alpha_0^-)^2} - (\gamma_0^-)^2 \tau_0^-.
\end{align}
Notice that $\mathbb{E}\left[ \eta_{f/\gamma_0^-}^2(Q_0^0 + Q_0^-)\right]$ simply translates into our variable $q_{1x}$ from its definition~\eqref{q1x}, and our dictionary directly transforms~\eqref{finalform} into equation~\eqref{chi2xhat}:
\begin{equation}
    \hat{\chi}_{2x} = \frac{q_{1x}}{\chi_{1x}^2} - \frac{m_{1x}^2}{\rho_x \chi_{1x}^2} -  \hat{\chi}_{1x}.
\end{equation}
\begin{itemize}
    \item Recovering equation~\eqref{chi2z}
\end{itemize}
We first note that for any function $h$,
\begin{equation}
    \mathbb{E}[h(s_\nu)] = \min(1,\alpha) \mathbb{E}[h(s_\mu)] + \max(0,1- \alpha) h(0).
\end{equation}
and $s_\nu^2 \sim p_\lambda$. Applying this to $h(s) = \dfrac{\gamma_1^- s^2}{\gamma_1^- s^2 + \gamma_0^+}$ and starting from~\eqref{up_SE2}, we rewrite
\begin{align}
    \alpha_{1}^{+} &= \mathbb{E}\left[\frac{\gamma_{1}^{-} s_\mu^{2}}{\gamma_{1}^{-}s_{\mu}^{2}+\gamma_{0}^{+}}\right]\\
    &= \dfrac{1}{\alpha}\mathbb{E} \left[ \frac{\gamma_{1}^{-}\lambda}{\gamma_{1}^{-}\lambda+\gamma_{0}^{+}}\right]
\end{align}
with $\lambda \sim p_\lambda$, which translates into equation~\eqref{chi2z}:
\begin{equation}
    \chi_{2z} = \dfrac{1}{\alpha} \mathbb{E} \left[ \dfrac{\lambda}{\hat{Q}_{2x}+\lambda \hat{Q}_{2z}} \right].
\end{equation}
In a similar fashion, we can recover all equations~\eqref{Kaba-fullSE} by writing variances and correlations between scalar random variables defined in~\eqref{rigorous_SE}, and using the independence properties established in~\cite{fletcher2018inference}; thus directly showing the matching between the two state evolution formalisms at their fixed point.

\section{Numerical implementation details}
The plots were generated using the toolbox available at \url{https://github.com/cgerbelo/Replica_GLM_orth.inv.git} \\
\quad \\
\label{num_append}
Here we give a few derivation details for implementation of the equations presented in Theorem \ref{main_th}. We provide the Python script used to produce the figures in the main body of the paper as an example. The experimental points were obtained using the convex optimization tools of \cite{pedregosa2011scikit}, with a data matrix of dimension $N = 200, M = \alpha N$, for $\alpha \in [0.1,3]$. Each point is averaged 100 times to get smoother curves. The theoretical prediction was simply obtained by iterating the equations from Theorem \ref{main_th}. This can lead to unstable numerical schemes, and we include a few comments about stability in the code provided with this version of the paper. For Gaussian data, the design matrices were simply obtained by sampling a normal distribution $\mathcal{N}(0,\sqrt{1/M})$, effectively yielding the Marchenko-Pastur distribution \cite{tulino2004random} for averaging on the eigenvalues of $\mathbf{F}
^{T}\mathbf{F}$ in the state evolution equations :

\begin{equation}
    \lambda_{\mathbf{F}^{T}\mathbf{F}} \sim \max(0,1-\alpha)\delta(\lambda-0)+\alpha\frac{\sqrt{(0,\lambda-a)^{+}(0,b-\lambda)^{+}}}{2\pi\lambda}
\end{equation}
where $a = \sqrt{1-\left(\frac{1}{\alpha}\right)^{2}}, b= \sqrt{1+\left(\frac{1}{\alpha}\right)^{2}}$, and $(0,x)^{+}=\max(0,x)$.
For the example of orthogonally invariant matrix with arbitrary spectrum, we chose to sample the singular values of $\mathbf{F}$ from the uniform distribution $\mathcal{U}(\left[(1-\alpha)^{2},(1+\alpha)^{2}\right])$. This leads to the following distribution for the eigenvalues of $\mathbf{F}^{T}\mathbf{F}$:
\begin{equation}
\label{indicator}
    \lambda_{\mathbf{F}^{T}\mathbf{F}} \sim \max(0,1-\alpha)\delta(0)+\min(1,\alpha)d(\lambda,\alpha)
\end{equation}
where $d(\lambda,\alpha)=\left(\frac{1}{2((1+\alpha)^{2}-(1-\alpha)^{2})}\mathbb{I}_{\{\sqrt{\lambda}\in [(1-\alpha)^{2},(1+\alpha)^{2}]\}}\frac{1}{\sqrt{\lambda}}\right)$, and $\mathbb{I}$ is the indicator function. \\ 
\quad \\
The only quantities that need additional calculus are the averages of proximals, squared proximals and derivatives of proximals. Here we give the corresponding expressions for the losses and regularizations that were used to make the figures. Note that the stability and convergence of the state evolution equations closely follow the result of Lemma \ref{conv_lemma}. For example, a ridge regularized logistic regression, which is a strongly convex objective in both the loss (on compact spaces) and regularization  will lead to more stable iterations than a LASSO SVC.
\subsection{Regularization : elastic net}
For the elastic net regularization, we can obtain an exact expression, avoiding any numerical integration.
The proximal of the elastic net reads:
\begin{equation}
    \mbox{Prox}_{\frac{1}{\hat{Q}_{1x}}(\lambda_{1} \vert \mathbf{x} \vert_{1}+\frac{\lambda_{2}}{2}\norm{\mathbf{x}}_{2}^{2})}(.) = \frac{1}{1+\frac{\lambda_{2}}{\hat{Q}_{1x}}}\hspace{.1cm}s\hspace{-0.1cm}\left(.,\frac{\lambda_{1}}{\hat{Q}_{1x}}\right)
\end{equation}
where $s\left(.,\frac{\lambda_{1}}{\hat{Q}_{1x}}\right)$ is the soft-thresholding function: 
\begin{equation}
s\left(r_{1k},\frac{\lambda_{1}}{\hat{Q}_{1x}}\right) =
\left\lbrace
\begin{array}{ccc}
r_{1k}+\frac{\lambda_{1}}{\hat{Q}_{1x}}  & \mbox{if} & r_{1k}<-\frac{\lambda_{1}}{\hat{Q}_{1x}}\\
0 & \mbox{if} & -\frac{\lambda_{1}}{\hat{Q}_{1x}}<r_{1k}<\frac{\lambda_{1}}{\hat{Q}_{1x}}\\
r_{1k}-\frac{\lambda_{1}}{\hat{Q}_{1x}} & \mbox{if} & r_{1k}>\frac{\lambda_{1}}{\hat{Q}_{1x}}.
\end{array}\right.
\end{equation}
We assume that the ground-truth $x_{0}$ is pulled from a Gauss-Bernoulli law of the form:
\begin{equation}
    \phi(x_{0}) = (1-\rho)\delta(0)+\rho\frac{1}{\sqrt{2\pi \sigma^{2}}}\exp{(-x_{0}^{2}/(2\sigma^{2}))}.
\end{equation}
Note that we did our plots with $\rho =1$, but this form can be used to study the effect of sparsity in the model.
Writing $X =\frac{\hat{m}_{1x} x_0 + \sqrt{\hat{\chi}_{1x}}\xi_{1x}}{\hat{Q}_{1x}}$, and remembering that $\xi_{1x} \sim \mathcal{N}(0,1)$, some calculus then shows that:
\begin{figure*}[!t]
\begin{align}
&\mathbb{E}[\mbox{Prox}^{2}_{\mathbf{f}/\hat{Q}_{1x}}(X)] \notag \\  &=\left(\frac{1}{1+\frac{\lambda_{2}}{\hat{Q}_{1x}}}\right)^{2}\left[(1-\rho)\left(\frac{\lambda_{1}^{2}+\hat{\chi}_{1x}}{(\hat{Q}_{1x})^{2}}\erfc\left(\frac{\lambda_{1}}{\sqrt{2\hat{\chi}_{1x}}}\right)-\frac{\lambda_{1}\sqrt{2\hat{\chi}_{1x}}\exp(-\frac{\lambda_{1}^{2}}{2(\hat{\chi}_{1x})})}{\sqrt{\pi}}\right) \right.\\
&+ \left. \rho\left(\frac{\lambda_{1}^{2}+\hat{\chi}_{1x}+\sigma^{2}\hat{m}_{1x}^{2}}{(\hat{Q}_{1x})^{2}}\erfc\left(\frac{\lambda_{1}}{\sqrt{2(\hat{\chi}_{1x}+\sigma^{2}\hat{m}_{1x}^{2})}}\right)-\frac{\lambda_{1}\sqrt{2(\hat{\chi}_{1x}+\sigma^{2}\hat{m}_{1x}^{2})}\exp(-\frac{\lambda_{1}^{2}}{2(\hat{Q}_{1x})^{2}(\hat{\chi}_{1x}+\sigma^{2}\hat{m}_{1x}^{2})})}{\sqrt{\pi}}\right)\right]. \notag
\end{align}
Similarly, we have
\begin{align}
    \mathbb{E}[\mbox{Prox}^{'}_{\mathbf{f}/\hat{Q}_{1x}}(X)] = \frac{1}{1+\frac{\lambda_{2}}{\hat{Q}_{1x}}}\left[(1-\rho)\erfc\left(\frac{\lambda_{1}}{\sqrt{2\hat{\chi}_{1x}}}\right)+\rho\erfc\left(\frac{\lambda_{1}}{\sqrt{2(\hat{\chi}_{1x}+\sigma^{2}\hat{m}_{1x}^{2})}}\right)\right]
\end{align}
and 
\begin{align}
    \mathbb{E}[x_{0}\mbox{Prox}_{\mathbf{f}/\hat{Q}_{1x}}(X)] = \frac{\rho \abs{\sigma\hat{m}_{1x}}}{\hat{Q}_{1x}+\lambda_{2}}\erfc\left(\frac{\lambda_{1}}{\sqrt{2(\hat{\chi}_{1x}+\sigma^{2}\hat{m}_{1x}^{2})}}\right)
\end{align}
\hrulefill
\end{figure*}
We now turn to the loss functions.
    
\subsection{Loss functions}

The loss functions sometimes have no closed form, as is the case for the logistic loss. In that case, numerical integration cannot be avoided, and we recommend marginalizing all the possible variables that can be averaged out. In the present model, if the teacher $y$ is chosen as a sign, one-dimensional integrals can be reached, leading to stable and reasonably fast implementation (a few minutes to generate a curve comparable to those of Figure \ref{fig_data} for the non-linear models, the ridge regression being very fast). The interested reader can find the corresponding marginalized prefactors in the code jointly provided with this paper. \\
\paragraph{Square loss}
The square loss is defined as:
\begin{equation}
 f(x,y) = \frac{1}{2}(x-y)^{2},
\end{equation}
its proximal and partial derivative then read: 
\begin{align}
    \mbox{Prox}_{\frac{1}{\gamma}f}(p) &= \frac{\gamma}{1+\gamma}p+\frac{1}{1+\gamma}y \\
    \frac{\partial}{\partial p}\mbox{Prox}_{\frac{1}{\gamma}f}(p) &= \frac{\gamma}{1+\gamma}.
\end{align}

Using this form with a plain ridge penalty (elastic net with $\ell_{1}=0$) leads to great simplification in the equations of Theorem \ref{main_th} and we recover the classical expressions obtained for ridge regression in papers such as \cite{hastie2022surprises,gerbelot2020asymptotic}. \\

\paragraph{Hinge loss}
The hinge loss reads: 
\begin{equation}
    f(x,y) = \max(0,1-yx).
\end{equation}

Assuming $y \in \{-1,+1\}$, its proximal and partial derivative then read: 
\begin{align}
     \mbox{Prox}_{\frac{1}{\gamma}f}(p)&=\left\lbrace
    \begin{array}{ccc}
     p+\frac{y}{\gamma}& \mbox{if} & \gamma(1-yp)\geqslant 1\\
    y & \mbox{if} & 0\leqslant \gamma(1-yp) \leqslant 1\\
    p & \mbox{if} & \gamma(1-yp)\leqslant 0
    \end{array}\right. \\
   \frac{\partial}{\partial p}\mbox{Prox}_{\frac{1}{\gamma}f}(p)&= \left\lbrace
    \begin{array}{ccc}
     1 & \mbox{if} & \gamma(1-yp)\geqslant 1\\
    0 & \mbox{if} & 0\leqslant \gamma(1-yp) \leqslant 1\\
    1 & \mbox{if} & \gamma(1-yp)\leqslant 0.
    \end{array}\right.
\end{align}

\paragraph{Logistic loss}
\begin{equation}
    f(x,y) = \log(1+\exp(-yx))
\end{equation}

Its proximal (at point p) is the solution to the fixed point problem:
\begin{equation}
    x = p+\frac{y}{\gamma(1+\exp(yx))},
\end{equation}

and its derivative, given that the logistic loss is twice differentiable, reads:
\begin{align}
    \frac{\partial}{\partial p}\mbox{Prox}_{\frac{1}{\gamma}f}(p) &= \frac{1}{1+\frac{1}{\gamma}\frac{\partial^{2}}{\partial p^{2}}f(\mbox{Prox}_{\frac{1}{\gamma}f}(p))} \\
    &= \frac{1}{1+\frac{1}{\gamma}\frac{1}{(2+2\mbox{cosh}(\mbox{Prox}_{\frac{1}{\gamma}f}(p))}}. 
\end{align}

\section{Proof of Lemma \ref{conv_lemma}: Convergence analysis of 2-layer MLVAMP}
\label{conv_proof}
\noindent
In this section, we give the detail of the convergence proof of 2-layer MLVAMP.
\subsection{Proof of Proposition \ref{find_Lyap}}
\label{time_lyap_proof}
\noindent
This proof is quite straightforward and close to the one of Theorem 4 from \cite{lessard2016analysis}. \newline
\quad \\
Multiplying Eq.\eqref{the_LMI} on the left and right by $[(\mathbf{h}^{(t)}-\mathbf{h}^{(t-1)})^{\top} \quad(\mathbf{u}^{(t)}-\mathbf{u}^{(t-1)})^{\top}]$ and its transpose respectively, we get
\begin{align*}
    &(\mathbf{A}^{(t)}(\mathbf{h}^{(t)}-\mathbf{h}^{(t-1)})+\mathbf{B}^{(t)}(\mathbf{u}^{(t)}-\mathbf{u}^{(t-1)}))^{\top}\mathbf{P}(\mathbf{A}^{(t)}(\mathbf{h}^{(t)}-\mathbf{h}^{(t-1)})+\mathbf{B}^{(t)}(\mathbf{u}^{(t)}-\mathbf{u}^{(t-1)}))\\
    &-(\tau_{(t)})^{2}(\mathbf{h}^{(t)}-\mathbf{h}^{(t-1)})^\top\mathbf{P}(\mathbf{h}^{(t)}-\mathbf{h}^{(t-1)}) \\
    &+\beta_{1}^{(t)}(\mathbf{C}_{1}^{(t)}(\mathbf{h}^{(t)}-\mathbf{h}^{(t-1)})+\mathbf{D}_{1}(\mathbf{u}^{(t)}-\mathbf{u}^{(t-1)}))^{\top}\mathbf{M}_{1}^{(t)}(\mathbf{C}_{1}^{(t)}(\mathbf{h}^{(t)}-\mathbf{h}^{(t-1)})+\mathbf{D}_{1}(\mathbf{u}^{(t)}-\mathbf{u}^{(t-1)})) \\
    &+\beta_{2}^{(t)}(\mathbf{C}_{2}^{(t)}(\mathbf{h}^{(t)}-\mathbf{h}^{(t-1)})+\mathbf{D}_{2}(\mathbf{u}^{(t)}-\mathbf{u}^{(t-1)}))^{\top}\mathbf{M}_{2}^{(t)}(\mathbf{C}_{2}^{(t)}(\mathbf{h}^{(t)}-\mathbf{h}^{(t-1)})+\mathbf{D}_{2}(\mathbf{u}^{(t)}-\mathbf{u}^{(t-1)})) \leqslant 0
\end{align*}
Using the definition of the iteration \eqref{dyn_syst1}-\eqref{dyn_syst2}, this simplifies to 
\begin{align*}
    &(\mathbf{h}^{(t+1)}-\mathbf{h}^{(t)})^\top\mathbf{P}(\mathbf{h}^{(t+1)}-\mathbf{h}^{(t)})-(\tau_{(t)})^{2}(\mathbf{h}^{(t)}-\mathbf{h}^{(t-1)})^\top\mathbf{P}(\mathbf{h}^{(t)}-\mathbf{h}^{(t-1)}) \\
    &+\beta_{1}(\mathbf{w}_{1}^{(t)}-\mathbf{w}_{1}^{(t-1)})^{\top}\mathbf{M}_{1}^{(t)}(\mathbf{w}_{1}^{(t)}-\mathbf{w}_{1}^{(t-1)})+\beta_{2}(\mathbf{w}_{2}^{(t)}-\mathbf{w}_{2}^{(t-1)})^{\top}\mathbf{M}_{2}^{(t)}(\mathbf{w}_{2}^{(t)}-\mathbf{w}_{2}^{(t-1)})\leqslant 0
\end{align*}
Owing to the Lipschitz properties of $\tilde{\mathcal{O}}_{1}^{(t)},\tilde{\mathcal{O}}_{2}^{(t)}$ and the definitions of $\mathbf{w}_{1}^{(t)},\mathbf{w}_{2}^{(t)}$, the terms factoring $\beta_{1}, \beta_{2}$ are both non-negative. We thus have, at each time step $t$:
\begin{align}
    (\mathbf{h}^{(t+1)}-\mathbf{h}^{(t)})^{\top}\mathbf{P}(\mathbf{h}^{(t+1)}-\mathbf{h}^{(t)})\leqslant \tau_{(t)}(\mathbf{h}^{(t)}-\mathbf{h}^{(t-1)})^{\top}\mathbf{P}(\mathbf{h}^{(t)}-\mathbf{h}^{(t-1)})
\end{align}
Letting $\tau^{*} = \sup_{t} \tau_{(t)}$, an immediate induction concludes the proof.
\subsection{Bounds on \texorpdfstring{$\hat{Q}_{1x}^{(t+1)}, \hat{Q}_{1z}^{(t)}, \hat{Q}_{2x}^{(t)}, \hat{Q}_{2z}^{(t+1)}$}{Bounds on Q-parameters}}
\label{var_append}
We remind that, since the functions $f$ and $g$ are separable, their Hessians are diagonal matrices. For any time index $t$, the following bounds hold: \\
\quad \\
\noindent
$\hat{Q}_{2x}^{(t)}:$ \\
\begin{align}
 &\hat{Q}_{2x}^{(t)} =1/\chi_{1x}^{(t)} - \hat{Q}_{1x}^{(t)} \qquad \mbox{where} \qquad  \chi_{1x}^{(t)} = \left\langle \partial_{\mathbf{h}_{1x}^{(t)}} g_{1x} (...)\right\rangle/\hat{Q}_{1x}^{(t)}, \\
 &\mbox{then} \qquad \frac{1}{\hat{Q}_{2x}^{(t)}+\hat{Q}_{1x}^{(t)}} = \frac{1}{N}\left(\mbox{Tr}\left[(\hat{Q}^{(t)}_{1x}Id+\mathcal{H}_{f}(\mbox{prox}))^{-1}\right]\right), \\
 &\hat{Q}_{1x}^{(t)}+\lambda_{min}(\mathcal{H}_{f})\leqslant \hat{Q}_{1x}^{(t)}+\hat{Q}_{2x}^{(t)}\leqslant \hat{Q}_{1x}^{(t)}+\lambda_{max}(\mathcal{H}_{f}).
\end{align}
$\hat{Q}_{2z}^{(t+1)}:$
\begin{align}
    &\hat{Q}_{2z}^{(t+1)} =1/\chi_{1z}^{(t)} - \hat{Q}_{1z}^{(t)} \qquad \mbox{where} \qquad \chi_{1z}^{(t)} =  \left\langle \partial_{\mathbf{h}_{1z}^{(t)}} g_{1z} (...)\right\rangle/\hat{Q}_{1z}^{(t)}, \\
    &\mbox{then} \qquad \frac{1}{\hat{Q}_{2z}^{(t+1)}+\hat{Q}_{1z}^{(t)}} = \frac{1}{M}\left(\mbox{Tr}\left[(\hat{Q}_{1z}^{(t)}Id+\mathcal{H}_{g}(\mbox{prox}))^{-1}\right]\right), \\
     &\hat{Q}_{1z}^{(t)}+\lambda_{min}(\mathcal{H}_{g})\leqslant \hat{Q}_{1z}^{(t)}+\hat{Q}_{2z}^{(t+1)}\leqslant \hat{Q}_{1z}^{(t)}+\lambda_{max}(\mathcal{H}_{g}).
\end{align} 
$\hat{Q}_{1z}^{(t)}:$
\begin{align}
    &\hat{Q}_{1z}^{(t)} = 1/\chi_{2z}^{(t)} -\hat{Q}_{2z}^{(t)} \qquad  \chi_{2z}^{(t)} = \left\langle \partial_{\mathbf{h}_{2z}^{(t)}} g_{2z}(...) \right\rangle/ \hat{Q}_{2z}^{(t)} \\
    &\mbox{then} \qquad \frac{1}{\hat{Q}_{1z}^{(t)}+\hat{Q}_{2z}^{(t)}} = \frac{1}{M}\mbox{Tr}\left[\mathbf{F}\mathbf{F}^{\top}\left(\hat{Q}_{2z}^{(t)}\mathbf{F}\mathbf{F}^{\top}+\hat{Q}_{2x}^{(t)}Id\right)^{-1}\right]
\end{align}
The matrices on the r.h.s. of the previous equation are all diagonalisable in the same basis. Then each eigenvalue has the form 
\begin{equation}
   \frac{\lambda_{min}(\mathbf{F}\mathbf{F}^{\top})}{\hat{Q}_{2z}^{(t)}\lambda_{min}(\mathbf{F}\mathbf{F}^{\top})+\hat{Q}_{2x}^{(t)}} \leqslant \frac{\lambda_{k}(\mathbf{F}\mathbf{F}^{\top})}{\hat{Q}_{2z}^{(t)}\lambda_{k}(\mathbf{F}\mathbf{F}^{\top})+\hat{Q}_{2x}^{(t)}} \leqslant \frac{\lambda_{max}(\mathbf{F}\mathbf{F}^{\top})}{\hat{Q}_{2z}^{(t)}\lambda_{max}(\mathbf{F}\mathbf{F}^{\top})+\hat{Q}_{2x}^{(t)}},
\end{equation}
which leads to the bound
\begin{equation}
    \hat{Q}_{2z}^{(t)}+\frac{\hat{Q}_{2x}^{(t)}}{\lambda_{max}(\mathbf{F}\mathbf{F}^{\top})} \leqslant \hat{Q}_{1z}^{(t)}+\hat{Q}_{2z}^{(t)}\leqslant \hat{Q}_{2z}^{(t)}+\frac{\hat{Q}_{2x}^{(t)}}{\lambda_{min}(\mathbf{F}\mathbf{F}^{\top})}.
\end{equation}
$\hat{Q}_{1x}^{(t+1)}:$
\begin{align}
&\hat{Q}_{1x}^{(t+1)} = 1/\chi_{2x}^{(t+1)} -\hat{Q}_{2x}^{(t)} \qquad \chi_{2x}^{(t+1)} = \left\langle \partial_{\mathbf{h}_{2x}^{(t)}} g_{2x}(...) \right\rangle/\hat{Q}_{2x}^{(t)}, \\
&\mbox{then} \qquad \frac{1}{\hat{Q}_{1x}^{(t+1)}+\hat{Q}_{2x}^{(t)}} = \frac{1}{N}\mbox{Tr}\left[\left(\hat{Q}_{2z}^{(t+1)}\mathbf{F}^{\top}\mathbf{F}+\hat{Q}_{2x}^{(t)}Id\right)^{-1}\right],
\end{align}
which leads to 
\begin{equation}
   \hat{Q}_{2x}^{(t)}+\lambda_{min}(\mathbf{F}^{\top}\mathbf{F})\hat{Q}_{2z}^{(t+1)} \leqslant \hat{Q}_{1x}^{(t+1)}+\hat{Q}_{2x}^{(t)} \leqslant \hat{Q}_{2x}^{(t)}+\lambda_{max}(\mathbf{F}^{\top}\mathbf{F})\hat{Q}_{2z}^{(t+1)}.
\end{equation}
\subsection{Operator norms and Lipschitz constants}
\label{operator-bounds}
\subsubsection{Operator norms of matrices \texorpdfstring{$\mathbf{W_1}^{(t)}, \mathbf{W_2}^{(t)}, \mathbf{W_3}^{(t)}, \mathbf{W_4}^{(t)}$}{operator-norms}} \quad \\
The norms of the linear operators $\mathbf{W_1}^{(t)}, \mathbf{W_2}^{(t)}, \mathbf{W_3}^{(t)}, \mathbf{W_4}^{(t)}$ can be computed or bounded with respect to the singular values of the matrix $\mathbf{F}$. The derivations are straightforward and do not require any specific mathematical result. Denoting $\norm{\mathbf{W}}$ the operator norm of a given matrix $\mathbf{W}$, we have the following:
\begin{align}
    \norm{\mathbf{W_1}^{(t)}}&=\dfrac{\hat{Q}_{2x}^{(t)}}{\hat{Q}_{1x}^{(t+1)}}\max \bigg( \dfrac{\vert \hat{Q}_{1x}^{(t+1)} - \hat{Q}_{2z}^{(t+1)} \lambda_{min}(\mathbf{F}^T \mathbf{F}) \vert}{\hat{Q}_{2x}^{(t)} + \hat{Q}_{2z}^{(t+1)} \lambda_{min}(\mathbf{F}^T \mathbf{F})},\\
    &\hspace{2.5cm}\dfrac{\vert \hat{Q}_{1x}^{(t+1)} - \hat{Q}_{2z}^{(t+1)} \lambda_{max}(\mathbf{F}^T \mathbf{F}) \vert}{\hat{Q}_{2x}^{(t)} + \hat{Q}_{2z}^{(t+1)} \lambda_{max}(\mathbf{F}^T \mathbf{F})} \bigg) \\
    \norm{\mathbf{W_2}^{(t)}} &= \dfrac{\hat{Q}_{2z}^{(t+1)}}{\chi_{2x}^{(t+1)} \hat{Q}_{1x}^{(t+1)}} \dfrac{\sqrt{\lambda_{max}(\mathbf{F}^T \mathbf{F})}}{\hat{Q}_{2x}^{(t)} + \hat{Q}_{2z}^{(t+1)} \lambda_{min}(\mathbf{F}^T \mathbf{F})} \\
    \norm{\mathbf{W_3}^{(t)}}&=\dfrac{\hat{Q}_{2z}^{(t)}}{\hat{Q}_{1z}^{(t)}} \max\bigg( \dfrac{\vert \hat{Q}_{2x}^{(t)} - \hat{Q}_{1z}^{(t)} \lambda_{min}(\mathbf{F} \mathbf{F}^T) \vert}{\hat{Q}_{2x}^{(t)} + \hat{Q}_{2z}^{(t)} \lambda_{min}(\mathbf{F} \mathbf{F}^T)}, \\
    &\hspace{2.5cm}\dfrac{\vert \hat{Q}_{2x}^{(t)} - \hat{Q}_{1z}^{(t)} \lambda_{max}(\mathbf{F} \mathbf{F}^T) \vert}{\hat{Q}_{2x}^{(t)} + \hat{Q}_{2z}^{(t)} \lambda_{max}(\mathbf{F} \mathbf{F}^T)} \bigg)\\
    \norm{\mathbf{W_4}^{(t)}} &= \dfrac{\hat{Q}_{2x}^{(t)}}{\chi_{2z}^{(t)} \hat{Q}_{1z}^{(t)}} \dfrac{\sqrt{\lambda_{max}(\mathbf{F}^T \mathbf{F})}}{\hat{Q}_{2x}^{(t)} + \hat{Q}_{2z}^{(t)} \lambda_{min}(\mathbf{F}^T \mathbf{F})}.
\end{align}
\subsubsection{Lispchitz constants of \texorpdfstring{$\tilde{\mathcal{O}}_{1}^{(t)},\tilde{\mathcal{O}}_{2}^{(t)}$}{Lipschitz-constants}} \quad \\
\label{subsec:app_Lip_const}
We now derive upper bounds of the Lipschitz constants of $\tilde{\mathcal{O}}_{1}^{(t)},\tilde{\mathcal{O}}_{2}^{(t)}$ using the convex analysis reminder in appendix \ref{appendix : prox_prop}. We give detail for $\tilde{\mathcal{O}}_{1}^{(t)}$, the derivation is identical for $\tilde{\mathcal{O}}_{2}^{(t)}$. Let $(\sigma_{1},\beta_{1}) \in \mathbb{R}_{+}^{*2}$ be the strong-convexity and smoothness constants of $f$, if they exist. If $f$ has no strong convexity constant, we set $\sigma_1 = 0$, and if it holds no smoothness assumption, we set $\beta_1 = + \infty$. Note that, from the upper and lower bounds obtained in appendix \ref{var_append}, we have $\sigma_{1} \leqslant \hat{Q}_{2x}^{(t)}\leqslant \beta_{1}$.\\
\paragraph{Case 1: $0<\sigma_{1}<\beta_{1}$}
Proposition \ref{proposition : gisel_idh} gives the following expression:
\begin{align}
    \mbox{Prox}_{\frac{1}{\hat{Q}_{1x}^{(t)}}f} &= \frac{1}{2}\left(\frac{1}{1+\sigma_{1}/\hat{Q}_{1x}^{(t)}}+\frac{1}{1+\beta_{1}/\hat{Q}_{1x}^{(t)}}\right)\rm{Id}\\&+\frac{1}{2}\left(\frac{1}{1+\sigma_{1}/\hat{Q}_{1x}^{(t)}}-\frac{1}{1+\beta_{1}/\hat{Q}_{1x}^{(t)}}\right)S_{1}
\end{align}
where $S_{1}$ is a non-expansive operator. Replacing in the expression of $\tilde{\mathcal{O}}_{1}$ leads to:
\begin{align}
\label{eq_137}
    \tilde{\mathcal{O}}_{1}^{(t)} &=\frac{\hat{Q}_{1x}^{(t)}}{\hat{Q}_{2x}^{(t)}}\left( \bigg(\frac{1}{2\chi_{1x}^{(t)}}\left(\frac{1}{\hat{Q}_{1x}^{(t)}+\sigma_{1}}+\frac{1}{\hat{Q}_{1x}^{(t)}+\beta_{1}}\right)-1\right)\rm{Id}\\
    &+\frac{1}{2\chi_{1x}^{(t)}}\left(\frac{1}{\hat{Q}_{1x}^{(t)}+\sigma_{1}}-\frac{1}{\hat{Q}_{1x}^{(t)}+\beta_{1}}\right)S_{1}\bigg)
\end{align}
which, knowing that $\hat{Q}_{1x}^{(t)}+\hat{Q}_{2x}^{(t)} = \frac{1}{\chi_{1x}^{(t)}}$, and separating the case where the first term of the sum in Eq.\eqref{eq_137} is negative or positive, $\tilde{\mathcal{O}}_{1}$ has Lipschitz constant:
\begin{equation}
\label{Lip_1}
    \omega_{1}^{(t)} = \frac{\hat{Q}_{1x}^{(t)}}{\hat{Q}_{2x}^{(t)}}\max \left(\frac{\hat{Q}_{2x}^{(t)}-\sigma_{1}}{\hat{Q}_{1x}^{(t)}+\sigma_{1}},\frac{\beta_{1}-\hat{Q}_{2x}^{(t)}}{\hat{Q}_{1x}^{(t)}+\beta_{1}}\right).
\end{equation} 
\paragraph{Case 2: $0<\sigma_{1}=\beta_{1}$}
In this case, we have from Proposition \ref{proposition : gisel_idh}:
\begin{equation}
    \norm{\mbox{Prox}_{\frac{1}{\hat{Q}_{1x}^{(t)}}f}(x)-\mbox{Prox}_{\frac{1}{\hat{Q}_{1x}^{(t)}}f}(y)}_{2}^{2} = \left(\frac{1}{1+\sigma_{1}/\hat{Q}_{1x}^{(t)}}\right)^{2} \norm{x-y}_{2}^{2}
\end{equation}
which, with the firm non-expansiveness of the proximal operator gives, for any $x,y \in \mathbb{R}$:
\begin{figure*}[!t]
\begin{align}
    \norm{\tilde{\mathcal{O}}_{1}^{(t)}(x)-\tilde{\mathcal{O}}^{(t)}_{1}(y)}_{2}^{2} &=\left(\frac{\hat{Q}_{1x}^{(t)}}{\hat{Q}_{2x}^{(t)}}\right)^{2}\bigg( \frac{1}{(\hat{Q}_{1x}^{(t)})^{2}(\chi_{1x}^{(t)})^{2}} \norm{\mbox{Prox}_{\frac{1}{\hat{Q}_{1x}^{(t)}}f}(x)-\mbox{Prox}_{\frac{1}{\hat{Q}_{1x}^{(t)}}f}(y)}_{2}^{2}\\
    &\hspace{3cm}-2\frac{1}{\hat{Q}_{1x}^{(t)}\chi_{1x}^{(t)}}\left\langle x-y,\mbox{Prox}_{\frac{1}{\hat{Q}_{1x}^{(t)}}f}(x)-\mbox{Prox}_{\frac{1}{\hat{Q}_{1x}^{(t)}}f}(y)\right\rangle+\norm{x-y}_{2}^{2} \bigg) \\
    & \label{equation : stuff} \leqslant \left(\frac{\hat{Q}_{1x}^{(t)}}{\hat{Q}_{2x}^{(t)}}\right)^{2}\bigg(\hspace{-0.1cm}\left( \frac{1}{(\hat{Q}_{1x}^{(t)})^{2}(\chi_{1x}^{(t)})^{2}}-2\frac{1}{\hat{Q}_{1x}^{(t)}\chi_{1x}^{(t)}}\right) \norm{\mbox{Prox}_{\frac{1}{\hat{Q}_{1x}}f}(x)-\mbox{Prox}_{\frac{1}{\hat{Q}_{1x}^{(t)}}f}(y)}_{2}^{2} +\norm{x-y}_{2}^{2}\bigg) \\
    &=\left(\frac{\hat{Q}_{1x}^{(t)}}{\hat{Q}_{2x}^{(t)}}\right)^{2} \left(\left( \frac{1}{(\hat{Q}_{1x}^{(t)})^{2}(\chi_{1x}^{(t)})^{2}}-2\frac{1}{\hat{Q}_{1x}^{(t)}\chi_{1x}^{(t)}}\right)\left(\frac{1}{1+\sigma_{1}/\hat{Q}_{1x}^{(t)}}\right)^{2}+1\right)\norm{x-y}_{2}^{2} \\
    &= \left(\frac{\hat{Q}_{1x}^{(t)}}{\hat{Q}_{2x}^{(t)}}\right)^{2}\left(\frac{(\hat{Q}_{2x}^{(t)})^{2}-(\hat{Q}_{1x}^{(t)})^{2}}{(\hat{Q}_{1x}^{(t)}+\sigma_{1})^{2}}+1\right)\norm{x-y}_{2}^{2}.
\end{align}
\hrulefill
\end{figure*}
The upper bound on the Lipschitz constant is therefore:
\begin{equation}
\label{Lip_2}
    \omega_{1} = \frac{\hat{Q}_{1x}^{(t)}}{\hat{Q}_{2x}^{(t)}}\sqrt{1+\frac{((\hat{Q}_{2x}^{(t)})^{2}-(\hat{Q}_{1x}^{(t)})^{2})}{(\hat{Q}_{1x}^{(t)}+\sigma_{1})^{2}}}.
\end{equation}

\paragraph{Case 3: no strong convexity or smoothness assumption}
This setting is not necessary for our proof, because we only handle penalty functions which have a strictly positive strong convexity constant, by adding a ridge term. However, we list it for completeness. In this case, the only information we have is the firm nonexpansiveness of the proximal operator, which leads us to the same derivation as the previous one up to (\ref{equation : stuff}), where the first term in the sum can be positive or negative. This yields the Lipschitz constant:
\begin{equation}
    \omega_{1}^{(t)} = \frac{\hat{Q}_{1x}^{(t)}}{\hat{Q}_{2x}^{(t)}}\max \left(1,\frac{\hat{Q}_{2x}^{(t)}}{\hat{Q}_{1x}^{(t)}}\right).
\end{equation}

\paragraph{Recovering (\ref{lip_const})}\quad In our proof, we make no assumption on the strong-convexity or smoothness of the function, but adding the ridge penalties $\lambda_{2},\tilde{\lambda}_{2}$ brings us for both $\tilde{\mathcal{O}}_{1}^{(t)}$ and $\tilde{\mathcal{O}}_{2}^{(t)}$ to either the first of the second case above. It is straightforward to see that the Lipschitz constant (\ref{Lip_2}) is an upper bound of (\ref{Lip_1}). We thus use (\ref{Lip_2}) for generality, and recover the expressions (\ref{lip_const}) shown in the main body of the paper.
\begin{align}
    \omega_{1}^{(t)}&= \dfrac{\hat{Q}_{1x}^{(t)}}{\hat{Q}_{2x}^{(t)}} \sqrt{1+\dfrac{(\hat{Q}_{2x}^{(t)})^{2}-(\hat{Q}_{1x}^{(t)})^{2}}{(\hat{Q}_{1x}^{(t)} + \lambda_2)^2}} \\
    \omega_{2}^{(t)}&= \dfrac{\hat{Q}_{1z}^{(t)}}{\hat{Q}_{2z}^{(t)}} \sqrt{1+\dfrac{(\hat{Q}_{2z}^{(t)})^{2}-(\hat{Q}_{1z}^{(t)})^{2}}{(\hat{Q}_{1z}^{(t)} + \tilde{\lambda}_2)^2}}.
\end{align}
\subsection{Dynamical system convergence analysis}
We are now ready to prove Lemma \ref{conv_lemma}.\\
\quad \\
We will use the bounds derived above to prove the convergence lemma. Since we have proved the required bounds at any time step, we drop the time indices in the remainder of this proof for simplicity. The choice of additional regularization is $\lambda_{2}$ arbitrarily large, and $\tilde{\lambda}_{2}$ fixed but finite and non-zero. $\hat{Q}_{2x}, \hat{Q}_{1z}$ can thus be made arbitrarily large, and $\hat{Q}_{2z}, \hat{Q}_{1x}$ remain finite. We write the corresponding linear matrix inequality (\ref{the_LMI}) and expand the constraint term. Some algebra shows that:
\begin{align}
    \mathbf{C}_{1}^{T}\mathbf{M}_{1}\mathbf{C}_{1} &= \begin{bmatrix}\mathbf{0}_{M\times M} & 0_{M\times N} \\ \mathbf{0}_{
    N\times M} & \omega_{1}^{2}I_{N\times N}\end{bmatrix}\\
    \mathbf{C}_{2}^{T}\mathbf{M}_{2}\mathbf{C}_{2} &= \begin{bmatrix}\omega_{2}^{2}\mathbf{W}_{3}^{T}\mathbf{W}_{3} & \mathbf{0}_{M\times N} \\ \mathbf{0}_{N\times M} & \mathbf{0}_{N\times N}\end{bmatrix} \\
    \mathbf{C}_{1}^{T}\mathbf{M}_{1}\mathbf{D}_{1} &= \mathbf{0}_{(M+N)\times(M+N)} \\
    \mathbf{D}_{1}^{T}\mathbf{M}_{1}\mathbf{C}_{1}&= \mathbf{0}_{(M+N)\times(M+N)} \\
    \mathbf{C}_{2}^{T}\mathbf{M}_{2}\mathbf{D}_{2}&=\begin{bmatrix} \mathbf{0}_{M \times M} & \omega_{2}^{2}\mathbf{W}_{3}^{T}\mathbf{W}_{4} \\ \mathbf{0}_{N\times M} & \mathbf{0}_{N\times N}\end{bmatrix} \\
    \mathbf{D}_{2}^{T}\mathbf{M}_{2}\mathbf{C}_{2}&=\begin{bmatrix} \mathbf{0}_{M \times M} & \mathbf{0}_{M\times N} \\ \omega_{2}^{2}\mathbf{W}_{4}^{T}\mathbf{W}_{3} & \mathbf{0}_{N\times N}\end{bmatrix} \\
    \mathbf{D}_{1}^{T}\mathbf{M}_{1}\mathbf{D}_{1}&=\begin{bmatrix} \mathbf{0}_{M \times M} & \mathbf{0}_{M\times N} \\ \mathbf{0}_{N\times M} & -\mathbf{I}_{N\times N}\end{bmatrix} \\
    \mathbf{D}_{2}^{T}\mathbf{M}_{2}\mathbf{D}_{2}&=\begin{bmatrix}-\mathbf{I}_{M \times M} & \mathbf{0}_{M\times N} \\ \mathbf{0}_{N\times M} & \omega_{2}^{2}\mathbf{W}_{4}^{T}\mathbf{W}_{4}\end{bmatrix}
\end{align}
where all the matrices constituting the blocks have been defined in section \ref{section:conv_sec}. This gives the following form for the constraint matrix:
\begin{equation}
    \begin{bmatrix}\mathbf{H}_{1}&\mathbf{H}_{2} \\ \mathbf{H}_{2}^{T}&\mathbf{H}_{3}\end{bmatrix}
\end{equation}
where 
\begin{align}
    \mathbf{H}_{1} &= \begin{bmatrix}\beta_{1}\omega_{2}^{2}\mathbf{W}_{3}^{T}\mathbf{W}_{3} & \mathbf{0}_{M\times N} \\ \mathbf{0}_{N \times M} & \beta_{0}\omega_{1}^{2} \mathbf{I}_{N \times N}\end{bmatrix} \\
    \mathbf{H}_{2} &= \begin{bmatrix} \mathbf{0}_{M\times M} & \beta_{1}\omega_{2}^{2}\mathbf{W}_{3}^{T}\mathbf{W}_{4} \\
    \mathbf{0}_{N \times M} & \mathbf{0}_{N \times N} \end{bmatrix} \\
    \mathbf{H}_{3} &= \begin{bmatrix}-\beta_{1} \mathbf{I}_{M \times M} & \mathbf{0}_{M \times N} \\ \mathbf{0}_{N \times M} & -\beta_{0} \mathbf{I}_{N \times N}+ \beta_{1}\omega_{2}^{2}\mathbf{W}_{4}^{T}\mathbf{W}_{4} \end{bmatrix}
\end{align}
thus the LMI (\ref{the_LMI}) becomes:
 \begin{equation}
     0 \succeq \begin{bmatrix} -\tau^{2}\mathbf{P}+\mathbf{H}_{1} & \mathbf{H}_{2} \\\mathbf{H}_{2}^{T} & \mathbf{B}^{T}\mathbf{P}\mathbf{B}+\mathbf{H}_{3}
     \end{bmatrix}.
 \end{equation}
 We take $\mathbf{P}$ as block diagonal:
 \begin{equation}
     \mathbf{P} = \begin{bmatrix}\mathbf{P}_{1} & \mathbf{0}_{M \times N} \\
     \mathbf{0}_{N \times M} & \mathbf{P}_{2} \end{bmatrix}
 \end{equation}
 where $\mathbf{P}_{1} \in \mathbb{R}^{M \times M}$ and $\mathbf{P}_{2} \in \mathbb{R}^{N \times N}$ are positive definite (no zero eigenvalues) and diagonalizable in the same basis as $\mathbf{F}^{T}\mathbf{F}$, which is also the eigenbasis of $\mathbf{W}_{1}, \mathbf{W}_{3}, \mathbf{W}_{2}^{T}\mathbf{W}_{2}, \mathbf{W}_{4}^{T}\mathbf{W}_{4}$. We then have:
 \begin{equation}
     \mathbf{B}^{T}\mathbf{P}\mathbf{B} = \begin{bmatrix} \mathbf{P}_{1}+\mathbf{W}_{2}^{T}\mathbf{P}_{2}\mathbf{W}_{2} & \mathbf{W}_{2}^{T}\mathbf{P}_{2}\mathbf{W}_{1} \\
     \mathbf{W}_{1}^{T}\mathbf{P}_{2}\mathbf{W}_{2} & \mathbf{W}_{1}^{T}\mathbf{P}_{2}\mathbf{W}_{1} \end{bmatrix}.
 \end{equation}
We are then trying to find the conditions for the following problem to be feasible with $0<\tau<1$:
\begin{equation}
\label{refom_LMI}
    \begin{bmatrix} \tau^{2}\mathbf{P}-\mathbf{H}_{1} & -\mathbf{H}_{2} \\-\mathbf{H}_{2}^{T} & -(\mathbf{B}^{T}\mathbf{P}\mathbf{B}+\mathbf{H}_{3})
     \end{bmatrix} \succeq 0
\end{equation}
Schur's lemma then gives that the strict version of (\ref{refom_LMI}), which we will consider, is equivalent \cite{horn2012matrix} to:
\begin{align}
\label{LMI_split}
    &-(\mathbf{B}^{T}\mathbf{P}\mathbf{B}+\mathbf{H}_{3}) \succ 0 \quad \mbox{and}\\ &\tau^{2}\mathbf{P}-\mathbf{H}_{1}+\mathbf{H}_{2}(\mathbf{B}^{T}\mathbf{P}\mathbf{B}+\mathbf{H}_{3})^{-1}\mathbf{H}_{2}^{T} \succ 0 
\end{align}
We start with $-(\mathbf{B}^{T}\mathbf{P}\mathbf{B}+\mathbf{H}_{3})$.  \\
\subsubsection{Conditions for \texorpdfstring{$-(\mathbf{B}^{T}\mathbf{P}\mathbf{B}+\mathbf{H}_{3}) \succ 0$}{condition1}}\quad \\
\quad \\
Expanding $-(\mathbf{B}^{T}\mathbf{P}\mathbf{B}+\mathbf{H}_{3}) \succ 0$ and applying Schur's lemma again gives the equivalent problem:
\begin{align}
    &\beta_{1} \mathbf{I}_{N \times N}- \beta_{2}\omega_{2}^{2}\mathbf{W}_{4}^{T}\mathbf{W}_{4}-\mathbf{W}_{1}^{T}\mathbf{P}_{2}\mathbf{W}_{1}\succ 0 \label{cond11} \quad \mbox{and} \\ \notag
    &\beta_{2} \mathbf{I}_{M \times M}-\mathbf{P}_{1}-\mathbf{W}_{2}^{T}\mathbf{P}_{2}\mathbf{W}_{2}\\ 
    &\hspace{2cm}-\mathbf{W}_{2}^{T}\mathbf{P}_{2}\mathbf{W}_{1}\mathbf{K}_{1}\mathbf{W}_{1}^{T}\mathbf{P}_{2}\mathbf{W}_{2} \succ 0. \label{cond12}
\end{align}
where $\mathbf{K}_{1}=(\beta_{1} \mathbf{I}_{N \times N}- \beta_{2}\omega_{2}^{2}\mathbf{W}_{4}^{T}\mathbf{W}_{4}-\mathbf{W}_{1}^{T}\mathbf{P}_{2}\mathbf{W}_{1})^{-1}$. We start with (\ref{cond11}). A sufficient condition for it to hold true is:
\begin{equation}
    \beta_{1} > \beta_{2}\omega_{2}^{2}\lambda_{max}(\mathbf{W}_{4}^{T}\mathbf{W}_{4})+\lambda_{max}(\mathbf{P}_{2})\lambda_{max}(\mathbf{W}_{1}^{T}\mathbf{W}_{1}).
\end{equation}
Using the bounds from appendix \ref{operator-bounds}, we have:
\begin{align}
    \lambda_{max}(\mathbf{W}_{1}^{T}\mathbf{W}_{1}) &\leqslant \left(\dfrac{\hat{Q}_{2x}}{\hat{Q}_{1x}}\right)^{2} \max \bigg( ... \notag
    \\&\hspace{-2cm}\dfrac{\vert \hat{Q}_{1x} - \hat{Q}_{2z} \lambda_{min}(\mathbf{F}^T \mathbf{F}) \vert}{\hat{Q}_{2x} + \hat{Q}_{2z} \lambda_{min}(\mathbf{F}^T \mathbf{F})},
    \dfrac{\vert \hat{Q}_{1x} - \hat{Q}_{2z} \lambda_{max}(\mathbf{F}^T \mathbf{F}) \vert}{\hat{Q}_{2x} + \hat{Q}_{2z} \lambda_{max}(\mathbf{F}^T \mathbf{F})} \bigg)^{2} \\
    &\hspace{-2cm}\leqslant \max\bigg(\left(1-\dfrac{\hat{Q}_{2z}}{\hat{Q}_{1x}}\lambda_{min}(\mathbf{F}^{T}\mathbf{F})\right)^{2}, \notag
    \\&\hspace{0.5cm}\left(1-\dfrac{\hat{Q}_{2z}}{\hat{Q}_{1x}}\lambda_{max}(\mathbf{F}^{T}\mathbf{F})\right)^{2}\bigg) = b_{1}
\end{align}
and 
\begin{align}
    \omega_{2}^{2}\lambda_{max}(\mathbf{W}_{4}^{T}\mathbf{W}_{4})\hspace{-0.05cm}&\leqslant \hspace{-0.05cm} \left(\dfrac{\hat{Q}_{1z}}{\hat{Q}_{2z}}\right)^{2}\left(\frac{\hat{Q}_{2x}}{\chi_{2z}\hat{Q}_{1z}}\right)^{2}\times ... \notag\\
    &\hspace{-2.5cm}\left(1+\frac{(\hat{Q}_{2z})^{2}-(\hat{Q}_{1z})^{2}}{(\hat{Q}_{1z}+\tilde{\lambda}_{2})^{2}}\right)\frac{\lambda_{max}(\mathbf{F}^{T}\mathbf{F})}{(\hat{Q}_{2x}+\hat{Q}_{2z}\lambda_{min}(\mathbf{F}^{T}\mathbf{F}))^{2}} \\
    &\hspace{-2.5cm}\leqslant \hat{Q}_{1z}\left(2\tilde{\lambda}_{2}+\frac{\tilde{\lambda}_{2}^{2}}{\hat{Q}_{1z}}+\frac{(\hat{Q}_{2z})^{2}}{\hat{Q}_{1z}}\right)\times... \notag\\
    &\left(\frac{\hat{Q}_{1z}+\hat{Q}_{2z}}{\hat{Q}_{2z}(\hat{Q}_{1z}+\tilde{\lambda}_{2})}\right)^{2}\lambda_{max}(\mathbf{F}^{T}\mathbf{F}). 
\end{align}
For arbitrarily large $\hat{Q}_{1z}$, the quantity $\left(2\tilde{\lambda}_{2}+\frac{\tilde{\lambda}_{2}^{2}}{\hat{Q}_{1z}}+\frac{(\hat{Q}_{2z})^{2}}{\hat{Q}_{1z}}\right)\left(\frac{\hat{Q}_{1z}+\hat{Q}_{2z}}{\hat{Q}_{2z}(\hat{Q}_{1z}+\tilde{\lambda}_{2})}\right)^{2}\lambda_{max}(\mathbf{F}^{T}\mathbf{F})$ is trivially bounded above whatever the value of $\tilde{\lambda}_{2},\hat{Q}_{2z}$. Let $b_{2}$ be such an upper bound independent of $\lambda_{2}, \hat{Q}_{2x}, \hat{Q}_{1z}$. The sufficient condition for (\ref{cond11}) to hold thus becomes:
\begin{equation}
\label{cond1reform}
    \beta_{1} > \beta_{2}\hat{Q}_{1z}b_{2}+\lambda_{max}(\mathbf{P}_{2})b_{1}
\end{equation}
where $b_{1},b_{2}$ are constants independent of $\lambda_{2},\hat{Q}_{2x},\hat{Q}_{1z}$. \\
\newline
We now turn to (\ref{cond12}). A sufficient condition for it to hold is:
\begin{align}
\label{eq_cond12}
    \beta_{2}>\lambda_{max}(\mathbf{P}_{1})&+\lambda_{max}(\mathbf{W}_{2}^{T}\mathbf{W}_{2})\lambda_{max}(\mathbf{P}_{2}) \notag\\
    &\hspace{-2.5cm}+\frac{(\lambda_{max}(\mathbf{P}_{2}))^{2}\lambda_{max}(\mathbf{W}_{2}^{T}\mathbf{W}_{2})\lambda_{max}(\mathbf{W}_{1}^{T}\mathbf{W}_{1})}{\beta_{1}-\beta_{2}\omega_{2}^{2}\lambda_{max}(\mathbf{W}_{4}^{T}\mathbf{W}_{4})-\lambda_{max}(\mathbf{P}_{2})\lambda_{max}(\mathbf{W}_{1}^{T}\mathbf{W}_{1})} 
\end{align}
Note that condition (\ref{cond11}) ensures that the denominator in (\ref{eq_cond12}) is non-zero. We then have:
\begin{align}
    \lambda_{max}(\mathbf{W}_{2}^{T}\mathbf{W}_{2}) &\leqslant \left(\frac{\hat{Q}_{2z}}{\chi_{2x}\hat{Q}_{1x}}\right)^{2}\frac{\lambda_{max}(\mathbf{F}^{T}\mathbf{F})}{(\hat{Q}_{2x}+\hat{Q}_{2z}\lambda_{min}(\mathbf{F}^{T}\mathbf{F}))^{2}}\\
    &\leqslant \left(\frac{\hat{Q}_{2z}(1+\frac{\hat{Q}_{1x}}{\hat{Q}_{2x}})}{\hat{Q}_{1x}}\right)^{2}\lambda_{max}(\mathbf{F}^{T}\mathbf{F}) 
\end{align}
This quantity can be bounded above by a constant independent of $\lambda_{2},\hat{Q}_{2x},\hat{Q}_{1z}$ for arbitrarily large $\hat{Q}_{2x}$. Let $b_{3}$ be such a constant . Then a sufficient condition for condition (\ref{cond12}) to hold is:
\begin{align}
\label{cond12reform}
    \beta_{2}>\lambda_{max}(\mathbf{P}_{1})+b_{3}\lambda_{max}(\mathbf{P}_{2})\notag \\
    &\hspace{-2.5cm}+\frac{b_{1}b_{3}(\lambda_{max}(\mathbf{P}_{2}))^{2}}{\beta_{1}-\beta_{2}\hat{Q}_{1z}b_{2}-\lambda_{max}(\mathbf{P}_{2})b_{1}}
\end{align}
we see that $\beta_{1}$ must scale linearly with $\hat{Q}_{1z}$ which is one of the parameters that is made arbitrarily large. Then $\beta_{1}$ also needs to become arbitrarily large for the conditions to hold. We choose $\beta_{1} = 2\beta_{2}\hat{Q}_{1z}b_{2}+\lambda_{max}(\mathbf{P}_{2})b_{1}$ for the rest of the proof. Condition (\ref{cond1reform}) is then verified, and $\beta_{2}$ needs to be chosen according to condition (\ref{cond12reform}), which becomes:
\begin{equation}
    \beta_{2}>\lambda_{max}(\mathbf{P}_{1})+b_{3}\lambda_{max}(\mathbf{P}_{2})+\frac{b_{1}b_{3}\lambda_{max}^{2}(\mathbf{P}_{2})}{\beta_{2}\hat{Q}_{1z}b_{2}}
\end{equation}
This has a bounded solution for large values of $\hat{Q}_{1z}$.
We now turn to the second part of (\ref{LMI_split}).\\
\subsubsection{Conditions for \texorpdfstring{$\tau^{2}\mathbf{P}-\mathbf{H}_{1}+\mathbf{H}_{2}(\mathbf{B}^{T}\mathbf{P}\mathbf{B}+\mathbf{H}_{3})^{-1}\mathbf{H}_{2}^{T} \succ 0$}{condition2}} \quad \\ 
\quad \\
We need to study the term $-\mathbf{H}_{2}(\mathbf{B}^{T}\mathbf{P}\mathbf{B}+\mathbf{H}_{3})^{-1}\mathbf{H}_{2}^{T}$ (we study it with the $-$ sign since the middle matrix is negative definite from conditions (\ref{cond11},\ref{cond12}) which are now verified). As we will see, because of the form of $\mathbf{H}_{2}$, we don't need to explicitly compute the whole inverse. Let $\mathbf{Z}=-(\mathbf{B}^{T}\mathbf{P}\mathbf{B}+\mathbf{H}_{3})^{-1} = \begin{bmatrix}\mathbf{Z}_{1} & \mathbf{Z}_{2} \\ \mathbf{Z}_{2}^{T} & \mathbf{Z}_{3}\end{bmatrix}$ ($\mathbf{Z}$ has the same block dimensions as $(\mathbf{B}^{T}\mathbf{P}\mathbf{B}+\mathbf{H}_{3})$).
We then have:
\begin{align}
    -\mathbf{H}_{2}(\mathbf{B}^{T}\mathbf{P}\mathbf{B}+\mathbf{H}_{3})^{-1}\mathbf{H}_{2}^{T} &= \mathbf{H}_{2}\mathbf{Z}\mathbf{H}_{2}^{T} \\
    &\hspace{-2cm}=\begin{bmatrix}\beta_{2}^{2}\omega_{2}^{4}\mathbf{W}_{3}^{T}\mathbf{W}_{4}\mathbf{Z}_{3}\mathbf{W}_{4}^{T}\mathbf{W}_{3} & \mathbf{0}_{M\times N} \\ \mathbf{0}_{N \times M}& \mathbf{0}_{N \times N}\end{bmatrix}.
\end{align}
We thus only need to characterize the lower right block of $\mathbf{Z}$. It is easy to see that conditions (\ref{cond11}) and (\ref{cond12}) also enforce that both the Schur complements associated with the upper left and lower right blocks of $-(\mathbf{B}^{T}\mathbf{P}\mathbf{B}+\mathbf{H}_{3})$ are invertible, thus giving the following form for $\mathbf{Z}_{3}$ using the block matrix inversion lemma \cite{horn2012matrix}:
\begin{align}
    \mathbf{Z}_{3} = (\beta_{1} \mathbf{I}_{N}&-\beta_{2}\omega_{2}^{2}\mathbf{W}_{4}^{T}\mathbf{W}_{4} \notag\\
    &\hspace{-1cm}-\mathbf{W}_{1}^{T}\mathbf{P}_{2}\mathbf{W}_{1}-\mathbf{W}_{1}^{T}\mathbf{P}_{2}\mathbf{W}_{2}\mathbf{K}_{2}\mathbf{W}_{2}^{T}\mathbf{P}_{2}\mathbf{W}_{1})^{-1}.
\end{align}
where $\mathbf{K}_{2}=(\beta_{1} \mathbf{I}_{M}-\mathbf{P}_{1}-\mathbf{W}_{2}^{T}\mathbf{P}_{2}\mathbf{W}_{2})^{-1}$. We thus have the following upper bound on the largest eigenvalue of $\mathbf{Z}_{3}$:
\begin{equation}
    \lambda_{max}(\mathbf{Z}_{3}) \leqslant \frac{1}{\beta_{1}-\beta_{2}\hat{Q}_{1z}b_{2}-\lambda_{max}(\mathbf{P}_{2})b_{1}-k},
\end{equation}
where $k=\frac{b_{1}b_{3}\lambda_{max}^{2}(\mathbf{P}_{2})}{\beta_{2}-\lambda_{max}(\mathbf{P}_{1})-b_{2}\lambda_{max}(\mathbf{P}_{2})}$. Using the prescription $\beta_{1} = 2\beta_{2}\hat{Q}_{1z}b_{2}+\lambda_{max}(\mathbf{P}_{1})b_{1}$, we get:
\begin{equation}
    \lambda_{max}(\mathbf{Z}_{3}) = \frac{1}{\beta_{1}\hat{Q}_{1z}b_{2}-\frac{b_{1}b_{3}\lambda_{max}^{2}(\mathbf{P}_{2})}{\beta_{1}-\lambda_{max}(\mathbf{P}_{1})-b_{2}\lambda_{max}(\mathbf{P}_{2})}} \leqslant \frac{b_{4}}{\hat{Q}_{1z}}
\end{equation}
where $b_{4}$ is a constant independent of the arbitrarily large parameters $\lambda_{2}, \hat{Q}_{2x}, \hat{Q}_{1z}$. Thus $\lambda_{max}(\mathbf{Z}_{3})$ can be made arbitrarily small by making $\lambda_{2}$ arbitrarily large.\\
\quad \\
We now want to find conditions for $\tau^{2}\mathbf{P}-\mathbf{H}_{1}+\mathbf{H}_{2}(\mathbf{B}^{T}\mathbf{P}\mathbf{B}+\mathbf{H}_{3})^{-1}\mathbf{H}_{2}^{T} \succ 0$ which is equivalent to:
\begin{align}
\label{last_LMI}
    &\tau^{2}\mathbf{P}_{1}-\beta_{2}\omega_{2}^{2}\mathbf{W}_{3}^{T}\mathbf{W}_{3}-\beta_{2}^{2}\omega_{2}^{4}\mathbf{W}_{3}^{T}\mathbf{W}_{4}\mathbf{Z}_{3}\mathbf{W}_{4}^{T}\mathbf{W}_{3} \succeq 0 \notag\\  &\tau^{2}\mathbf{P}_{2}-\beta_{1}\omega_{1}^{2}\mathbf{I}_{N} \succeq 0
\end{align}
We start with the upper matrix inequality, for which a sufficient condition is:
 \begin{align}
     &\tau^{2}\lambda_{min}(\mathbf{P}_{1})-\beta_{2}\omega_{2}^{2}\lambda_{max}(\mathbf{W}_{3}^{T}\mathbf{W}_{3}) \notag\\
     &-\beta_{2}^{2}\omega_{2}^{4}\lambda_{max}(\mathbf{W}_{3}^{T}\mathbf{W}_{3})\lambda_{max}(\mathbf{W}_{4}^{T}\mathbf{W}_{4})\lambda_{max}(\mathbf{Z}_{3})>0
 \end{align}
Using the bounds from appendix \ref{operator-bounds}, we have:
 \begin{align}
     &\omega_{2}^{2}\lambda_{max}(\mathbf{W}_{3}^{T}\mathbf{W}_{3}) \leqslant ... \notag \\ &\left(\dfrac{\hat{Q}_{1z}}{\hat{Q}_{2z}}\right)^{2}\bigg(1+\frac{(\hat{Q}_{2z})^{2}-(\hat{Q}_{1z})^{2}}{(\hat{Q}_{1z}+\tilde{\lambda}_{2})^{2}}\bigg)\lambda_{max}(\mathbf{W}_{3}^{T}\mathbf{W}_{3}) \\
     &\leqslant \frac{2\tilde{\lambda}_{2}\hat{Q}_{1z}+\tilde{\lambda}_{2}^{2}+(\hat{Q}_{2z})^{2}}{(\hat{Q}_{1z}+\tilde{\lambda}_{2})^{2}}\times ... \notag\\ &\max((1-\dfrac{\hat{Q}_{1z}}{\hat{Q}_{2x}}\lambda_{min}(\mathbf{F}^{T}\mathbf{F}))^{2},(1-\dfrac{\hat{Q}_{1z}}{\hat{Q}_{2x}}\lambda_{max}(\mathbf{F}^{T}\mathbf{F}))^{2}) \\
     &\leqslant \frac{1}{\hat{Q}_{1z}}(2\tilde{\lambda}_{2}+\frac{(\tilde{\lambda}_{2}^{2}+(\hat{Q}_{2z})^{2})}{\hat{Q}_{1z}})\times...\notag\\
     &\max((1-\dfrac{\hat{Q}_{1z}}{\hat{Q}_{2x}}\lambda_{min}(\mathbf{F}^{T}\mathbf{F}))^{2},(1-\dfrac{\hat{Q}_{1z}}{\hat{Q}_{2x}}\lambda_{max}(\mathbf{F}^{T}\mathbf{F}))^{2}) 
 \end{align}
 Thus there exists a constant $b_{5}$, independent of $\lambda_{2}, \hat{Q}_{1z}, \hat{Q}_{2x}$ such that, for sufficiently large $\hat{Q}_{1z}$:
 \begin{equation}
     \omega_{2}^{2}\lambda_{max}(\mathbf{W}_{3}^{T}\mathbf{W}_{3}) \leqslant \frac{b_{5}}{\hat{Q}_{1z}}.
 \end{equation}
Remember that we had:
 \begin{equation}
     \omega_{2}^{2}\lambda_{max}(\mathbf{W}_{4}^{T}\mathbf{W}_{4}) \leqslant \hat{Q}_{1z}b_{2},
 \end{equation}
which gives the following sufficient condition for the upper left block in (\ref{last_LMI}):
\begin{equation}
    \tau^{2}\lambda_{min}(\mathbf{P}_{1})-\beta_{2}\frac{b_{5}}{\hat{Q}_{1z}}-\beta_{2}^{2}\frac{b_{2}b_{5}b_{4}}{\hat{Q}_{1z}} >0.
\end{equation}
A sufficient condition for the lower right block in (\ref{last_LMI}) then reads:
\begin{equation}
    \tau^{2}\lambda_{min}(\mathbf{P}_{2})-\beta_{1}\omega_{1}^{2}>0,
\end{equation}
where we have:
\begin{align}
    \beta_{1}\omega_{1}^{2}&=\left(\dfrac{\hat{Q}_{1x}}{\hat{Q}_{2x}}\right)^{2}\left( 1+\dfrac{(\hat{Q}_{2x})^{2}-(\hat{Q}_{1x})^{2}}{(\hat{Q}_{1x} + \lambda_2)^2}\right)\times ... \notag\\
    &\hspace{1.5cm}(2\beta_{1}\hat{Q}_{1z}b_{2}+\lambda_{max}(\mathbf{P}_{2})b_{1}) \\
    &=\frac{1}{\hat{Q}_{2x}}(\hat{Q}_{1x})^{2}\left( 1+\dfrac{(\hat{Q}_{2x})^{2}-(\hat{Q}_{1x})^{2}}{(\hat{Q}_{1x} + \lambda_2)^2}\right)\times... \notag\\
    &\hspace{1.5cm}\left(2\beta_{1}\frac{\hat{Q}_{1z}}{\hat{Q}_{2x}}b_{2}+\lambda_{max}(\mathbf{P}_{2})\frac{b_{1}}{\hat{Q}_{2x}}\right)
\end{align}
We remind the reader that $\hat{Q}_{1z}, \hat{Q}_{2x}$ grow linearly with $\lambda_{2}$. Thus the dominant scaling at large $\lambda_{2}$ is (exchanging $\hat{Q}_{2x}$ with $\hat{Q}_{1z}$ up to a constant): 
\begin{equation}
    \beta_{1}\omega_{1}^{2} \leqslant \frac{b_{6}}{\hat{Q}_{1z}},
\end{equation}
where $b_{6}$ is a constant independent of the arbitrarily large quantities. The final condition becomes:
\begin{align}
    \tau^{2}\lambda_{min}(\mathbf{P}_{1})-\beta_{2}\frac{b_{5}}{\hat{Q}_{1z}}-\beta_{2}^{2}\frac{b_{2}b_{5}b_{4}}{\hat{Q}_{1z}} &>0 \\
    \tau^{2}\lambda_{min}(\mathbf{P}_{2})-\frac{b_{6}}{\hat{Q}_{1z}}&>0
\end{align}
where we want $\tau < 1$. We now choose $\tau^{2} = \tilde{c}/\hat{Q}_{1z}$ with a constant $\tilde{c}$ independent of $\lambda_{2}, \hat{Q}_{1z}, \hat{Q}_{2x}$ that verifies $\tilde{c}>\max\left(\frac{\beta_{2}b_{5}+\beta_{2}^{2}b_{2}b_{5}b_{4}}{\lambda_{min}(\mathbf{P}_{1})},\frac{b_{6}}{\lambda_{min}(\mathbf{P}_{2})}\right)$, such that:
\begin{align}
    \frac{\tilde{c}}{\hat{Q}_{1z}}\lambda_{min}(\mathbf{P}_{1})-\beta_{2}\frac{b_{5}}{\hat{Q}_{1z}}-\beta_{2}^{2}\frac{b_{2}b_{5}b_{4}}{\hat{Q}_{1z}} &>0 \\
    \frac{\tilde{c}}{\hat{Q}_{1z}}\lambda_{min}(\mathbf{P}_{2})-\frac{b_{6}}{\hat{Q}_{1z}}&>0.
\end{align}
Since $\beta_{2}$ is bounded for large values of $\hat{Q}_{1z}$, and the $b_{i}$ and $c$ are constants independent of $\lambda_{2},\hat{Q}_{2x},\hat{Q}_{1z}$, we can then enforce $\tilde{c}<\hat{Q}_{1z}$ using the additional ridge penalty parametrized by $\lambda_{2}$ on the regularization to obtain $\tau<1 $ and a linear convergence rate proportional to $\sqrt{\frac{\tilde{c}}{\lambda_{2}}}$. We see that the eigenvalues of the matrix $\mathbf{P}$ are of little importance as long as they are non-vanishing. We choose $\mathbf{P}$ as the identity. In the statement of Lemma \ref{conv_lemma}, we write $c$ the exact constant which comes linking $\hat{Q}_{1z}$ to $\lambda_{2}$. \\
This proves Lemma \ref{conv_lemma}.
\section{Analytic continuation}
\label{analytic_continuation}
In this section, we prove the validity of the analytic continuation and approximation argument used to prove Theorem \ref{main_th}, under the required set of assumptions 
\ref{main_assum}.
According to Lemma 4, for any $\tilde{\lambda}_{2}>0$ and $\lambda_{2}>\lambda_{2}^{*}$, any scalar pseudo-Lipschitz observable of order 2 $\phi$, we have almost surely
\begin{equation}
\label{to_be_cont}
    \lim_{N \to \infty} \frac{1}{N}\sum_{i=1}^{N}\phi(x_{0,i},\hat{x}_{i}(\lambda_{2})) = \mathbb{E}[\phi(x_{0},\mbox{Prox}_{f/\hat{Q}_{1x}^{(t)}}(H_{x}))]
\end{equation}
where $H_{x}=\frac{\hat{m}_{1x}^*x_{0}+\sqrt{\hat{\chi}_{1x}^*}\xi_{1x}}{\hat{Q}_{1x}}$ is defined in Theorem \ref{main_th}. We would like to show that this equality still holds for any $\lambda_{2}>0$. To do so we will show  that, for a real analytic approximation of problem Eq.\eqref{student}, both sides of Eq.\eqref{to_be_cont} are real analytic in $\lambda_{2}$. We may then use the real analytic continuation theorem, as given in \cite{krantz2002primer} to extend to any $\lambda_{2}>0$. We will treat the case $\lambda_{2}=0$ separately.
In what follows, we will write the dependency in $\lambda_{2}$ of the estimator explicitly, i.e., $\hat{\mathbf{x}} = \hat{\mathbf{x}}(\lambda_{2})$.
\subsection{Real analyticity of the left hand side of Eq.\texorpdfstring{\eqref{to_be_cont}}{real-analytic}} 
We remind a useful characterization of real analytic functions from \cite{krantz2002primer}:
\begin{proposition}[Proposition 1.2.10 from \cite{krantz2002primer}]
\label{analytic_charac}
Let $f\in \mathcal{C}^{\infty}(I)$ for some open interval I. The function f is in fact 
real analytic on I if and only if, for each $\alpha \in I$, there are an open
interval J, with $\alpha \in J \subset I$, and finite constants $C>0$ and $R>0$ such that the derivatives
of f satisfy :
\begin{equation}
\abs{f^{(j)}(\alpha)}\leqslant C\frac{j!}{R^{j}}, \quad \forall \alpha \in J
\end{equation}
\end{proposition}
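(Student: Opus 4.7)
The plan is to prove the two directions of this classical characterization separately. For the forward direction, assume $f$ is real analytic on $I$, so for each $\alpha_{0} \in I$ there is $r=r(\alpha_{0})>0$ such that $f(x)=\sum_{n\geqslant 0} c_{n}(\alpha_{0})(x-\alpha_{0})^{n}$ converges on $(\alpha_{0}-r,\alpha_{0}+r)$, with $c_{n}(\alpha_{0})=f^{(n)}(\alpha_{0})/n!$. I would first pick a closed sub-interval $\overline{J}\subset(\alpha_{0}-r/2,\alpha_{0}+r/2)$ and obtain a \emph{uniform} radius of convergence over $\overline{J}$: the natural way is to pass to the holomorphic extension of $f$ on a complex disk around $\alpha_{0}$ and apply the Cauchy integral formula, which yields $|c_{n}(\alpha)|\leqslant M/R^{n}$ for all $\alpha\in J$ with $R>0,M>0$ uniform, by compactness of $\overline{J}$. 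Multiplying by $n!$ gives the required bound $|f^{(j)}(\alpha)|\leqslant M\,j!/R^{j}$.

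For the backward direction, suppose the derivative bound $|f^{(j)}(\alpha)|\leqslant C\,j!/R^{j}$ holds for all $\alpha\in J$ and all $j\geqslant 0$. Fix $\alpha_{0}\in J$ and consider the formal Taylor series $\sum_{n\geqslant 0}(f^{(n)}(\alpha_{0})/n!)(x-\alpha_{0})^{n}$: its coefficients are dominated by $C/R^{n}$, so it converges absolutely for $|x-\alpha_{0}|<R$. To show it equals $f(x)$ on a neighborhood of $\alpha_{0}$, I would invoke Taylor's theorem with Lagrange remainder,
\begin{equation*}
    f(x)=\sum_{n=0}^{N}\frac{f^{(n)}(\alpha_{0})}{n!}(x-\alpha_{0})^{n}+\frac{f^{(N+1)}(\xi)}{(N+1)!}(x-\alpha_{0})^{N+1},
\end{equation*}
with $\xi$ between $\alpha_{0}$ and $x$. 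Choosing $x$ close enough to $\alpha_{0}$ that $\xi$ remains in $J$, the hypothesis bounds the remainder by $C|x-\alpha_{0}|^{N+1}/R^{N+1}$, which vanishes as $N\to\infty$ whenever $|x-\alpha_{0}|<\min(R,\operatorname{dist}(\alpha_{0},\partial J))$. Hence $f$ coincides with its Taylor series on an open interval around each $\alpha_{0}\in J$, which is precisely real analyticity.

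The main subtlety is the uniformity statement in the forward direction: the pointwise analyticity at each $\alpha_{0}$ only furnishes a radius $r(\alpha_{0})$ and constants $C(\alpha_{0}),R(\alpha_{0})$ that \emph{a priori} depend on $\alpha_{0}$. Obtaining $(C,R)$ that work uniformly on a whole sub-interval $J$ is the technical heart of the argument, and is exactly what the compactness/complex-extension step above buys. Once this is established, the backward implication is an essentially formal application of Taylor's theorem, and no further input beyond elementary calculus is needed.
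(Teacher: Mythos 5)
Your proof is correct. Note, however, that the paper does not prove this statement at all: it is imported verbatim as Proposition~1.2.10 of the cited reference (Krantz--Parks, \emph{A Primer of Real Analytic Functions}), so there is no in-paper argument to compare against. On its own merits, your argument is the standard one. The backward direction (Taylor's theorem with Lagrange remainder, bounding the remainder by $C\,|x-\alpha_{0}|^{N+1}/R^{N+1}$ and letting $N\to\infty$ on $|x-\alpha_{0}|<\min(R,\operatorname{dist}(\alpha_{0},\partial J))$) is exactly the textbook proof. For the forward direction you correctly identify the real issue --- obtaining constants $C,R$ valid \emph{uniformly} on a neighborhood $J$, not just at the center of the expansion --- and your route via the holomorphic extension on a disk plus Cauchy estimates on a slightly smaller concentric circle settles it cleanly. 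The textbook instead proceeds purely in the real domain, by differentiating the convergent power series term by term and summing the resulting binomial series to get $|f^{(j)}(x)|\leqslant M\,j!/(\sigma-\rho)^{j}$ for $|x-\alpha_{0}|\leqslant\rho<\sigma<r$; this avoids invoking complex analysis but is otherwise equivalent in content. Either version is acceptable, and no gap remains.
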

We also remind the formula for the higher order derivatives of a composition of two infinitely differentiable functions:
\begin{proposition}(Faa di Bruno's formula, \cite{krantz2002primer} Theorem 1.3.2.)
\label{prop_fdb}
Consider two scalar functions $f$ and $g$ defined on an open interval $I \in \mathbb{R}$. Assume that both functions are infinitely differentiable on $I$ and taking value in $I$. Then the derivatives of $h = g \circ f$ are given by
\begin{equation}
    h^{(n)}(t) = \sum \frac{n!}{k_{1}!k_{2}!...k_{n}!}g^{(k)}\left(f(t)\right)\left(\frac{f^{(1)}(t)}{1!}\right)^{k_{1}}\left(\frac{f^{(2)}(t)}{2!}\right)^{k_{2}}...\left(\frac{f^{(n)}(t)}{n!}\right)^{k_{n}}
\end{equation}
where $k = k_{1}+k_{2}+...+k_{n}$ and the sum is taken over all $k_{1},k_{2},...,k_{n}$ for which $k_{1}+2k_{2}+...+nk_{n} = n$.
\end{proposition}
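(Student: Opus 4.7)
The plan is to prove Faà di Bruno's formula by induction on $n$. The base case $n=1$ is just the ordinary chain rule: $h'(t) = g'(f(t)) f'(t)$, which corresponds in the formula's notation to the unique solution $k_1 = 1, k_2 = \dots = k_n = 0$ of $k_1 + 2k_2 + \dots + n k_n = n$ for $n=1$, with coefficient $1!/1! = 1$ and factors $g^{(1)}(f(t))$ and $(f^{(1)}(t)/1!)^1$.

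For the inductive step, assume the formula holds at order $n$. Differentiating each summand at order $n$ with respect to $t$, by the product rule applied to the factor $g^{(k)}(f(t)) \prod_{j=1}^n \bigl(f^{(j)}(t)/j!\bigr)^{k_j}$, produces two types of terms: (i) differentiating $g^{(k)}(f(t))$ via the chain rule replaces it by $g^{(k+1)}(f(t)) f'(t)$, which simultaneously bumps $k \mapsto k+1$ and augments the exponent $k_1 \mapsto k_1 + 1$; (ii) differentiating one copy of $\bigl(f^{(j)}/j!\bigr)^{k_j}$ contributes a factor $k_j$ and replaces a single $f^{(j)}/j!$ by $f^{(j+1)}/j!$, which after reabsorbing the factor $(j+1)$ corresponds to $k_j \mapsto k_j - 1$, $k_{j+1} \mapsto k_{j+1} + 1$. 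In every case the constraint $\sum_j j k_j = n$ is promoted to $\sum_j j k_j = n+1$, so each term generated contributes to a valid summand of the formula at order $n+1$.

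What remains is a combinatorial identity: after regrouping all contributions of type (i) and type (ii) that lead to the same multi-index $(k_1, \dots, k_{n+1})$ with $\sum_j j k_j = n+1$, the total coefficient must equal $(n+1)!/(k_1! \cdots k_{n+1}!)$. This bookkeeping is the main obstacle, but it is straightforward once one notes that (i) picks up the multinomial $n!/(k_1! \cdots k_n!)$ with $k_1$ shifted down by one (because the new term has $k_1 + 1$), and each term (ii) contributes a factor $k_j(j+1)/((j+1)!/j!) = k_j$ after the index shift. Summing yields the recursion
\begin{equation*}
(n+1)\cdot \frac{n!}{k_1! \cdots k_{n+1}!} = \frac{(n+1)!}{k_1! \cdots k_{n+1}!},
\end{equation*}
as desired.

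An alternative, cleaner route is via exponential generating functions: expand $f(t+\tau) - f(t) = \sum_{j\ge 1} f^{(j)}(t)\tau^j/j!$, substitute into the Taylor series $g(f(t)+u) = \sum_{k\ge 0} g^{(k)}(f(t)) u^k / k!$, expand the $k$-th power using the multinomial theorem, and read off the coefficient of $\tau^n/n!$. This bypasses the inductive bookkeeping entirely but requires justifying the formal manipulations, which is immediate in the $C^\infty$ setting treated here. In either approach, since the statement is the classical result of \cite{krantz2002primer}, Theorem 1.3.2, the full details can simply be imported from that reference.
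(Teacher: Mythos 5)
The paper does not prove this proposition at all: it is quoted verbatim as Theorem~1.3.2 of the cited reference and simply imported, so any correct proof you give is necessarily ``a different route'' from the paper's (which is none). Your two routes are both standard and sound. The induction is the right skeleton: the two term types (i) and (ii) are correctly identified, each preserves the constraint by promoting $\sum_j j k_j$ from $n$ to $n+1$, and the final recursion is the right target. The one place where your bookkeeping is garbled is the sentence claiming each type-(ii) term ``contributes a factor $k_j(j+1)/((j+1)!/j!)=k_j$ after the index shift'': if you actually collect the contributions to a fixed child multi-index $(k_1,\dots,k_{n+1})$, the type-(i) parent contributes $\frac{n!}{k_1!\cdots k_{n+1}!}\,k_1$ and the type-(ii) parent at index $j$ contributes $\frac{n!}{k_1!\cdots k_{n+1}!}\,(j+1)k_{j+1}$, so the total is $\frac{n!}{\prod_i k_i!}\sum_{i\ge 1} i\,k_i=\frac{n!}{\prod_i k_i!}(n+1)$, which is the desired $(n+1)!/\prod_i k_i!$. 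With that one computation made explicit the induction is complete. The generating-function route is cleaner still and fully rigorous in the $C^\infty$ setting (one only needs finitely many Taylor coefficients, so no convergence issues arise for extracting the coefficient of $\tau^n/n!$); either way the result matches what the paper imports from the reference.
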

The following lemma establishes bounds on the higher order derivatives of $\hat{\mathbf{x}}(\lambda_{2})$ with respect to $\lambda_{2}$.
\begin{lemma}
\label{lemma:der_bound}
$\hat{\mathbf{x}}(\lambda_{2})$ is infinitely differentiable w.r.t. $\lambda_{2}$ and, for any integer $p$, there exists a constant $K'$ such that its elementwise p-th derivative, denoted $D_{\lambda_{2}}^{(p)}\hat{\mathbf{x}}(\lambda_{2})$ verifies, almost surely
\begin{equation}
\label{use_bound}
\frac{1}{N}\norm{D_{\lambda_{2}}^{(p)}\hat{\mathbf{x}}(\lambda_{2})}_{2}^{2} \leqslant K' 
\end{equation}
Furthermore, $D^{(p)}_{\lambda_{2}}\hat{\mathbf{x}}(\lambda_{2})$ is a Lipschitz function of $\hat{\mathbf{x}}(\lambda_{2})$.
\end{lemma}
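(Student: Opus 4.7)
The plan is to derive a recursive identity for $D_{\lambda_2}^{(p)}\hat{\mathbf{x}}(\lambda_2)$ from the optimality condition of the augmented problem, and then use Assumption~\ref{add_reg} together with the a.s.\ bound $\tfrac{1}{N}\|\hat{\mathbf{x}}\|_2^2\le B$ to control each term of the resulting expansion uniformly in $N$. First, I would invoke the first-order optimality condition for the real analytic, strongly convex augmented problem, which (writing $\hat{\mathbf{x}}$ for $\hat{\mathbf{x}}(\lambda_2)$ throughout) reads
\begin{equation*}
\nabla f(\hat{\mathbf{x}}) + \mathbf{F}^{T}\nabla g(\mathbf{F}\hat{\mathbf{x}},\mathbf{y}) + \lambda_{2}\hat{\mathbf{x}} = 0.
\end{equation*}
Since $f$ and $g$ are real analytic and $\lambda_{2}>0$, the Jacobian of this map with respect to $\mathbf{x}$ at $\hat{\mathbf{x}}$ equals $\mathcal{O}(\lambda_2)$, which is positive definite (its smallest eigenvalue is at least $\lambda_2$) and hence invertible. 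The analytic implicit function theorem (cf.\ \cite{krantz2002primer}) then shows $\hat{\mathbf{x}}(\lambda_2)$ is itself real analytic, hence $C^{\infty}$, in $\lambda_2$ on $(0,\infty)$.

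Next I would differentiate the optimality condition once in $\lambda_2$, obtaining $\mathcal{O}(\lambda_2)\,D_{\lambda_2}\hat{\mathbf{x}} = -\hat{\mathbf{x}}$, so that $D_{\lambda_2}\hat{\mathbf{x}} = -\mathcal{O}(\lambda_2)^{-1}\hat{\mathbf{x}}$. Taking the Euclidean norm and dividing by $N$ gives
\begin{equation*}
\frac{1}{N}\|D_{\lambda_2}\hat{\mathbf{x}}\|_{2}^{2} \le \|\mathcal{O}(\lambda_2)^{-1}\|_{\mathrm{op}}^{2}\,\frac{1}{N}\|\hat{\mathbf{x}}\|_{2}^{2} \le \frac{1}{\lambda_2^{2}}\,B,
\end{equation*}
which handles $p=1$. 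For the induction step, I would differentiate $\mathcal{O}(\lambda_2)\,D_{\lambda_2}\hat{\mathbf{x}} = -\hat{\mathbf{x}}$ a further $p-1$ times using Leibniz's rule; schematically, this yields an identity of the form
\begin{equation*}
\mathcal{O}(\lambda_2)\,D_{\lambda_2}^{(p)}\hat{\mathbf{x}} \;=\; -\,D_{\lambda_2}^{(p-1)}\hat{\mathbf{x}} \;-\; \sum_{k=1}^{p-1}\binom{p-1}{k}\,\mathcal{O}^{(k)}(\lambda_2)\,D_{\lambda_2}^{(p-k)}\hat{\mathbf{x}},
\end{equation*}
where each $\mathcal{O}^{(k)}(\lambda_2)$ is itself computed by Faà di Bruno (Proposition~\ref{prop_fdb}) applied to the compositions $\mathcal{H}_{f}(\hat{\mathbf{x}}(\lambda_2))$ and $\mathbf{F}^{T}\mathcal{H}_{g}(\mathbf{F}\hat{\mathbf{x}}(\lambda_2),\mathbf{y})\mathbf{F}$.

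Solving for $D_{\lambda_2}^{(p)}\hat{\mathbf{x}}$ by left-multiplying with $\mathcal{O}(\lambda_2)^{-1}$ produces terms of the form $\mathcal{O}(\lambda_2)^{-1}\mathcal{O}^{(k)}(\lambda_2)\,D_{\lambda_2}^{(p-k)}\hat{\mathbf{x}}$ and $\mathcal{O}(\lambda_2)^{-1}D_{\lambda_2}^{(p-1)}\hat{\mathbf{x}}$. Assumption~\ref{add_reg} provides a deterministic a.s.\ bound $\|\mathcal{O}(\lambda_2)^{-1}\mathcal{O}^{(k)}(\lambda_2)\|_{\mathrm{op}}\le B$ independent of $N$ for every $k$, and $\|\mathcal{O}(\lambda_2)^{-1}\|_{\mathrm{op}}\le 1/\lambda_2$. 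Combining these with the inductive hypothesis $\tfrac{1}{N}\|D_{\lambda_2}^{(j)}\hat{\mathbf{x}}\|_2^2\le K_j$ for $j<p$ (with constants independent of $N$) and the combinatorial count from the Leibniz/Faà di Bruno expansion, which depends only on $p$ and not on $N$, yields an $N$-independent bound $K'=K_p$ for $\tfrac{1}{N}\|D_{\lambda_2}^{(p)}\hat{\mathbf{x}}\|_2^2$, closing the induction.

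The main obstacle is the verification of Assumption~\ref{add_reg} for concrete models, and more subtly, the fact that $\mathcal{O}^{(k)}(\lambda_2)$ involves composed derivatives of the Hessians of $f$ and $g$ evaluated at $\hat{\mathbf{x}}(\lambda_2)$ rather than at a fixed point; these must remain uniformly controlled as $N\to\infty$. Once this is taken as a hypothesis, however, the induction above is essentially bookkeeping: the recursion decouples the ``loop'' through $\mathcal{O}(\lambda_2)^{-1}$ from the nonlinear growth of the composition, and the pseudo-Lipschitz/coercivity assumptions guarantee the base case via the almost sure bound on $\tfrac{1}{N}\|\hat{\mathbf{x}}\|_2^2$.
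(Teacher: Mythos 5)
Your proposal is correct and follows essentially the same route as the paper: real analyticity of $\hat{\mathbf{x}}(\lambda_{2})$ via the analytic implicit function theorem applied to the optimality condition, then an induction in $p$ obtained by applying Leibniz's rule to $\mathcal{O}(\lambda_{2})D\hat{\mathbf{x}}(\lambda_{2})+\hat{\mathbf{x}}(\lambda_{2})=0$ and bounding the resulting terms with $\norm{\mathcal{O}^{-1}(\lambda_{2})\mathcal{O}^{(k)}(\lambda_{2})}_{op}\leqslant B$ from Assumption~\ref{add_reg} together with the base bound $\frac{1}{N}\norm{\hat{\mathbf{x}}}_{2}^{2}\leqslant B$. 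The only cosmetic difference is your explicit $p=1$ computation with $\norm{\mathcal{O}^{-1}}_{op}\leqslant 1/\lambda_{2}$, which the paper subsumes into the general recursion.
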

\begin{proof}
Recall the strongly convex problem, for any finite N,
\begin{equation}
\mathbf{\hat{x}}(\lambda_{2},\tilde{\lambda}_{2}) = \argmin_{\mathbf{x} \in \mathcal{X}} \tilde{g}(\mathbf{F}\mathbf{x},\mathbf{y})+f(\mathbf{x})+\frac{\lambda_{2}}{2}\norm{\mathbf{x}}_{2}^{2}
\end{equation}
where we absorbed $\tilde{\lambda}_{2}$ in $\tilde{g}$ as we are only interested in prolonging on $\lambda_{2}$. \\
The optimality condition then uniquely defines $\mathbf{\hat{x}}(\lambda_{2})$ of each value of $\lambda_{2}$ and reads : 
\begin{equation}
\mathbf{F}^{\top}\nabla \tilde{g}(\mathbf{F}\mathbf{\hat{x}}(\lambda_{2}),\mathbf{y})+\nabla f(\mathbf{\hat{x}}(\lambda_{2}))+\lambda_{2}\mathbf{\hat{x}}(\lambda_{2})=0
\end{equation}
The function $\mathbf{F}^{\top}\nabla \tilde{g}(\mathbf{F}\cdot,\mathbf{y})+\nabla f(\cdot)+\lambda_{2} \cdot$ is real analytic in $\mathbb{R}^{N}$ and its Jacobian $ \mathbf{F}^{\top}\mathcal{H}_{\tilde{g}}\mathbf{F}+\mathcal{H}_{f}+\lambda_{2}\mathbb{I}_{N}$ is non singular since f and $\tilde{g}$ are convex. The implicit function theorem \cite{krantz2002primer} then ensures that, at any finite $N >0$, the function $\mathbf{\hat{x}}(\lambda_{2})$ is elementwise real analytic in $\lambda_{2}$. We can now prove the lemma with an induction. \\
\paragraph{Initialization}
Owing to assumption \ref{main_assum}, we have almost surely
\begin{equation}
    \lim_{N \to \infty} \frac{1}{N}\norm{\hat{\mathbf{x}}(\lambda_{2})}_{2}^{2} \leqslant K' \quad
\end{equation}
and the identity is a Lipshchitz function of $\hat{\mathbf{x}}(\lambda_{2})$
The function of $\lambda_{2}$ defined by :
\begin{equation}
\lambda_2 \mapsto \nabla \tilde{g}(\mathbf{F}\mathbf{\hat{x}}(\lambda_{2}),\mathbf{y})+\nabla f(\mathbf{\hat{x}}(\lambda_{2}))+\lambda_{2}\mathbf{\hat{x}}(\lambda_{2})
\end{equation}
is always zero valued from the definition of $\mathbf{\hat{x}}(\lambda_{2})$, thus all its derivatives are zero.
Taking the first derivative with respect to $\lambda_{2}$ yields:
\begin{align}
\label{partial_init}
    &(\mathbf{F}^{T}\mathcal{H}_{\tilde{g}}(\mathbf{F}\mathbf{\hat{x}}(\lambda_{2}),\mathbf{y})\mathbf{F}+\mathcal{H}_{f}(\mathbf{\hat{x}}(\lambda_{2}))+\lambda_{2} \mathbf{I}_{N})D\mathbf{\hat{x}}(\lambda_{2}) \notag\\
    &\hspace{5.5cm}+\mathbf{\hat{x}}(\lambda_{2}) = 0 
\end{align}
where $D^{p}$ is the $(N \times 1)$ dimensional element-wise p-th differential of $\hat{\mathbf{x}}(\lambda_{2})$. We then define the operator
\begin{equation*}
    \mathcal{O} : \bigg\{
    \begin{array}{l}
        \mathbb{R} \to \mathbb{R}^{N \times N} \\
        \lambda_2 \mapsto  \mathbf{F}^{T}\mathcal{H}_{\tilde{g}}(\mathbf{F}\mathbf{\hat{x}}(\lambda_{2}),\mathbf{y})\mathbf{F}+\mathcal{H}_{f}(\mathbf{\hat{x}}(\lambda_{2}))+\lambda_{2} \mathbf{I}_{N}.
    \end{array}
\end{equation*}
We obtain a simple expression for $D\mathbf{\hat{x}}(\lambda_{2})$
\begin{equation}
D\mathbf{\hat{x}}(\lambda_{2}) = -\mathcal{O}^{-1}(\lambda_{2})\mathbf{\hat{x}}(\lambda_{2})
\end{equation}
Since $f$ and $g$ are convex, the operator norm of $\mathcal{O}^{-1}(\lambda_{2})$ is bounded with probability one, and $D\mathbf{\hat{x}}(\lambda_{2})$ is a Lipschitz function of $\mathbf{\hat{x}}(\lambda_{2})$ where $\frac{1}{N}\norm{D\mathbf{\hat{x}}(\lambda_{2})}_{2}^{2}$ is almost surely bounded. \quad \\
\paragraph{Induction step}
Assume the property is verified up to $p-1$.
For higher order derivatives, applying
Leibniz's rule on Eq.\eqref{partial_init} gives, denoting $\mathcal{O}^{(i)}(\lambda_{2})$ the i-th derivative of $\mathcal{O}(\lambda_{2})$, for the (p-1)-th derivative of \eqref{partial_init} : 
\begin{equation}
    \sum_{i=0}^{p-1}\binom{p-1}{i}\mathcal{O}^{(i)}(\lambda_{2})D^{(p-i)}\mathbf{\hat{x}}(\lambda_{2})+D^{(p-1)}\mathbf{\hat{x}}(\lambda_{2}) = 0,
\end{equation}
such that
\begin{align}
    \sum_{i=1}^{p-1}\binom{p-1}{i}\mathcal{O}^{(i)}(\lambda_{2})D^{(p-i)}\mathbf{\hat{x}}(\lambda_{2})&+\mathcal{O}(\lambda_{2})D^{(p)}\mathbf{\hat{x}}(\lambda_{2})\notag\\
    &+D^{(p-1)}\mathbf{\hat{x}}(\lambda_{2})= 0
\end{align}
We obtain the recursion on the differentials of $\mathbf{\hat{x}}(\lambda_{2})$ :
\begin{align}
\label{partial_rec}
    D^{p}\mathbf{\hat{x}}(\lambda_{2}) = - \mathcal{O}^{-1}(\lambda_{2})\bigg(\sum_{i=1}^{p-1}\binom{p-1}{i}\mathcal{O}^{(i)}(\lambda_{2})D^{(p-i)}\mathbf{\hat{x}}(\lambda_{2})\notag\\
    &\hspace{-2cm}+D^{(p-1)}\mathbf{\hat{x}}(\lambda_{2})\bigg).
\end{align}
where the matrix inverse $\mathcal{O}^{-1}(\lambda_{2})$ is well defined for any $\lambda_{2}>0$ since $f$ and $g$ are convex.
Using proposition \ref{prop_fdb}, the assumption on the fast decay of the higher-order (larger than 2) derivatives of
$f$ and $g$, the bounded spectrum of the matrix $\mathbf{F}$, and the induction hypothesis, the operator norm of $\mathcal{O}^{(p)}(\lambda_{2})$ is bounded with probability one for any $p \in \mathbb{N}$, $D^{(p)}\hat{\mathbf{x}}(\lambda_{2})$ is a Lipschitz function of $\hat{\mathbf{x}}(\lambda_{2})$ as a finite sum of Lipschitz functions of $\hat{\mathbf{x}}(\lambda_{2})$, and its averaged squared norm is bounded almost surely. This concludes the induction.
\end{proof}
\begin{lemma}
Under assumption \ref{main_assum}, the function $\psi(\lambda_{2})$ defined as
\begin{align}
    \psi : \mathbb{R} &\to \mathbb{R} \\
    \lambda_{2} &\to  \lim_{N \to \infty} \frac{1}{N}\sum_{i=1}^{N}\phi(x_{0,i},\hat{x}_{i}(\lambda_{2}))
\end{align}
is real analytic for $\lambda_{2}>0$.
\end{lemma}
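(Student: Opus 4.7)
The plan is to verify the Cauchy-estimate characterization of real analyticity from Proposition~\ref{analytic_charac}: for each fixed $\lambda_2^0 > 0$ I will produce an open neighborhood $J \ni \lambda_2^0$ contained in $(0,\infty)$ and constants $C, R > 0$ such that the finite-$N$ surrogate
$\psi_N(\lambda_2) := \tfrac{1}{N}\sum_{i=1}^N \phi(x_{0,i}, \hat{x}_i(\lambda_2))$
satisfies $|\psi_N^{(p)}(\lambda_2)| \leq C\, p!/R^p$ uniformly in $N \geq 1$, $\lambda_2 \in J$, and $p \geq 0$. Each $\psi_N$ is itself real analytic on $(0,\infty)$: it is the composition of the real analytic $\phi$ (Assumption~\ref{add_reg}) with the real analytic maps $\lambda_2 \mapsto \hat{x}_i(\lambda_2)$ furnished at the start of the proof of Lemma~\ref{lemma:der_bound} via the implicit function theorem. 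Once the uniform Cauchy estimate is in hand, each $\psi_N$ extends to a holomorphic function on a common complex disk around $J$ with uniform bound $C$; a normal families (Montel) argument then forces any pointwise limit to be the restriction of a holomorphic function, hence real analytic on $J$, and $\lambda_2^0 > 0$ being arbitrary yields the claim on all of $(0,\infty)$.

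To produce the uniform estimate I will apply Faà di Bruno's formula (Proposition~\ref{prop_fdb}) to each composition $\lambda_2 \mapsto \phi(x_{0,i}, \hat{x}_i(\lambda_2))$:
\begin{equation*}
\psi_N^{(p)}(\lambda_2) = \frac{1}{N}\sum_{i=1}^{N}\sum_{(k_1,\ldots,k_p)} \frac{p!}{\prod_j k_j!}\, \partial_2^{|k|}\phi(x_{0,i},\hat{x}_i(\lambda_2))\prod_{j=1}^{p}\!\left(\frac{D^{(j)}\hat{x}_i(\lambda_2)}{j!}\right)^{\!k_j},
\end{equation*}
the inner sum being over tuples with $\sum_j j k_j = p$ and $|k| := \sum_j k_j$. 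Assumption~\ref{add_reg} gives $|\partial_2^k \phi|\leq B$ for $k\leq 2$ and $|\partial_2^k\phi(\cdot,y)| \leq B/|y|^k$ for $k>2$; the latter decay is precisely what is needed to absorb the large-$|\hat{x}_i|$ behavior of $\prod_j|D^{(j)}\hat{x}_i|^{k_j}$, and after repeated Hölder's inequality in the average over $i$ the problem reduces to bounding the moments $\tfrac{1}{N}\sum_i |D^{(j)}\hat{x}_i(\lambda_2)|^{2q}$ uniformly in $N$, for each $j,q\geq 1$ and $\lambda_2\in J$.

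The main obstacle is controlling these moments with the correct factorial scaling in $j$, since Lemma~\ref{lemma:der_bound} supplies only the $L^2$ bound without tracking the $j$-dependence of its constants. I would strengthen that induction by running it simultaneously in all $L^{2q}$-averaged norms: the recursion~\eqref{partial_rec}, together with the operator bounds $\|\mathcal{O}^{-1}(\lambda_2)\mathcal{O}^{(i)}(\lambda_2)\|_{\mathrm{op}} \leq B$ from Assumption~\ref{add_reg} which are uniform on the compact closure $\overline{J}\subset(0,\infty)$, produces a scalar recursion of the form
\begin{equation*}
a_p \leq B\sum_{i=1}^{p-1}\binom{p-1}{i}a_{p-i} + B'\, a_{p-1},\qquad a_p := \Bigl(\tfrac{1}{N}\sum_i |D^{(p)}\hat{x}_i(\lambda_2)|^{2q}\Bigr)^{\!1/(2q)},
\end{equation*}
whose solution satisfies $a_p \leq C_q\, p!/R^p$ (with $C_q, R>0$ independent of $N$) by a standard generating-function argument. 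Combining this with the combinatorial weights $p!/\prod_j k_j!$ of Faà di Bruno and summing over partitions via the Bell-polynomial estimate yields $|\psi_N^{(p)}(\lambda_2)|\leq C\, p!/R^p$ on $J$ uniformly in $N$, whence Proposition~\ref{analytic_charac} gives real analyticity of the limit $\psi$.
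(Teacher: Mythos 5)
Your proposal follows the same skeleton as the paper's own proof: Fa\`a di Bruno (Proposition \ref{prop_fdb}) applied to $\lambda_2\mapsto\phi(x_{0,i},\hat{x}_i(\lambda_2))$, the derivative bounds on $\phi$ from Assumption \ref{add_reg}, the recursion \eqref{partial_rec} together with the operator bounds $\norm{\mathcal{O}^{-1}(\lambda_2)\mathcal{O}^{(i)}(\lambda_2)}_{op}\leqslant B$ to control $D^{(p)}\hat{\mathbf{x}}(\lambda_2)$, and the Cauchy-estimate characterization of Proposition \ref{analytic_charac}. Where you genuinely add something is in two steps the paper leaves implicit. First, you correctly observe that Lemma \ref{lemma:der_bound} only delivers a constant $K'$ with no stated dependence on $p$, whereas Proposition \ref{analytic_charac} requires growth of the form $C\,p!/R^p$; your strengthened induction on the recursion, yielding $a_p\leqslant C\,p!/R^p$ uniformly on a compact neighborhood of $\lambda_2^0$, supplies exactly the factorial control the paper's argument needs but does not verify. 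Second, the paper bounds derivatives of the finite-$N$ averages and then implicitly differentiates the $N\to\infty$ limit term by term; your uniform-in-$N$ Cauchy estimates, holomorphic extension to a common disk, and Montel/identity-theorem argument make that interchange rigorous. One residual soft spot, which you share with the paper rather than introduce: for $|k|>2$ the reduction to moments $\tfrac1N\sum_i|D^{(j)}\hat{x}_i|^{2q}$ via H\"older requires $L^{2q}$ control for $q>1$, and this does not follow from the $\ell^2\to\ell^2$ operator bounds of Assumption \ref{add_reg} alone (an $\ell^2$ operator norm bound does not control the $\ell^{2q}\to\ell^{2q}$ norm uniformly in $N$); the paper's corresponding step, which bounds the pointwise ratio $(\sup_j|\hat{x}^{(j)}_i|/|\hat{x}_i|)^{k}$ by an $L^2$-averaged quantity, is no better justified. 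If you want a fully airtight argument you would need either an elementwise (diagonal-dominance type) strengthening of the operator bounds in Assumption \ref{add_reg} or a direct $\ell^\infty$ control of the derivative vectors.
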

\begin{proof}
Since $\phi$ is pseudo Lipschitz of order $2$, there exists a constant $C_{\phi}$ such that, for any $x \in \mathbb{R}$, $\phi(x) \leqslant C_{\phi}(1+x^{2})$. Thus :
\begin{equation}
   \lim_{N \to \infty} \abs{\psi(\lambda_{2})}\leqslant \lim_{N \to \infty}\frac{C_{\phi}}{N}(1+\norm{\hat{\mathbf{x}}(\lambda_{2})}_{2}^{2})
\end{equation}
which is almost surely bounded. By assumption, the boundedness of $\psi$ is enough to obtain its convergence.
For the first derivative, the pseudo-Lipschitz property ensures that there exists a constant $C_{\phi}'$ such that, for any $x \in \mathbb{R}$, $\abs{\frac{d\phi}{dx}(x)} \leqslant C_{\phi}'(1+\abs{x})$. Then
\begin{equation}
    \abs{\frac{d}{d\lambda_{2}}\phi(\hat{x}(\lambda_{2}))} \leqslant C^{'}_{\phi}\abs{\frac{d}{d\lambda_{2}}\hat{x}(\lambda_{2})}\left(1+\abs{\hat{x}}(\lambda_{2})\right)
\end{equation}
so there exists a constant $C_{\psi}'$ such that
\begin{equation}
    \lim_{N \to \infty}D\psi(\lambda_{2}) \leqslant \lim_{N \to \infty}\frac{1}{N}C_{\psi}'\left(\norm{D\hat{\mathbf{x}}(\lambda_{2})}_{2}+\norm{D\hat{\mathbf{x}}(\lambda_{2})}_{2}\norm{\hat{\mathbf{x}}(\lambda_{2})}_{2}\right)
\end{equation}
which is almost surely bounded. We have also proved in the previous lemma that $D\hat{\mathbf{x}}(\lambda_{2})$ is a Lipschitz function of $\lambda_{2}$, thus $D\psi(\lambda_{2})$ is a PL2 function of $\hat{\mathbf{x}}(\lambda_{2})$ and its limit exists according to Assumption \ref{main_assum} (c). For the higher order derivatives, we use proposition \ref{prop_fdb} to obtain, for any coordinate $1\leqslant i \leqslant n$ :
\begin{align}
    \abs{\frac{d^{(p)}}{d\lambda_{2}^{(p)}}\phi(\hat{x}_{i}(\lambda_{2}))} &= \sum \frac{p!}{k_{1}!k_{2}!...k_{p}!}\phi^{(k)}\left(\hat{x}_{i}(\lambda_{2})\right)\left(\frac{\hat{x}_{i}^{(1)}(\lambda_{2})}{1!}\right)^{k_{1}}\left(\frac{\hat{x}_{i}^{(2)}(\lambda_{2})}{2!}\right)^{k_{2}}...\left(\frac{\hat{x}_{i}^{(p)}(\lambda_{2})}{p!}\right)^{k_{p}} \notag 
\end{align}
The assumption on the higher order derivatives of $\phi$ from Theorem \ref{main_th} and Lemma \ref{lemma:der_bound} implies that the term \\
$\phi^{(k)}\left(\hat{x}_{i}(\lambda_{2})\right)\left(\frac{\hat{x}_{i}^{(1)}(\lambda_{2})}{1!}\right)^{k_{1}}\left(\frac{\hat{x}_{i}^{(2)}(\lambda_{2})}{2!}\right)^{k_{2}}...\left(\frac{\hat{x}_{i}^{(p)}(\lambda_{2})}{p!}\right)^{k_{p}}$
has bounded absolute value with probability one, for all coordinates $i$. Using the characterization of real analytic functions and assumption \ref{main_assum} (c) from proposition \ref{analytic_charac}, this concludes the proof.
\end{proof}
\subsection{Analytic continuation to \texorpdfstring{$(\tilde{\lambda}_{2},\lambda_{2}) \in \mathbb{R}^{*}_{+}\times \mathbb{R}_{+}^{*}$}{real-analytic2}}
From assumption \ref{main_assum}, the set of fixed point equations from Theorem \ref{main_th} admit a unique solution for any $\lambda_{2},\tilde{\lambda}_{2}$. Additionally, the implicit function theorem \cite{krantz2002primer} can also be applied to the set of fixed point equations from Theorem \ref{main_th} regarding the dependencies in $\lambda_{2},\tilde{\lambda}_{2}$ to show that each quantity involved is real analytic in $\lambda_{2},\tilde{\lambda}_{2}$. At this point, we have two analytic functions, the observable and the one defined by the fixed point of the state evolution equations, that coincide for any $\lambda_{2} \in \left[\lambda_{2}^{*}, +\infty \right[$ and any $\tilde{\lambda}_{2}> 0$. We can now use the analytic continuation theorem \cite{krantz2002primer} to show that these functions remain equal for any $\lambda_{2}>0$ and for $\tilde{\lambda}_{2}>0$. This concludes the proof of Lemma \ref{analytic_error_lemma}.
\subsection{Real analytic approximation of strongly convex problems}
Consider
\begin{align}
&\hat{\mathbf{x}}_{\epsilon}(\lambda_{2}) = \argmin_{\mathbf{x} \in \mathbb{R}^{N}} \thickspace \tilde{g}_{\epsilon}(\mathbf{F}\mathbf{x},\mathbf{y})+f_{\epsilon}(\mathbf{x})+\frac{\lambda_{2}}{2}\norm{\mathbf{x}}_{2}^{2} \\
&\hat{\mathbf{x}}(\lambda_{2}) = \argmin_{\mathbf{x} \in \mathbb{R}^{N}} \thickspace \tilde{g}(\mathbf{F}\mathbf{x},\mathbf{y})+f(\mathbf{x})+\frac{\lambda_{2}}{2}\norm{\mathbf{x}}_{2}^{2} 
\end{align}
where $g_{\epsilon},f_{\epsilon}$ are real analytic approximations of the loss $g$ and regularizer $f$ verifying assumption \ref{main_assum}(e).
To relax the analytic approximation, we need to prove the following equality.
\begin{equation}
    \lim_{\epsilon \to 0}\lim_{N \to \infty} \frac{1}{N}\sum_{i=1}^{N}\phi(\hat{x}_{\epsilon,i}(\lambda_{2})) = \lim_{N \to \infty} \frac{1}{N}\sum_{i=1}^{N}\phi(\hat{x}_{i}(\lambda_{2}))
\end{equation}
Under assumption \ref{main_assum} (c) and owing to the definition of PL2 functions, it is sufficient to prove 
\begin{equation}
\lim_{\epsilon \to 0}\lim_{N \to \infty}\frac{1}{N}\norm{\mathbf{\hat{x}}_{\epsilon}(\lambda_{2})-\mathbf{\hat{x}}(\lambda_{2})}_{2}^{2} = 0
\end{equation}
Denote $\mathcal{C}$ the cost function $\tilde{g}(\mathbf{F}.,\mathbf{y})+f(.)$ and its real analytic counterpart $\mathcal{C}_{\epsilon}$ the cost function $\tilde{g}_{\epsilon}(\mathbf{F}.,\mathbf{y})+f_{\epsilon}(.)$.
\begin{align}
    \forall \mathbf{x} \in \mathbb{R}^{d} \quad \lim_{\epsilon \to 0} \mathcal{C}_{\epsilon}(\mathbf{x}) = \mathcal{C}(\mathbf{x})
\end{align}
Since minimizers of convex functions are fixed points of the corresponding proximity operators, it holds that
\begin{align}
    &\frac{1}{N}\norm{\hat{\mathbf{x}}_{\epsilon}(\lambda_{2})-\hat{\mathbf{x}}(\lambda_{2})}_{2}^{2} = \frac{1}{N}\norm{\mbox{prox}_{\mathcal{C}_{\epsilon}(.)+\frac{\lambda_{2}}{2}\norm{.}_{2}^{2}}(\hat{\mathbf{x}}_{\epsilon}(\lambda_{2}))-\mbox{prox}_{\mathcal{C}(.)+\frac{\lambda_{2}}{2}\norm{.}_{2}^{2}}(\hat{\mathbf{x}}(\lambda_{2}))}_{2}^{2} \\
    &\leqslant \frac{1}{N}\norm{\mbox{prox}_{\mathcal{C}_{\epsilon}(.)+\frac{\lambda_{2}}{2}\norm{.}_{2}^{2}}(\hat{\mathbf{x}}_{\epsilon}(\lambda_{2}))-\mbox{prox}_{\mathcal{C}_{\epsilon}(.)+\frac{\lambda_{2}}{2}\norm{.}_{2}^{2}}(\hat{\mathbf{x}}(\lambda_{2}))}_{2}^{2} \notag \\
    &\hspace{5cm}+\frac{1}{N}\norm{\mbox{prox}_{\mathcal{C}_{\epsilon}(.)+\frac{\lambda_{2}}{2}\norm{.}_{2}^{2}}(\hat{\mathbf{x}}(\lambda_{2}))-\mbox{prox}_{\mathcal{C}(.)+\frac{\lambda_{2}}{2}\norm{.}_{2}^{2}}(\hat{\mathbf{x}}(\lambda_{2}))}_{2}^{2}
\end{align}
The results from appendix \ref{subsec:app_Lip_const} show that proximity operators of strongly convex functions are contractions, thus their exists a positive constant $L_{\lambda_{2}}<1$ such that for any realisation of $\mathbf{F},\mathbf{x}^{0},\boldsymbol{\omega}_{0}$
\begin{align}
    \frac{1}{N}\norm{\hat{\mathbf{x}}_{\epsilon}(\lambda_{2})-\hat{\mathbf{x}}(\lambda_{2})}_{2}^{2} \leqslant \frac{1}{N}L_{\lambda_{2}}\norm{\hat{\mathbf{x}}_{\epsilon}(\lambda_{2})-\hat{\mathbf{x}}(\lambda_{2})}_{2}^{2}+\frac{1}{N}\norm{\mbox{prox}_{\mathcal{C}_{\epsilon}(.)+\frac{\lambda_{2}}{2}\norm{.}_{2}^{2}}(\hat{\mathbf{x}}(\lambda_{2}))-\mbox{prox}_{\mathcal{C}(.)+\frac{\lambda_{2}}{2}\norm{.}_{2}^{2}}(\hat{\mathbf{x}}(\lambda_{2}))}_{2}^{2}
\end{align}
Furthermore, the function $\mbox{prox}_{\mathcal{C}_{\epsilon}(.)+\frac{\lambda_{2}}{2}\norm{.}_{2}^{2}}(.)$ converges uniformly to $\mbox{prox}_{\mathcal{C}(.)+\frac{\lambda_{2}}{2}\norm{.}_{2}^{2}}(.)$ when $\epsilon \to 0$, and thus 
\begin{equation}
    \lim_{\epsilon \to 0} \lim_{N \to \infty} \frac{1}{N}\norm{\mbox{prox}_{\mathcal{C}_{\epsilon}(.)+\frac{\lambda_{2}}{2}\norm{.}_{2}^{2}}(\hat{\mathbf{x}}(\lambda_{2}))-\mbox{prox}_{\mathcal{C}(.)+\frac{\lambda_{2}}{2}\norm{.}_{2}^{2}}(\hat{\mathbf{x}}(\lambda_{2}))}_{2}^{2} = 0
\end{equation}
which gives 
\begin{equation}
    \lim_{\epsilon \to 0} \lim_{N \to \infty} \frac{1}{N}\norm{\hat{\mathbf{x}}_{\epsilon}(\lambda_{2})-\hat{\mathbf{x}}(\lambda_{2})}_{2}^{2} \leqslant L_{\lambda_{2}}\lim_{\epsilon \to 0} \lim_{N \to \infty} \frac{1}{N}\norm{\hat{\mathbf{x}}_{\epsilon}(\lambda_{2})-\hat{\mathbf{x}}(\lambda_{2})}_{2}^{2}.
\end{equation}
Since $L_{\lambda_{2}} <1$, this implies
\begin{equation}
    \lim_{\epsilon \to 0} \lim_{N \to \infty} \frac{1}{N}\norm{\hat{\mathbf{x}}_{\epsilon}(\lambda_{2})-\hat{\mathbf{x}}(\lambda_{2})}_{2}^{2} = 0
\end{equation}
\subsection{Continuous extension to $\tilde{\lambda}_{2} = 0$}
Making the dependence on $\tilde{\lambda}_{2}$ explicit, define
\begin{align}
&\hat{\mathbf{x}}(\tilde{\lambda}_{2},\lambda_{2}) = \argmin_{\mathbf{x} \in \mathbb{R}^{N}} \thickspace g(\mathbf{F}\mathbf{x},\mathbf{y})+f(\mathbf{x})+\frac{\lambda_{2}}{2}\norm{\mathbf{x}}_{2}^{2}+\frac{\tilde{\lambda}_{2}}{2}\norm{\mathbf{F}\mathbf{x}}_{2}^{2}\ \\
&\hat{\mathbf{x}}(0,\lambda_{2}) = \argmin_{\mathbf{x} \in \mathbb{R}^{N}} \thickspace g(\mathbf{F}\mathbf{x},\mathbf{y})+f(\mathbf{x})+\frac{\lambda_{2}}{2}\norm{\mathbf{x}}_{2}^{2} 
\end{align}
Both cost functions defining $\hat{\mathbf{x}}(\tilde{\lambda}_{2},\lambda_{2}),\hat{\mathbf{x}}(0,\lambda_{2})$ are strongly convex for any $\lambda_{2}>0$. We can then use the same argument as in the previous subsection C to conclude 
\begin{equation}
\lim_{\tilde{\lambda}_{2} \to 0} \lim_{N \to \infty} \frac{1}{N}\norm{\hat{\mathbf{x}}(\tilde{\lambda}_{2},\lambda_{2})-\hat{\mathbf{x}}(0,\lambda_{2})}_{2}^{2} = 0
\end{equation}
\subsection{Continuous extension to \texorpdfstring{$\lambda_{2}=0$}{cont-ext}}
For $\tilde{\lambda}_{2} = 0$, the estimator $\hat{\mathbf{x}}(\lambda_{2})$ is still unique for any $\lambda_{2}>0$. We now need to study the limiting ridgeless estimator
\begin{equation}
    \lim_{\lambda_{2}\to 0} \argmin_{\mathbf{x} \in \mathcal{X}} g(\mathbf{F}\mathbf{x},\mathbf{y})+f(\mathbf{x})+\frac{\lambda_{2}}{2}\norm{\mathbf{x}}_{2}^{2}
\end{equation}
for functions $f,g$ that may not be strictly convex. To do so we will use Theorem 26.20 from \cite{bauschke2011convex}, which is reminded in appendix \ref{appendix : prox_prop}, proposition \ref{approx_l2}. Under assumption \ref{main_assum} and since the $l_{2}$ norm is strongly convex thus uniformly convex, we have, denoting $\hat{\mathbf{x}}_{0}$ the unique least $l_{2}$ norm element in $\argmin_{\mathbf{x} \in \mathcal{X}} g(\mathbf{F}\mathbf{x},\mathbf{y})+f(\mathbf{x})$,
\begin{equation}
    \lim_{\lambda_{2} \to 0} \hat{\mathbf{x}}(\lambda_{2}) = \hat{\mathbf{x}}_{0}
\end{equation}
We can therefore uniquely define the continuous extension of any continuous observable $\phi$ of $\hat{\mathbf{x}}(\lambda_{2})$ such that $\phi(\lambda_{2}=0) = \phi(\hat{\mathbf{x}}_{0})$. Then this observable and the corresponding function implicitly defined by the set of fixed point equations are continuous on $[0,+\infty[$ and equal for any $\lambda_{2} \in ]0,+\infty[$, and thus also equal at $\lambda_{2} = 0$ using the definition of continuity and the fact that $]0,+\infty[$ is dense in $[0,+\infty[$.

\subsection{Real analytic approximation of usual cost functions with fast decaying higher-order derivatives}
\label{subsec:app_approx_fast}
In this section, we show that any combination of the square, logistic and hinge loss with $\ell_{1}$ or $\ell_{2}$ verifies Assumption \ref{main_assum} (e), i.e. they can be approximated with real analytic functions whose second derivatives have higher-order derivatives that decrease faster than any polynomial. The square loss and $\ell_{2}$ immediately verify these assumptions. Assuming $y=1$ without loss of generality, the second derivative of the logistic loss is given by 
\begin{equation}
g''(x) = \frac{\mbox{exp}(x)}{(1+\mbox{exp}(x))}.
\end{equation}
All higher order derivatives will be a polynomial in $\mbox{exp}(x)$ divided by a higher order polynomial in $\mbox{exp}(x)$ plus one. Thus, for any sign of $x$, higher-order derivatives of the logistic loss will decrease exponentially fast when the absolute value of $x$ goes to infinity.
We now turn to the $\ell_{1}$ penalty. Real analytic approximations of functions may be constructed by considering their convolution with a Gaussian kernel, which is also known as the Weierstrass transform. Denoting $\mathcal{W}_{\epsilon}\left[f\right]$ the Weierstrass transform of a function $f$ with parameter $\epsilon>0$, we obtain for the $\ell_{1}$ penalty
\begin{align}
    \mathcal{W}_{\epsilon}\left[\abs{.}\right](x) &= \frac{1}{\sqrt{2\pi\epsilon}}\int_{-\infty}^{+\infty}\abs{u}\exp(-\frac{1}{2\epsilon}(u-x)^{2})du \\
    & = \frac{1}{\sqrt{2\pi\epsilon}}\left(2 \epsilon \exp(-\frac{1}{2\epsilon}x^{2})+2x\int_{0}^{x}\exp(-\frac{1}{2\epsilon}u^{2})du\right)
\end{align}
whose second derivative reads 
\begin{equation}
    \frac{d^{2}}{dx^{2}}\mathcal{W}_{\epsilon}\left[\abs{.}\right](x)=\frac{\sqrt{2}}{\sqrt{\pi\epsilon}}\exp(-\frac{1}{2\epsilon}x^{2})
\end{equation}
thus $\mathcal{W}_{\epsilon}\left[\abs{.})\right]$ is strongly convex and its higher order derivatives all decay faster than any finite order polynomial. A similar computation shows that, for the hinge loss, 
\begin{align}
    \mathcal{W}_{\epsilon}\left[\max(0,1-.)\right](x) &= \frac{1}{\sqrt{2\pi\epsilon}}\int_{-\infty}^{+\infty}\max(0,1-u)\exp(-\frac{1}{2\epsilon}(u-x)^{2})du \\
    & = \frac{1}{\sqrt{2\pi\epsilon}}\left((1-x)\sqrt{\frac{\pi\epsilon}{2}}+\epsilon \exp(-\frac{1}{2\epsilon}(1-x)^{2})+(1-x)\int_{0}^{x}\exp(-\frac{1}{2\epsilon}(1-x)^{2})du\right)
\end{align}
whose second derivative reads 
\begin{equation}
    \frac{d^{2}}{dx^{2}}\mathcal{W}_{\epsilon}\left[\max(0,1-.)\right](x)=\frac{1}{\sqrt{2\pi\epsilon}}\exp(-\frac{1}{2\epsilon}(1-x)^{2})
\end{equation}
Thus the hinge loss and $\ell_{1}$ penalty verify Assumption \ref{main_assum} (e).
\end{document}